\def\equaladvising{%
  \ifnum\value{eqfn}=0%
    \footnote{Equal advising.}%
    \setcounter{eqfn}{\value{footnote}}%
  \else%
    \footnotemark[\value{eqfn}]%
  \fi%
}%
\title[AAMAS-2026 Formatting Instructions]{Offline Safe Policy Optimization From Heterogeneous Feedback}
\author{Ze Gong}
\affiliation{
  \institution{Shenzhen Institute of Advanced Technology (SIAT), CAS}
  \city{Shenzhen}
  \country{China}}
\email{ze.gong@siat.ac.cn}
\author{Pradeep Varakantham}
\affiliation{
  \institution{Singapore Management University}
  \city{Singapore}
  \country{Singapore}}
\email{pradeepv@smu.edu.sg}
\author{Akshat Kumar}
\affiliation{
  \institution{Singapore Management University}
  \city{Singpaore}
  \country{Singapore}}
\email{akshatkumar@smu.edu.sg}
\begin{abstract}
  Offline Preference-based Reinforcement Learning (PbRL) learns rewards and policies aligned with human preferences without the need for extensive reward engineering and direct interaction with human annotators. However, ensuring safety remains a critical challenge across many domains and tasks. Previous works on safe RL from human feedback (RLHF) first learn reward and cost models from offline data, then use constrained RL to optimize a safe policy. While such an approach works in the contextual bandits settings (LLMs), in long horizon continuous control tasks, errors in rewards and costs accumulate, leading to impairment in performance when used with constrained RL methods. To address these challenges, (a) instead of indirectly learning policies (from rewards and costs), we introduce a framework that learns a policy directly based on pairwise preferences regarding the agent's behavior in terms of rewards, as well as binary labels indicating the safety of trajectory segments; (b) we propose \textsc{PreSa} (Preference and Safety Alignment), a method that combines preference learning module with safety alignment in a constrained optimization problem. This optimization problem is solved within a Lagrangian paradigm that directly learns reward-maximizing safe policy \textit{without explicitly learning reward and cost models}, avoiding the need for constrained RL; (c) we evaluate our approach on continuous control tasks with both synthetic and real human feedback. Empirically, our method successfully learns safe policies with high rewards, outperforming state-of-the-art baselines, and offline safe RL approaches with ground-truth reward and cost.
\end{abstract}
\keywords{Offline Safe RL, Preference-based RL}
\newcommand{\BibTeX}{\rm B\kern-.05em{\sc i\kern-.025em b}\kern-.08em\TeX}
\begin{document}


\pagestyle{fancy}
\fancyhead{}


\maketitle 


\section{Introduction}

To align the intelligent agents with human values, preference-based reinforcement learning (PbRL)~\cite{wirth2017survey} (also known as Reinforcement Learning from Human Feedback (RLHF)) has emerged as a powerful learning paradigm by training the agent's policy from human pairwise preference over agent behaviors, without the need for explicit reward engineering.
Offline PbRL~\cite{shin2021offline} further improves feedback efficiency by avoiding costly online interactions with human annotators. It has recently achieved remarkable success when applied to policy learning in control tasks~\cite{christiano2017deep,lee2021pebble,park2022surf,huquery,hejnacontrastive} and finetuning large language models (LLMs)~\cite{rafailov2024direct,zhao2023slic,ethayarajh2024kto}, demonstrating the ability to learn rewards and policies that are consistent with human preferences.
However, ensuring safety remains a significant challenge for PbRL. For instance, a robotic agent must avoid collisions and operate safely around humans during a control task. In LLM fine-tuning, the agent must avoid generating harmful or socially inappropriate content.
These examples highlight that alignment with human preferences alone is insufficient. Agents must also be explicitly aligned with human safety considerations. Hence, in this paper, we aim to learn a policy that is not only consistent with human pairwise preferences but also {\em aligned with human safety considerations}.

Recently, safe RLHF~\cite{daisafe} has been proposed to fine-tune LLMs for generating helpful and harmless responses. It assumes access to pairwise human preferences for both rewards and costs, as well as safety labels for each response. By learning reward and cost models from human feedback, it then optimizes a policy via constrained RL. 
In this paper, a key distinction is that we focus on continuous control tasks. While LLM fine-tuning is typically framed as a contextual bandit problem, continuous control tasks are inherently sequential decision-making problems in RL. Unlike contextual bandits, where decisions are made in isolation, continuous control requires reasoning over multi-step long-horizon dependencies, balancing immediate and future rewards, and handling compounding errors. These challenges make direct adaptation of existing safe RLHF methods inadequate, necessitating more sophisticated techniques to ensure safety and optimality in sequential environments. 
Additionally, safe RLHF method typically follows the two-phase PbRL learning paradigm: in the first phase, reward and cost models are learned from human feedback; in the second phase, constrained RL is applied to optimize a policy based on the learned models. Such decoupling, however, introduces inherent sources of suboptimality. Inaccuracies in the reward and cost learning phase can propagate to the policy, while rollout sampling or bootstrapping in the constrained RL phase may introduce further optimization challenges.


To overcome these limitations, we introduce the framework of \textit{Offline Safe Policy Optimization from Heterogeneous Feedback (Offline Safe POHF)}, in which a policy is learned directly from offline datasets (without first learning explicit reward and cost models) that include two types of feedback: (a) \textit{pairwise preferences} over agent behaviors with respect to rewards, and (b) \textit{binary safety labels} indicating whether each trajectory segment is safe or not. Notably, unlike prior work~\cite{daisafe}, our approach does not require pairwise preferences over the agent behaviors on cost, as such annotations are often scarce and costly to collect in practice~\cite{ethayarajh2024kto}. 
To address the aforementioned challenges, we propose \textsc{PreSa} (Preference and Safety Alignment), a novel algorithm that learns policies directly from heterogeneous feedback without explicit reward or cost modeling. Our approach begins by developing a safety alignment module that learns to generate safe behaviors using only binary safety labels. This component draws inspiration from Human-Aware Losses (HALOs)~\cite{ethayarajh2024kto}, adapting their principles of desirable decision-making to continuous control settings.
Next, we transform the safety alignment objective into an optimization constraint by showing that it implicitly defines a feasible set of policies. This safety alignment module is then integrated with the preference alignment module, and the overall policy is optimized using the Lagrangian method directly on the offline dataset. Consequently, we derive a fully supervised learning objective that eliminates the need for explicit reward and cost model learning, as well as the conventional constrained RL phase.


The contributions of this paper are threefold. First, we introduce the \textit{Offline Safe POHF} framework for continuous control tasks, enabling policy learning from offline datasets that include both pairwise behavioral preferences with respect to rewards and binary safety annotations. Second, we propose \textsc{PreSa}, a unified optimization formulation that integrates preference and safety alignment through a constrained objective, solved efficiently using the Lagrangian method. This formulation removes the dependency on reward and cost model estimation, as well as the need for an additional constrained RL stage. Third, we conduct comprehensive evaluations on both synthetic and real human feedback datasets, demonstrating that \textsc{PreSa} achieves effective alignment and robust safety performance in continuous control tasks.


\section{Related Work}

\subsection{Preference-based Reinforcement Learning}
To avoid reward engineering which requires expert knowledge and may suffer from negative effects of reward misspecification~\cite{pan2022effects}, Preference-based Reinforcement Learning (PbRL) (also known as Reinforcement Learning from Human Feedback (RLHF))  provides a promising paradigm to learn a policy from human feedback~\cite{wirth2017survey}. There have been significant advances recently, both for control tasks~\cite{christiano2017deep,lee2021pebble,park2022surf,liu2022meta,shin2021offline,hejna2024inverse,kang2023beyond,hejnacontrastive} and LLM finetuning~\cite{ziegler2019fine,stiennon2020learning,ouyang2022training,bai2022training,zhao2023slic,rafailov2024direct,ethayarajh2024kto}. However, these works have primarily focussed on preference alignment and have not considered safety in policy learning. 
Safe RLHF~\cite{daisafe} was recently proposed for fine-tuning LLMs to generate helpful and avoid harmful responses. However, it is designed for contextual bandit settings, where decisions are made independently, and does not directly handle the long-horizon dependencies and compounding errors inherent in RL tasks. Moreover, Safe RLHF relies on explicitly learned reward and cost models for constrained RL, which can introduce inaccuracies that hinder policy optimization.

\subsection{Offline Safe Reinforcement Learning}
Offline safe RL provides a practical framework for learning safe policies using pre-collected datasets~\cite{liu2023datasets}. Previous work has addressed this problem through various approaches such as sequential modeling~\cite{liu2023constrained}, distribution correction estimation~\cite{leecoptidice}, Q-learning~\cite{xu2022constraints}, feasible region identification~\cite{zhengsafe} and trajectory classification~\cite{gong2024offline}. These offline datasets typically consist of agent rollouts with ground truth rewards and costs for each timestep, often designed by experts to ensure the data quality for policy learning. However, in many complex real-world domains and tasks, it is difficult to manually design reward and cost functions that accurately reflect human values. Therefore, with the advances of PbRL, we propose to learn policies from pre-collected human feedback, replacing ground truth rewards and costs with human pairwise preferences regarding agent behavior for rewards and binary labels indicating whether the behavior is safe or not.


\section{Problem Definition}
We formulate our problem within the framework of a Markov Decision Process (MDP), denoted as a tuple $\mathcal{M}=(\mathcal{S}, \mathcal{A}, \mathcal{P}, r, \rho_0, \gamma)$, where $\mathcal{S}$ denotes the state space, $\mathcal{A}$ the action space, and $\mathcal{P}(s^\prime|s,a)$ the transition dynamics. The function $r: \mathcal{S} \times \mathcal{A} \rightarrow \mathbb{R}$ represents the reward function. $\rho_0$ is the initial state distribution, and $\gamma$ is the discount factor. 

To incorporate safety considerations, Safe RL extends the MDP to a constrained MDP (CMDP), expressed as $\mathcal{M}\cup \mathcal{C}$, where $\mathcal{C}=\{(c_i, b_i)\}_{i=0}^m$. Each $c_i$ represents a cost function associated with a safety constraint, and $b_i$ denotes the corresponding cost threshold. 
The discounted cumulative reward under a policy $\pi$ is defined as $V_\pi^r(s)= \mathbb{E}_\pi \left[\sum_{t=0}^\infty \gamma^t r(s_t, a_t)|s_0=s \right]$, and the discounted cumulative cost for constraint $i$ is defined similarly as $V_\pi^{c_i}(s)= \mathbb{E}_\pi \left[\sum_{t=0}^\infty \gamma^t c_i(s_t, a_t)|s_0=s \right]$. The objective of a standard constrained RL problem can thus be expressed as:
\begin{equation}
    \max_\pi \mathbb{E}_{s\sim\rho_0} \left[V_\pi^r(s) \right] \quad \text{s.t.} \quad \mathbb{E}_{s\sim\rho_0} \left[V_\pi^{c_i}(s) \right] \leq b_i, \forall i
    \label{eq:srl}
\end{equation}

While this formulation provides a principled way to ensure safety through explicit constraints, its practical application often relies on access to accurate reward and cost functions as well as extensive online interaction for policy optimization. In many real-world domains, however, such assumptions are unrealistic. Rewards and costs are rarely observable, and online data collection can be unsafe or prohibitively expensive.

\begin{figure}
    \centering
    \includegraphics[width=\columnwidth]{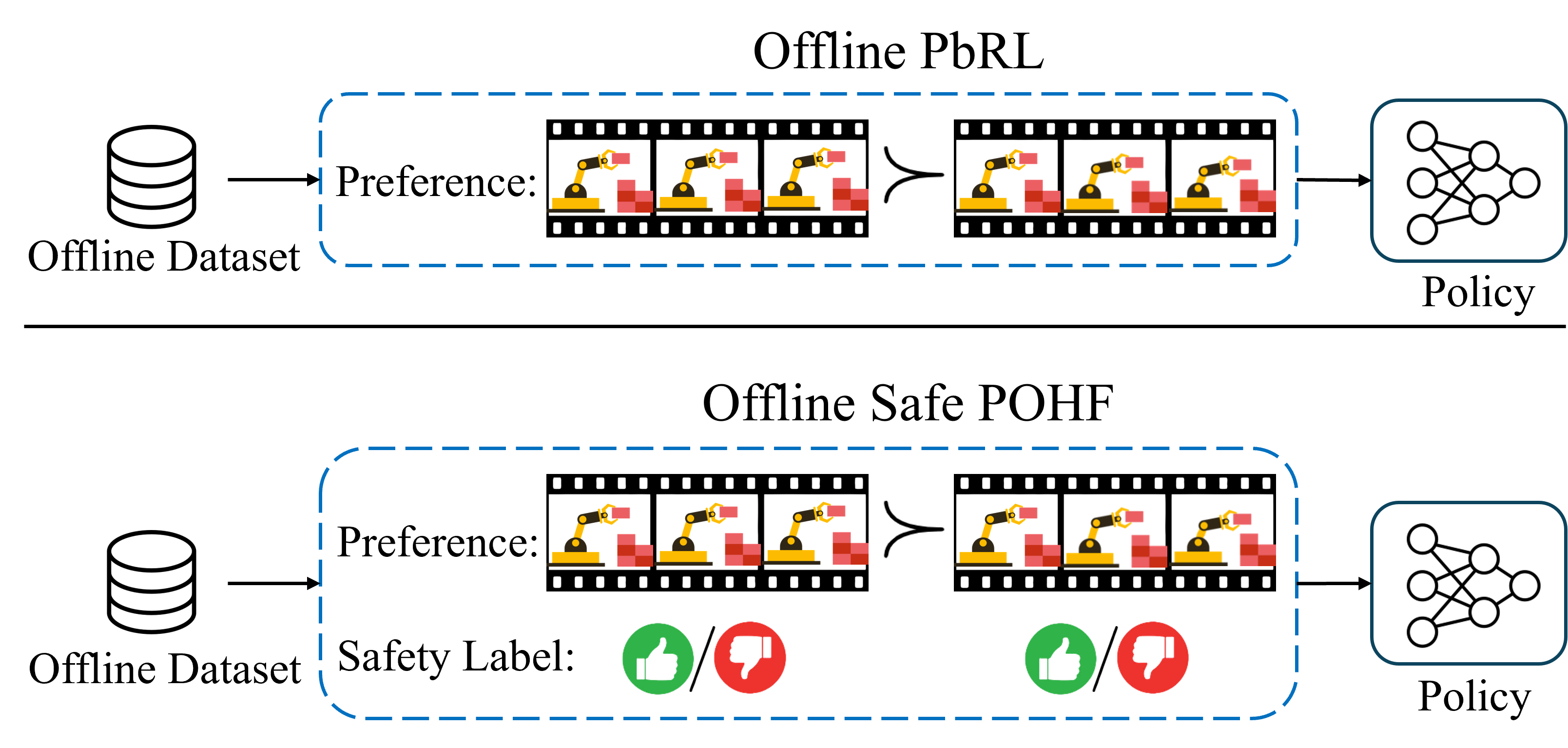}
    \caption{Offline Safe POHF versus Offline PbRL for control tasks. Besides pairwise preference between agent's trajectory segments, the dataset of Offline Safe POHF additionally includes binary safety labels of each segment, which is used to align the policy with implicit safety constraints.}
    \label{fig:problem_setting}
\end{figure}

\subsection{Offline Safe Policy Optimization from Heterogeneous Feedback (Offline Safe POHF)}

In this paper, we consider the offline setting, where the agent has access only to a static dataset collected from previous interactions, without access to the true reward or cost functions. 
This setup differs from standard offline PbRL by incorporating both preference and safety signals, we formalize it as follows.
\begin{definition}[Offline Safe Policy Optimization from Heterogeneous Feedback, (Offline Safe POHF)]
    Given a CMDP with unknown reward and cost functions, the Offline Safe POHF problem is defined by an offline dataset, 
    \begin{equation}
        D = \left\{ \left(\sigma^+, y^+, \sigma^-, y^- \right) \right\},
    \end{equation}
    where $\sigma=(s_1, a_1, s_2, a_2, \cdots, s_k, a_k)$ is a trajectory segment of length $k$. The dataset contains:
    \begin{enumerate}[label=(\alph*)]
        \item \textit{Pairwise preferences}: $\sigma^+ \succ \sigma^-$, indicating that $\sigma^+$ is preferred to $\sigma^-$ with respect to reward; and
        \item \textit{Binary safety labels}: $y \in \{-1, +1\}$, where $y = +1$ denotes a safe segment and $y = -1$ denotes an unsafe one. $y^+$ and $y^-$ are the safety labels associated with $\sigma^+$ and $\sigma^-$, respectively.
    \end{enumerate}
    The goal is to learn a policy that maximizes expected reward while satisfying safety constraints, relying solely on the feedback information contained in $D$.
\end{definition}

An overview of this setting is illustrated in Figure~\ref{fig:problem_setting}.
To solve the problem, we decompose the overall learning objective into two components: (a) a \textit{preference alignment module} for learning reward-consistent behavior, and (b) a \textit{safety alignment module} for constraining unsafe behavior.
We then integrate these components into a unified constrained optimization framework, which serves as the foundation of our proposed Offline Safe POHF algorithm.
The following sections present each module in detail and describe how they are combined into a single, unified learning objective.

\section{Preliminaries}

\subsection{Contrastive Preference Learning (CPL)}

To learn a policy that aligns with human pairwise preferences, we consider a preference dataset $D_\text{pref}=\left\{\left(\sigma^+, \sigma^-\right)\right\}$. The conventional PbRL learning paradigm offers a class of methods that typically involve two phases~\cite{christiano2017deep}. In the first phase, it assumes the human preference model is distributed according to cumulative rewards and a reward model is learned by optimizing the negative log-likelihood of human preferences. In the second phase, a policy is trained by using the learned reward model. However, there exist two main challenges of applying this conventional PbRL paradigm. First, the assumption that human preferences are reward-based has been criticized as incompetent at capturing true human preference~\cite{knox2024models}. Second, the RL training in the second phase would suffer from substantial computational difficulties~\cite{hejnacontrastive,rafailov2024direct}.

To address the issues, Contrastive Preference Learning (CPL) \cite{hejnacontrastive} has been proposed as a method to learn a policy directly without the need for reward learning and RL. In CPL, human preference is modeled using regret instead of reward~\cite{knox2024models}. By leveraging the equivalence of negated regret and the discounted sum of optimal advantages, the regret-based preference model is given by:
\begin{equation}
    P\left[\sigma^+ \succ \sigma^-\right] = 
    \frac{\exp \sum_{\sigma^+} \gamma^t A_r^*\left(s_t^+, a_t^+\right)}{\exp \sum_{\sigma^+} \gamma^t A_r^*\left(s_t^+, a_t^+\right) + \exp \sum_{\sigma^-} \gamma^t A_r^*\left(s_t^-, a_t^-\right)} \label{eq:cplscore}
\end{equation}
where $A_r^*\left(s_t, a_t\right)$ is the optimal advantage function of a single timestep $(s_t, a_t)$ with respect to a reward model $r$; shorthand ``$+$" and ``$-$" index the states and actions of segments $\sigma^+$ and $\sigma^-$. According to the principle of maximum entropy~\cite{ziebart2010modeling,hejnacontrastive}, the optimal advantage function $A_r^*(s,a)$ for a Kullback–Leibler divergence (KL)-regularized RL problem can be expressed in terms of the optimal policy $\pi^*$ as:
\begin{equation}
    A_r^*(s,a) = \alpha \log \frac{\pi^*(a|s)}{\pi_\text{ref}(a|s)}
    \label{eq:adv}
\end{equation}
where $\pi_\text{ref}$ is a reference policy used to regularize $\pi^*$, and $\alpha$ is a temperature parameter that determines the extent to which the reference policy $\pi_\text{ref}$ influences $\pi^*$. Consequently, the loss function for learning a parameterized policy $\pi_\theta$ (i.e., an approximation of $\pi^*$) is formulated by optimizing the negative log-likelihood of human preferences: 
$L_\text{CPL-KL}\left(\pi_\theta, D_\text{pref}\right) = \mathbb{E}_{(\sigma^+, \sigma^-) \sim  D_\text{pref}} \left[ - \log P_{\pi_\theta}\left[\sigma^+ \succ \sigma^-\right] \right]$.
With Equation~\ref{eq:adv}, the loss function $L_\text{CPL-KL}\left(\pi_\theta, D_\text{pref}\right)$ presents a closed-form formulation for directly learning a policy that aligns with human preferences.

\subsection{Prospect Theory and Human-Aware Losses}

Prospect theory is a behavioral economics framework that explains how individuals evaluate gains and losses in uncertain events, often in an asymmetric manner. There are three key principles when modeling human decision-making through the lens of prospect theory, (a) the use of a reference point to determine relative gains or losses, (b) concavity in relative gains (i.e., diminishing sensitivity as they move farther from the reference point); and (c) loss aversion, meaning individuals are more sensitive to losses compared to gains.

Building on prospect theory, Ethayarajh et al.~\cite{ethayarajh2024kto} introduced Human-Aware Losses (HALOs) to better understand how RLHF fine-tunes large language models (LLMs). HALOs provide a structured way to guide model training by distinguishing between gains and losses in human preference data.
A function $f$ is classified as a HALO if it follows this form:
\begin{equation}
    f(\pi_\phi, \pi_\text{ref}^\text{LLM}) = 
    \mathbb{E}_{x,y\sim D_\text{LLM}} \left[a_{x,y} v\left(r_{\pi_\phi}(x,y) - 
    \mathbb{E}_Q \left[r_{\pi_\phi} \left(x,y^\prime \right)\right]\right)\right] + C_{D_\text{LLM}} \label{eq:llmhalo}
\end{equation}
where {$\pi_\phi$} is the model being fine-tuned, which generates a response $y$ based on an input $x$.
$\pi_{ref}^{LLM}$ is a reference model, used as a baseline for comparison. $a_{x,y}$ is $+1$ for desirable responses and $-1$ for undesirable ones, capturing whether a response is perceived as a gain or a loss. The implied reward $r_{\pi_\phi}(x,y)$ is computed using the likelihood ratio of the response under the fine-tuned model versus the reference model, scaled by a normalizing factor.
$v(\cdot)$ is a function that ensures gains are weighted differently from losses, aligning with human preference biases. $Q(Y'|x)$ represents a distribution of alternative responses to define a reference point for comparison. $D_{LLM}$ is the dataset of human feedback, and $C_{D_{LLM}}$ is a dataset-specific constant. 

As the LLM is fine-tuned by minimizing this loss function, it learns to assign higher rewards to desirable responses and lower rewards to undesirable ones. This means that over time, the model becomes more likely to generate responses that align with human preferences. Empirically, HALO-based approaches match or outperform traditional RLHF methods across different LLM scales~\cite{ethayarajh2024kto}.
In this work, we focus on multi-step continuous control tasks, where directly applying HALO is non-trivial due to the challenges of temporal credit assignment. In the next section on safety alignment, we derive a tailored loss function suitable for continuous control settings.


\section{Methodology}

In the Offline Safe POHF setting, the objective is to learn a policy given pairwise preferences and safety annotations $y\in \{-1, +1\}$ for each trajectory segment. To achieve this objective, we make the following novel contributions:
(a) Provide an algorithm for ensuring safety alignment for continuous control tasks in Section~\ref{sec:safety_alignment}.
(b) Combine the safety alignment with preference optimization in an interpretable and principled manner in Section~\ref{sec:integration}.
\subsection{Safety Alignment}
\label{sec:safety_alignment}

Although HALOs are introduced for the LLM setting (i.e., the contextual bandit framework), they offer a novel perspective for understanding Safe RLHF. For continuous control tasks which involve multi-step sequential decision-making, we derive a loss function over trajectory segments specifically for safety alignment. 

\subsubsection{Trajectory Segment Utility.}

We begin by defining the utility function of a trajectory segment $\sigma$ as:
\begin{equation}
    u(\sigma)\triangleq \psi_\pi(\sigma) - z_\text{ref}
    \label{eq:utility}
\end{equation}
where $\psi_\pi(\sigma)$ produces a scalar score of $\sigma$ based on policy $\pi$, and $z_\text{ref}$ serves as a reference point that determines the relative gain or loss when evaluating the outcome of $\sigma$ against all possible trajectory segments; score function $\psi_\pi(\sigma)$ is analogous to $r$ in Equation~\ref{eq:llmhalo}.
Inspired by the definition of HALOs, we can express $\psi_{\pi_\theta}(\sigma)$ with a parameterized policy $\pi_\theta$ \footnote{To maintain consistency, we use the same $\theta$ as in the preference alignment module.} as follows:
\begin{equation}
    \psi_{\pi_\theta}(\sigma) = \sum_{t=0}^T \gamma^t \beta \log \frac{\pi_\theta (a_t | s_t)}{\pi_\text{ref}(a_t | s_t)}.
    \label{eq:score}
\end{equation}
It is the cumulative logarithm of $\pi_\theta$ along the trajectory segment, regularized by the reference policy $\pi_\text{ref}$, with the hyperparameter $\beta$ governing the degree of influence $\pi_\text{ref}$ has on $\pi_\theta$. Accordingly, we define $z_\text{ref}$ as the expected score of all segments $\sigma$ that humans have encountered under policy $\pi_\theta$ using offline data, which serves as a biased estimate of the ground truth reference point, i.e., $\hat{z}_\text{ref} = \mathbb{E}_\sigma \left[ \sum_{t=0}^T \gamma^t \beta \log \frac{\pi_\theta(a_t | s_t)}{\pi_\text{ref}(a_t | s_t)} \right]$. 
To ensure stable training, we do not backpropagate through $\hat{z}_\text{ref}$, it exists solely to regulate the loss saturation.

\subsubsection{Loss Function for Safety Alignment.}

To develop a feasible objective function, we use \texttt{sigmoid} function as $v$ in HALO, as it aligns with the principle of prospect theory by being concave in gains and convex in losses. Additionally, we introduce two weight values, $\lambda_s$ and $\lambda_u$, for safe and unsafe segments respectively. These weights indicate the importance of $\sigma$ during policy training and reflect the concept of loss aversion in prospect theory~\cite{tversky1992advances}. Therefore, the loss function for safety alignment is written as:
\begin{multline}
    L_\text{safety}(\pi_\theta, D) = \lambda_s \mathbb{E}_{\sigma\sim D_\text{safe}} \left[ 1 - \texttt{sigmoid}\left( u(\sigma) \right) \right] \\
    + \lambda_u \mathbb{E}_{\sigma\sim D_\text{unsafe}} \left[ 1 - \texttt{sigmoid}\left( -u(\sigma) \right) \right]
    \label{eq:safe_obj}
\end{multline}
The offline dataset $D$ is divided into two datasets. $D_\text{safe}$ contains all the safe segments while all the unsafe segments are in $D_\text{unsafe}$. The weights $\lambda_s$ and $\lambda_u$ are determined by the ratio of the number of safe segments $n_s$ to the number of unsafe segments $n_u$ in the offline dataset, specifically, $\frac{\lambda_s n_s}{\lambda_u n_u} = \eta$.
The hyperparameter $\eta$ regulates the relative importance of safe and unsafe segments during optimization. By minimizing Equation~\ref{eq:safe_obj}, the model assigns higher scores to safe segments and lower scores to unsafe ones. Accordingly, based on the definition of $\psi_{\pi_\theta}(\sigma)$ in Equation~\ref{eq:score}, the learned policy $\pi_\theta$ is encouraged to generate safe trajectories with high probability while avoiding unsafe behaviors.
To justify this formulation, we present the following result (Lemma~\ref{lemma1}), which establishes the connection between the trajectory utility function and its probabilistic representation. The proof is provided in Appendix A.1.
\begin{lemma}
\label{lemma1}
For a trajectory segment $\sigma$ and a given policy $\pi$, the utility function $u(\sigma;\pi)$ is directly proportional to the difference between the log probability of $\sigma$ under $\pi$ and the reference point $z_{\text{ref}}$, which is given by:
$u(\sigma;\pi)=\log p(\sigma; \pi) - z_{\text{ref}}(\pi)+ \text{constants}$.
\end{lemma}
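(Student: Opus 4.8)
The plan is to unfold the definition of the segment score $\psi_{\pi_\theta}(\sigma)$ from Equation~\ref{eq:score} and match it, term by term, against the log-probability of $\sigma$ under the Markov chain induced by $\pi_\theta$. First I would write the generative probability of a length-$k$ segment $\sigma=(s_1,a_1,\dots,s_k,a_k)$ as $p(\sigma;\pi_\theta)=\rho_0(s_1)\prod_{t=1}^{k}\pi_\theta(a_t\mid s_t)\prod_{t=1}^{k-1}\mathcal{P}(s_{t+1}\mid s_t,a_t)$, so that $\log p(\sigma;\pi_\theta)=\sum_{t=1}^{k}\log\pi_\theta(a_t\mid s_t)+\bigl(\log\rho_0(s_1)+\sum_{t=1}^{k-1}\log\mathcal{P}(s_{t+1}\mid s_t,a_t)\bigr)$. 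The parenthesized part depends only on $\sigma$ and the fixed dynamics, not on the optimization parameters $\theta$, and I will carry it along as a $\theta$-independent constant.

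Next I would substitute $\sum_t\log\pi_\theta(a_t\mid s_t)$ into Equation~\ref{eq:score}. Taking the segment to be treated without discounting over its finite horizon (i.e.\ $\gamma=1$ within a segment, as is standard for fixed-length preference segments; otherwise one simply reinterprets ``$\log p(\sigma;\pi)$'' as the discount-weighted sum of per-step log-probabilities so the matching stays exact), we obtain $\psi_{\pi_\theta}(\sigma)=\beta\sum_t\log\pi_\theta(a_t\mid s_t)-\beta\sum_t\log\pi_\text{ref}(a_t\mid s_t)$. The second sum involves only the fixed reference policy and $\sigma$, so it is also absorbed into the $\theta$-independent constant. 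Combining with the previous display yields $\psi_{\pi_\theta}(\sigma)=\beta\log p(\sigma;\pi_\theta)+\mathrm{const}$, i.e.\ the score equals the segment log-likelihood up to the scale factor $\beta$ and terms not involving $\theta$.

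Finally I would handle the reference point. Since $\hat z_\text{ref}=\mathbb{E}_\sigma\bigl[\psi_{\pi_\theta}(\sigma)\bigr]$, applying the identity just derived inside the expectation gives $\hat z_\text{ref}=\beta\,\mathbb{E}_\sigma[\log p(\sigma;\pi_\theta)]+\mathrm{const}$; defining $z_\text{ref}(\pi):=\mathbb{E}_\sigma[\log p(\sigma;\pi)]$, this reads $\hat z_\text{ref}=\beta\,z_\text{ref}(\pi_\theta)+\mathrm{const}$. Subtracting, $u(\sigma;\pi_\theta)=\psi_{\pi_\theta}(\sigma)-\hat z_\text{ref}=\beta\bigl(\log p(\sigma;\pi_\theta)-z_\text{ref}(\pi_\theta)\bigr)+\mathrm{const}$, which is exactly the claimed relation, with proportionality constant $\beta$ (the displayed form in the lemma writes it up to this scaling).

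\textbf{Main obstacle.} The only delicate part is the bookkeeping: one must be precise about what ``constant'' means here --- it denotes quantities independent of the policy parameters $\theta$, even though they may depend on the particular segment $\sigma$, on the dynamics $\mathcal{P}$, and on the fixed $\pi_\text{ref}$ --- and one must fix a convention for the discount factor $\gamma^t$ appearing in Equation~\ref{eq:score} (either restrict attention to the undiscounted finite-horizon segment, or let ``$\log p(\sigma;\pi)$'' denote the discount-weighted log-probability so the identification is literal). Once these conventions are settled, the proof is a direct substitution with no analytic difficulty.
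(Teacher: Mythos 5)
Your proposal is correct and follows essentially the same route as the paper's proof: expand $\psi_{\pi_\theta}(\sigma)$, identify $\sum_t\log\pi_\theta(a_t\mid s_t)$ with $\log p(\sigma;\pi_\theta)$ up to $\theta$-independent transition/initial-state terms, absorb the fixed $\pi_\text{ref}$ contribution into the constants, and subtract the reference point. The only cosmetic differences are that the paper drops $\beta$ ``for simplicity'' and treats $\gamma\approx 1$ as an approximation where you keep $\beta$ as the explicit proportionality constant and fix the discount convention up front, which if anything makes the bookkeeping cleaner.
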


Furthermore, a critical consideration in offline learning is that strong performance on the static dataset does not guarantee safety under the learned policy's state distribution. To address this, we provide a theoretical analysis that formally bounds the generalization error of our safety alignment module. Specifically, Lemma~\ref{lemma2} establishes that the empirical safety performance on the offline dataset closely approximates the true expected safety performance, with high probability. Crucially, our bound exhibits a favorable $\mathcal{O}(1/\sqrt{N})$ dependence on the dataset size $N$, meaning that larger offline datasets yield tighter generalization guarantees and more reliable safety assurance. This result provides a formal foundation for scaling safety assurance with data collection, ensuring that a policy which is empirically safe according to our alignment module will, with high confidence, remain safe when deployed. The proof is provided in Appendix A.2.
\begin{lemma}[Feasibility generalization bound]
\label{lemma2}
Let $\mathcal{D}=\{(\sigma_i,y_i)\}_{i=1}^N$ be drawn i.i.d.\ from an unknown distribution $\mathcal P$ over trajectory segments and binary safety labels. Define the function class
\begin{equation}
    \mathcal G = \left\{ g_\theta:(\sigma,y)\mapsto \texttt{sigmoid} \left(y\left(\psi_{\pi_\theta}(\sigma)-z_{\text{ref}} \right)\right)\;:\;\theta\in\Theta\right\},
\end{equation}
and the empirical and population safety scores
\begin{equation*}
    \hat F_{\mathcal D}(\pi_\theta)=\frac{1}{N}\sum_{i=1}^N g_\theta(\sigma_i,y_i),\qquad
    F(\pi_\theta)=\mathbb{E}_{(\sigma,y)\sim\mathcal P} \left[g_\theta(\sigma,y) \right].
\end{equation*}
Assume each $g_\theta$ takes values in $[0,1]$ (true for \texttt{sigmoid}). Then for any confidence level $\tau\in(0,1)$, with probability at least $1-\tau$ the following holds simultaneously for all $\pi_\theta\in\Pi_\Theta$:
\begin{equation}\label{eq:two-sided-bound}
    \big|F(\pi_\theta)-\hat F_{\mathcal D}(\pi_\theta)\big|
    \le 2\,\mathfrak R_N(\mathcal G) + \sqrt{\frac{\ln(2/\tau)}{2N}},
\end{equation}
where $\mathfrak R_N(\mathcal G)$ denotes the Rademacher complexity of $\mathcal G$.
\end{lemma}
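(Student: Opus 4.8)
The statement is a standard uniform-convergence (Rademacher) bound applied to the function class $\mathcal G$, so the plan is to invoke the classical symmetrization + McDiarmid argument rather than reinvent it. First I would observe that since each $g_\theta$ takes values in $[0,1]$, the random variable $\Phi(\mathcal D) \triangleq \sup_{\pi_\theta\in\Pi_\Theta}\big(F(\pi_\theta)-\hat F_{\mathcal D}(\pi_\theta)\big)$ is a function of the i.i.d.\ sample $(\sigma_i,y_i)_{i=1}^N$ that changes by at most $1/N$ when any single coordinate is replaced (each summand $g_\theta(\sigma_i,y_i)/N$ is bounded in $[0,1/N]$). Hence McDiarmid's (bounded-differences) inequality gives, for any $t>0$, $\Pr\big[\Phi(\mathcal D)\ge \mathbb E[\Phi(\mathcal D)]+t\big]\le \exp(-2Nt^2)$; setting the right side to $\tau/2$ yields the deviation term $\sqrt{\ln(2/\tau)/(2N)}$.

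Next I would bound the expectation $\mathbb E[\Phi(\mathcal D)]$ by symmetrization: introducing a ghost sample $\mathcal D'$ and Rademacher signs $\varepsilon_i$, the standard chain of inequalities gives $\mathbb E[\Phi(\mathcal D)]\le 2\,\mathbb E_{\mathcal D,\varepsilon}\big[\sup_{\theta}\frac1N\sum_{i=1}^N \varepsilon_i\, g_\theta(\sigma_i,y_i)\big] = 2\,\mathfrak R_N(\mathcal G)$, where the last equality is just the definition of the Rademacher complexity of $\mathcal G$. Combining this with the concentration step gives the one-sided bound $F(\pi_\theta)-\hat F_{\mathcal D}(\pi_\theta)\le 2\,\mathfrak R_N(\mathcal G)+\sqrt{\ln(2/\tau)/(2N)}$ uniformly over $\Pi_\Theta$, with probability at least $1-\tau/2$.

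To obtain the two-sided bound in Equation~\eqref{eq:two-sided-bound}, I would repeat the same argument for $\sup_{\theta}\big(\hat F_{\mathcal D}(\pi_\theta)-F(\pi_\theta)\big)$ (equivalently, apply the first bound to the class $-\mathcal G$, whose Rademacher complexity equals $\mathfrak R_N(\mathcal G)$ by symmetry of the Rademacher variables), again with failure probability $\tau/2$. A union bound over the two one-sided events then yields the absolute-value bound simultaneously for all $\pi_\theta\in\Pi_\Theta$ with probability at least $1-\tau$. Finally I would note that the definition $z_{\text{ref}}=\hat z_{\text{ref}}$ is treated as a fixed quantity (no backpropagation through it), so it plays no role in the complexity of $\mathcal G$ beyond shifting the argument of the sigmoid, and the $[0,1]$ boundedness of $g_\theta$ is unaffected.

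The argument is essentially textbook, so there is no deep obstacle; the only point requiring a little care is the measurability/uniformity bookkeeping — ensuring the supremum over $\theta\in\Theta$ is well-defined as a random variable and that the bounded-differences constant is exactly $1/N$ — together with being explicit that $\mathfrak R_N(\mathcal G)$ here is the expected (not empirical) Rademacher complexity so that no extra concentration step is needed for it. If one instead wanted the bound in terms of the \emph{empirical} Rademacher complexity, an additional $\mathcal O(1/\sqrt N)$ McDiarmid term would appear; I would flag this but keep the cleaner population version as stated.
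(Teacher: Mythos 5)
Your proposal is correct and follows essentially the same route as the paper's proof: a symmetrization/ghost-sample argument bounding $\mathbb{E}[\Phi(\mathcal D)]$ by $2\,\mathfrak R_N(\mathcal G)$, McDiarmid's inequality with bounded-differences constant $1/N$ calibrated to failure probability $\tau/2$, and a repetition of the one-sided argument on $-\mathcal G$ plus a union bound for the two-sided statement. Your additional remarks on treating $z_{\text{ref}}$ as fixed and on population versus empirical Rademacher complexity are sensible clarifications but do not change the argument.
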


\subsection{Integrated Preference and Safety Alignment (\textsc{PreSa})}
\label{sec:integration}

In this section, we propose a method to integrate pairwise preferences and safety alignment in an interpretable and principled manner into a single objective function that satisfies both criteria. We refer to our approach as \textit{\textsc{PreSa}} (Preference and Safety Alignment).

\subsubsection{Defining Feasible Policy Set}

We now convert safety alignment as a constraint on policy set. The two components in Equation~\ref{eq:safe_obj} address safe and unsafe segments separately, corresponding to the safety labels for each segment. We can rewrite the equation by combining these two components using the binary safety labels:
\begin{equation}
    L_\text{safety}(\pi_\theta, D) = 
    \mathbb{E}_{\sigma\sim D} \left[ w(y_\sigma) (1 - \texttt{sigmoid}\left( y_\sigma (\psi_{\pi_\theta}(\sigma) - z_\text{ref})) \right) \right]
    \label{eq:safe_obj_v1}
\end{equation}
where $y_\sigma$ denotes the safety label associated with segment $\sigma$, and $w(y_\sigma)$ represents the corresponding weight assigned to safe and unsafe segments. Specifically, $w(y_\sigma) = \lambda_s \, \mathbb{I}[y_\sigma = +1] + \lambda_u \, \mathbb{I}[y_\sigma = -1]$, where $\mathbb{I}[\cdot]$ is the indicator function.
We observe that $y_\sigma (\psi_{\pi_\theta}(\sigma) - z_\text{ref})$ serves as a scalar score of segment $\sigma$ and the output of \texttt{sigmoid} function can be interpreted as the probability of predicting the label $y_\sigma$ for the corresponding segment $\sigma$ under policy $\pi_\theta$,
\begin{equation}
    p(Y=y_\sigma | \sigma; \pi_\theta) \triangleq \texttt{sigmoid}\left( y_\sigma (\psi_{\pi_\theta}(\sigma) - z_\text{ref}) \right)
\end{equation}
Therefore, in the context of a typical binary classification problem, minimizing the loss function in Equation~\ref{eq:safe_obj_v1} is equivalent to maximizing the probability of correctly classifying each segment with respect to safety. This classification-like objective determines which segments are safe, and which are unsafe. Consequently, we find that the safety alignment objective can be transformed to define a feasible policy set as follows,
\begin{equation}
    \Pi = \left\{ \pi | p(Y=y_\sigma | \sigma; \pi) \geq \delta, \forall \sigma \right\}
\end{equation}
where $\delta$ is a predefined parameter that controls the stringency with which we accept a segment as being correctly classified according to the safety labels provided by humans. When $\delta$ is close to $1$, we expect each segment to be classified correctly with a very high probability. In this case, if the ground truth label is $+1$, the segment's score should be high, and the learned policy should assign high probabilities to this segment according to Equation~\ref{eq:score}. If the ground truth label is $-1$, the segment's score should be low, and the learned policy should assign low probabilities to it. When $\delta$ is more relaxed, we allow for greater tolerance, which may lead to a policy that occasionally violates the implicit safety constraints. 

\subsubsection{Unified Objective of \textsc{PreSa}}

As the preference alignment module ensures consistency with human preferences and the safety alignment module implicitly defines a feasibility policy set, we can now integrate them into a single constrained optimization problem:
\begin{align}
\begin{split}
    \min_{\pi_\theta}\, &\mathbb{E}_{(\sigma^+, \sigma^-) \sim D} \left[ - \log P_{\pi_\theta}\left[ \sigma^+ \succ \sigma^- \right] \right] \\
    \text{s.t., } &\mathbb{E}_{(\sigma, y_\sigma) \sim D} \left[ p \left(Y=y_\sigma | \sigma; \pi_\theta \right) \right] \geq \delta
    \label{eq:rlhf-lag}
\end{split}
\end{align}
where the objective term for reward preference is defined analogously to the CPL objective in Equation~\ref{eq:cplscore} without explicitly learning the rewards. 
Compared to the objective function of safe RL in Equation~\ref{eq:srl}, the term of minimizing negative log-likelihood of preferences in Equation~\ref{eq:rlhf-lag} corresponds to the maximization of cumulative reward. The feasible policy set determined by the safety alignment module corresponds to the one defined by the cumulative cost constraints.
Given the above constrained objective, we expect to learn a policy that maximizes reward while adhering to the safety constraints implicitly encoded in the safety labels. Notably, \textsc{PreSa} does not require learning of reward and cost functions, and avoids conventional constrained RL. 

To solve this constrained optimization problem, we employ the Lagrangian method to convert the constrained primal problem into an unconstrained dual form:
\begin{align}
    \min_{\pi_\theta} \max_{\nu \geq 0}\, &L(\pi_\theta, \nu, D) = 
    \mathbb{E}_{(\sigma^+, \sigma^-) \sim D} \big[ - \log P_{\pi_\theta}\left[ \sigma^+ \succ \sigma^- \right] \big] \nonumber \\
    &+ \nu \cdot \left( \delta - \mathbb{E}_{(\sigma, y_\sigma) \sim D} \left[ p \left(Y=y_\sigma | \sigma; \pi_\theta \right) \right] \right)
\end{align}
where $\nu \geq 0$ is the Lagrange multiplier. 
By introducing the weights $w(y_\sigma)$ back, the above objective function is rewritten as,
\begin{multline}
    L(\pi_\theta, \nu, D) = 
    \mathbb{E}_{(\sigma^+, \sigma^-) \sim D} \left[ - \log P_{\pi_\theta}\left[ \sigma^+ \succ \sigma^- \right] \right] + \\
    \nu \cdot \mathbb{E}_{(\sigma, y_\sigma) \sim D} \left[ w(y_\sigma) \cdot \left( \delta -  p \left(Y=y_\sigma | \sigma; \pi_\theta \right) \right) \right]
    \label{eq:lag_obj}
\end{multline}
Interestingly, the optimization of preference learning may sometimes conflict with the objective of optimizing safety alignment, though they can also complement each other in learning a better policy. That is because preference and safety do not influence each other in a monotonic manner. For instance, some unsafe segments might be preferred while some unpreferred segments could be safe as well. Thus, in Equation~\ref{eq:lag_obj}, the Lagrange multiplier $\nu$ dynamically manages the mutual influence. To address the optimization problem, we iteratively update the policy parameter $\theta$ and the Lagrange multiplier $\nu$ using gradient descent, which helps avoid over-emphasizing one objective at the expense of the other due to a fixed optimization ratio.



\section{Experiments}

In this section, we present experiments with synthetic and real human feedback to evaluate the effectiveness of \textsc{PreSa} in achieving both high reward performance and safety adherence. We aim to address the following questions:
\begin{itemize}
    \item How does \textsc{PreSa} compare to Offline Safe POHF baselines and those from Offline Safe RL with access to ground truth reward and cost?
    \item How does the preference and safety alignment module perform, respectively?
    \item How does \textsc{PreSa} perform with varying trajectory segment lengths and offline dataset sizes?
    \item What ingredients of \textsc{PreSa} are important for enhanced performance and safety alignment?
    \item How does \textsc{PreSa} perform with real human feedback?
    \item How important are safety labels for learning a safe policy from human feedback?
\end{itemize}

Additional experimental results, including an evaluation of \textsc{PreSa}'s effectiveness under noisy feedback and extensive ablation studies, are provided in the Appendix.

\begin{table*}[t]
  \caption{Evaluation results of normalized reward and cost. The $\uparrow$ symbol indicates that the higher reward, the better, while the $\downarrow$ symbol signifies that the lower cost (up to a threshold of $1$), the better. 
  \textbf{Bold}: Safe agents whose normalized cost is below 1. \textcolor{blue}{\textbf{Blue}}: Safe agent with the highest reward among Offline Safe RL baselines. \textcolor{orange}{\textbf{Orange}}: Safe agent with the highest reward among Offline Safe POHF approaches. \textcolor{gray}{\textbf{Gray}}: Unsafe agent.}
  \label{tab:representative_result}
  \resizebox{\textwidth}{!}{
  \begin{tabular}{ccccccc|cccccccccc}
    \toprule
    \multirow{3}[2]{*}{Task} & \multicolumn{6}{c}{Offline Safe RL (w/ ground truth rewards and costs)}  &  \multicolumn{8}{c}{Offline Safe POHF} \\
    \cmidrule(lr){2-7}
    \cmidrule(lr){8-15}
        &   \multicolumn{2}{c}{BC-All} & \multicolumn{2}{c}{BC-Safe} & \multicolumn{2}{c}{CDT} & \multicolumn{2}{c}{Binary Alignment} & \multicolumn{2}{c}{BC-Safe-Seg} & \multicolumn{2}{c}{Safe-RLHF (CDT)} & \multicolumn{2}{c}{\textsc{PreSa} (Ours)}  \\ 
    \cmidrule(lr){2-3}
    \cmidrule(lr){4-5}
    \cmidrule(lr){6-7}
    \cmidrule(lr){8-9}
    \cmidrule(lr){10-11}
    \cmidrule(lr){12-13}
    \cmidrule(lr){14-15}
      &   reward$\uparrow$    &   cost$\downarrow$ &   reward$\uparrow$    &   cost$\downarrow$  &   reward$\uparrow$    &   cost$\downarrow$ &   reward$\uparrow$    &   cost$\downarrow$  &   reward$\uparrow$    &   cost$\downarrow$    &   reward$\uparrow$    &   cost$\downarrow$    &   reward$\uparrow$    &   cost$\downarrow$    \\
    \midrule
    PointButton1    &   \textcolor{gray}{0.1}   &   \textcolor{gray}{1.05}   &   \textcolor{blue}{\textbf{0.06}}   &   \textcolor{blue}{\textbf{0.52}}   &   \textcolor{gray}{0.53}   &   \textcolor{gray}{1.68}  &   \textbf{0.02}   &   \textbf{0.55}    &    \textbf{0.06}    &   \textbf{0.81}    & \textbf{0.06}    &   \textbf{0.78}    &   \textcolor{orange}{\textbf{0.09}}    &   \textcolor{orange}{\textbf{0.84}}    \\
    PointButton2    &   \textcolor{gray}{0.27}    &   \textcolor{gray}{2.02}    &   \textcolor{gray}{0.16}    &   \textcolor{gray}{1.1} &   \textcolor{gray}{0.46}    &   \textcolor{gray}{1.57}    &   \textcolor{orange}{\textbf{-0.03}}   &   \textcolor{orange}{\textbf{0.5}}    &    \textcolor{gray}{0.22}    &   \textcolor{gray}{1.57}    & \textcolor{gray}{0.18}    &   \textcolor{gray}{1.33}    &   \textbf{-0.1}   &   \textbf{0.74}    \\
    PointGoal1  &   \textcolor{blue}{\textbf{0.65}}    &   \textcolor{blue}{\textbf{0.95}}    &   \textbf{0.43}    &   \textbf{0.54}    &   \textcolor{gray}{0.69}    &   \textcolor{gray}{1.12}    &   \textbf{0.31}    &   \textbf{0.42}    &   \textcolor{gray}{0.48}    &   \textcolor{gray}{1.17}    & \textbf{0.34}    &   \textbf{0.52}    &   \textcolor{orange}{\textbf{0.37}}    &   \textcolor{orange}{\textbf{0.73}}  \\
    PointGoal2  &   \textcolor{gray}{0.54}    &   \textcolor{gray}{1.97}    &   \textcolor{blue}{\textbf{0.29}}    &   \textcolor{blue}{\textbf{0.78}}    &   \textcolor{gray}{0.59}    &   \textcolor{gray}{1.34}    &   \textcolor{gray}{0.39}   &   \textcolor{gray}{1.15}    &   \textcolor{gray}{0.52}  &   \textcolor{gray}{2.08}    &    \textcolor{gray}{0.35}    &   \textcolor{gray}{2.5} &   \textcolor{orange}{\textbf{0.16}}    &   \textcolor{orange}{\textbf{0.96}}    \\
    CarButton1  &   \textcolor{gray}{0.03}    &   \textcolor{gray}{1.38}    &   \textcolor{blue}{\textbf{0.07}}    &   \textcolor{blue}{\textbf{0.85}}    &   \textcolor{gray}{0.21}    &   \textcolor{gray}{1.6}   &   \textcolor{gray}{-0.01}   &   \textcolor{gray}{2.52}    &   \textcolor{gray}{0.02}    &   \textcolor{gray}{1.42}    &   \textcolor{gray}{0.05}    &   \textcolor{gray}{3.96}    &   \textcolor{gray}{0.12}    &   \textcolor{gray}{1.87}   \\
    CarButton2  &   \textcolor{gray}{-0.13}   &   \textcolor{gray}{1.24}    &   \textcolor{blue}{\textbf{-0.01}}   &   \textcolor{blue}{\textbf{0.63}}    &   \textcolor{gray}{0.13}    &   \textcolor{gray}{1.58}    &   \textcolor{gray}{-0.06}   &   \textcolor{gray}{1.36}    &   \textcolor{gray}{-0.03}   &   \textcolor{gray}{1.01}    & \textcolor{gray}{0.02}  & \textcolor{gray}{1.77}    &   \textcolor{gray}{-0.04}   &   \textcolor{gray}{1.27}    \\
    CarGoal1    &   \textcolor{blue}{\textbf{0.39}}    &   \textcolor{blue}{\textbf{0.33}}    &   \textbf{0.24}    &   \textbf{0.28}    &   \textcolor{gray}{0.66}    &   \textcolor{gray}{1.21}    &   \textbf{0.29}    &   \textbf{0.38}    &   \textbf{0.25}    &   \textbf{0.3} &   \textcolor{orange}{\textbf{0.4}}    &   \textcolor{orange}{\textbf{0.61}}    &   \textbf{0.26}    &   \textbf{0.14}    \\
    CarGoal2    &   \textcolor{gray}{0.23}    &   \textcolor{gray}{1.05}    &   \textcolor{blue}{\textbf{0.14}}    &   \textcolor{blue}{\textbf{0.51}}    &   \textcolor{gray}{0.48}    &   \textcolor{gray}{1.25}    &   \textcolor{orange}{\textbf{0.18}}   &   \textcolor{orange}{\textbf{0.64}}    &   \textcolor{gray}{0.17} &   \textcolor{gray}{1.03}    &   \textcolor{gray}{0.18}    &   \textcolor{gray}{1.01}    &   \textbf{0.14}   &   \textbf{0.35}  \\
    SwimmerVelocity &   \textcolor{gray}{0.49}    &   \textcolor{gray}{4.72}    &   \textcolor{gray}{0.51}    &   \textcolor{gray}{1.07}    &   \textcolor{blue}{\textbf{0.66}}    &   \textcolor{blue}{\textbf{0.96}}    &   \textcolor{orange}{\textbf{-0.04}}   &   \textcolor{orange}{\textbf{0.7}}    &   \textcolor{gray}{0.33}  &   \textcolor{gray}{2.61}    &  \textcolor{gray}{0.66}    &   \textcolor{gray}{1.1} &   \textcolor{gray}{0.39}    &   \textcolor{gray}{1.96}    \\
    Walker2dVelocity    &   \textcolor{gray}{0.79}    &   \textcolor{gray}{3.88}    &   \textcolor{blue}{\textbf{0.79}}    &   \textcolor{blue}{\textbf{0.04}}    &   \textbf{0.78}    &   \textbf{0.06}    &   \textbf{-0.01}   &   \textbf{0.0} &   \textbf{0.78}    &   \textbf{0.01}    &   \textcolor{gray}{0.11}    &   \textcolor{gray}{1.42}    &   \textcolor{orange}{\textbf{0.79}}    &   \textcolor{orange}{\textbf{0.0}}   \\
    \midrule
    \textbf{SafetyGym Average}   &   \textcolor{gray}{0.34}    &   \textcolor{gray}{1.9}    &   \textcolor{blue}{\textbf{0.27}}    &   \textcolor{blue}{\textbf{0.63}}    &   \textcolor{gray}{0.52}    &   \textcolor{gray}{1.24}    &   \textbf{0.1}   &   \textbf{0.82}    &   \textcolor{gray}{0.28}   &   \textcolor{gray}{1.2}    &  \textcolor{gray}{0.24}    &   \textcolor{gray}{1.5} &   \textcolor{orange}{\textbf{0.22}}    &   \textcolor{orange}{\textbf{0.78}}  \\
    \midrule
    BallRun &   \textcolor{gray}{0.6} &   \textcolor{gray}{5.08}    &   \textcolor{gray}{0.27}    &   \textcolor{gray}{1.46}    &   \textcolor{gray}{0.39}    &   \textcolor{gray}{1.16}    &   \textcolor{gray}{0.31}   &   \textcolor{gray}{4.79}   &  \textcolor{gray}{0.37}    &   \textcolor{gray}{1.13}    &   \textcolor{gray}{0.35}    &   \textcolor{gray}{1.65}  &   \textcolor{orange}{\textbf{0.19}}    &   \textcolor{orange}{\textbf{0.09}}   \\
    DroneRun    &   \textcolor{gray}{0.24}    &   \textcolor{gray}{2.13}    &   \textbf{0.28}    &   \textbf{0.74}    &   \textcolor{blue}{\textbf{0.63}}    &   \textcolor{blue}{\textbf{0.79}}    &   \textbf{0.11}   &   \textbf{0.17}   &   \textcolor{gray}{0.17}    &   \textcolor{gray}{5.97}    &   \textcolor{gray}{0.47}    &   \textcolor{gray}{3.12}    &   \textcolor{orange}{\textbf{0.16}}    &   \textcolor{orange}{\textbf{0.33}}    \\
    BallCircle  &   \textcolor{gray}{0.74}    &   \textcolor{gray}{4.71}    &   \textcolor{blue}{\textbf{0.52}}    &   \textcolor{blue}{\textbf{0.65}}    &   \textcolor{gray}{0.77}    &   \textcolor{gray}{1.07}    &   \textbf{0.06}   &    \textbf{0.24}   &  \textcolor{orange}{\textbf{0.39}}    &   \textcolor{orange}{\textbf{0.68}}    &   \textcolor{gray}{0.68}    &   \textcolor{gray}{1.2}    &   \textbf{0.22} &   \textbf{0.03}    \\
    DroneCircle &   \textcolor{gray}{0.72}    &   \textcolor{gray}{3.03}    &   \textbf{0.56}    &   \textbf{0.57}    &   \textcolor{blue}{\textbf{0.63}}    &   \textcolor{blue}{\textbf{0.98}}    &   \textcolor{gray}{-0.23}   &   \textcolor{gray}{1.59}   &   \textcolor{gray}{0.61}    &   \textcolor{gray}{1.9}  &   \textcolor{orange}{\textbf{0.61}}    &   \textcolor{orange}{\textbf{0.87}}  &   \textbf{0.54}    &   \textbf{0.72}    \\
    \midrule
    \textbf{BulletGym Average}   &   \textcolor{gray}{0.58}    &   \textcolor{gray}{3.74}    &   \textcolor{blue}{\textbf{0.41}}    &   \textcolor{blue}{\textbf{0.86}}    &   \textcolor{gray}{0.61}    &   \textcolor{gray}{1.0}    &   \textcolor{gray}{0.06}   &   \textcolor{gray}{1.70}   &    \textcolor{gray}{0.39}    &   \textcolor{gray}{2.42}    &   \textcolor{gray}{0.53}    &   \textcolor{gray}{1.71}   &   \textcolor{orange}{\textbf{0.28}}    &   \textcolor{orange}{\textbf{0.29}}   \\
    \midrule
    easydense   &   \textcolor{gray}{0.27}    &   \textcolor{gray}{1.94}    &   \textbf{0.11}    &   \textbf{0.14}    &   \textcolor{blue}{\textbf{0.32}}    &   \textcolor{blue}{\textbf{0.62}}    &   \textbf{-0.01}   &   \textbf{0.11}    &   \textbf{0.1} &   \textbf{0.14}    &   \textcolor{gray}{0.43}    &   \textcolor{gray}{1.68}    &   \textcolor{orange}{\textbf{0.21}}    &   \textcolor{orange}{\textbf{0.37}}    \\
    mediummean  &   \textcolor{gray}{0.77}    &   \textcolor{gray}{2.53}    &   \textbf{0.31}    &   \textbf{0.21}    &   \textcolor{blue}{\textbf{0.45}}    &   \textcolor{blue}{\textbf{0.75}}    &   \textbf{0.03}    &   \textbf{0.09}    &   \textbf{0.09}    &   \textbf{0.17}    &   \textcolor{gray}{0.46}    &   \textcolor{gray}{1.0} &   \textcolor{orange}{\textbf{0.35}}    &   \textcolor{orange}{\textbf{0.43}}    \\
    hardsparse  &   \textcolor{gray}{0.42}    &   \textcolor{gray}{1.8} &   \textcolor{gray}{0.17}    &   \textcolor{gray}{3.25}    &   \textcolor{blue}{\textbf{0.25}}    &   \textcolor{blue}{\textbf{0.41}}    & \textbf{-0.02}   &   \textbf{0.06}    &   \textbf{0.09}    &   \textbf{0.18}    &   \textcolor{gray}{0.27}    &   \textcolor{gray}{1.16}    &   \textcolor{orange}{\textbf{0.24}}    &   \textcolor{orange}{\textbf{0.51}}    \\
    \midrule
    \textbf{MetaDrive Average}  &  \textcolor{gray}{0.49}    &   \textcolor{gray}{2.09}    &   \textcolor{gray}{0.2}    &   \textcolor{gray}{1.2}    &   \textcolor{blue}{\textbf{0.34}}    &   \textcolor{blue}{\textbf{0.6}} &    \textbf{-0.0}    &   \textbf{0.22}    &   \textbf{0.08} &   \textbf{0.16}    &   \textcolor{gray}{0.39}    &   \textcolor{gray}{1.28}    &   \textcolor{orange}{\textbf{0.27}}    &   \textcolor{orange}{\textbf{0.44}}    \\
    \bottomrule
  \end{tabular}
  }
\end{table*}

\subsection{Experimental Setting For Synthetic Feedback}

We conduct experiments using the DSRL benchmark~\cite{liu2023datasets}, designed for Offline Safe RL, to generate synthetic feedback for quantitative evaluation. The benchmark provides offline data across 38 tasks in SafetyGym~\cite{ray2019benchmarking,ji2023omnisafe}, BulletGym~\cite{gronauer2022bullet}, and MetaDrive~\cite{li2022metadrive}, where agents aim for high rewards while avoiding obstacles or maintaining safe velocities. For fair comparison, we use the constraint variation evaluation method from DSRL~\cite{liu2023datasets}, testing each method on every task with three cost thresholds and three random seeds. Evaluation metrics include normalized reward and cost, where cost below $1$ indicates safety~\cite{liu2023datasets,fu2020d4rl}.
\textsc{PreSa} follows a training pipeline similar to CPL~\cite{hejnacontrastive}, first pretraining with behavior cloning (BC) on the offline dataset, then optimizing with pairwise preferences and binary safety labels.

\paragraph{Synthetic Feedback.} 
For evaluation, we generate synthetic feedback from the DSRL offline dataset. We randomly select 10,000 pairs of trajectory segments with varying lengths, enabling further testing with fewer feedbacks via random sampling. Pairwise preferences are based on cumulative rewards from the offline dataset, as in dense reward settings, the optimal advantage is a reshaped version of the reward, leading to the same policy~\cite{hejnacontrastive}. Trajectory segments are labeled for safety using ground truth cost values and predefined safety thresholds, which are adjusted proportionally based on segment length relative to the maximum trajectory length in each domain. This reshaped threshold ensures stricter constraints on each segment, guaranteeing that policies satisfying segment thresholds also satisfy the original trajectory threshold.

\paragraph{Baselines.} We compare our approach against several baselines to demonstrate its effectiveness, which are across Offline Safe POHF setting and Offline Safe RL setting with ground truth rewards and costs. In Offline Safe POHF setting, we consider three baseline methods: 1) \textit{Binary Alignment}: Inspired by~\cite{ethayarajh2024kto}, unified binary labels are generated based on comparative preferences and binary safety labels. Segments that are both safe and preferred are assigned a label of $+1$, while all others $-1$. The policy is then learned solely using the safety alignment module. 2) \textit{BC-Safe-Seg}: A behavior cloning (BC) approach trained only on safe trajectory segments. 3) \textit{Safe-RLHF (CDT)}~\cite{daisafe}: A variant of Safe-RLHF adapted to our setting where a state-of-the-art offline safe RL approach, CDT~\cite{liu2023constrained} is applied for policy optimization.
Additionally, we select three baselines from Offline Safe RL setting. They assume access to ground truth reward and cost: 1) \textit{BC-All}: BC trained on the entire dataset. 2) \textit{BC-Safe}: BC trained exclusively on safe trajectories. 3) \textit{CDT}: A sequence modeling approach that incorporates safety constraints into Decision Transformers.

\subsubsection{How Does \textsc{PreSa} Perform?}
For safety alignment, Figure~\ref{fig:safe_ratio} shows the proportion of safe policies learned by each approach in both domains. \textsc{PreSa} surpasses all Offline Safe POHF baselines with more \textit{safe} policies learned. Compared to Offline Safe RL baselines, \textsc{PreSa} performs comparably in SafetyGym, outperforming two baselines by a significant margin, except for BC-Safe, while in BulletGym, \textsc{PreSa} outperforms all baselines with most safe policies learned.
\begin{figure}
    \centering
    \includegraphics[width=\columnwidth]{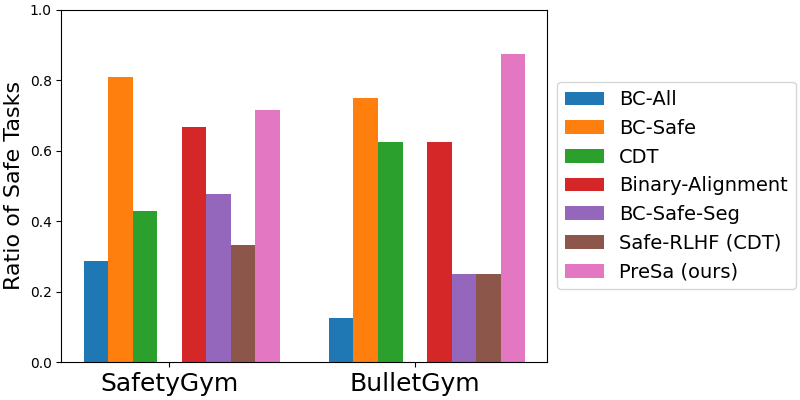}
    \caption{Ratio of safe agents learned by different approaches.}
    \label{fig:safe_ratio}
\end{figure}

The results of representative tasks are presented in Table~\ref{tab:representative_result}. Among the Offline Safe POHF baselines, Binary Alignment achieves low costs but also results in very low rewards across all tasks. This occurs because $+1$ is assigned to safe but often low-reward segments, as they are only marginally better than their counterparts. BC-Safe-Seg achieves relatively high average rewards, but this is largely due to unsafe agents. Safe-RLHF (CDT) manages to learn higher rewards but struggles with safe policy learning. \textsc{PreSa} outperforms all Offline Safe POHF baselines by successfully adhering to implicit safety constraints while also achieving high rewards. Similarly, although the Offline Safe RL baselines achieve high rewards, they struggle to learn safe behaviors while \textsc{PreSa} shows significantly better performance in 
safety alignment.
Notably, the Offline Safe RL baselines have access to ground truth data, while \textsc{PreSa} relies solely on human feedback which has much less information. Despite this, \textsc{PreSa} matches or exceeds the performance of these baselines for safety alignment, highlighting its effectiveness.

\begin{figure}[t]
    \centering
    \begin{subfigure}[t]{0.49\textwidth}
        \centering
        \includegraphics[width=0.48\textwidth]{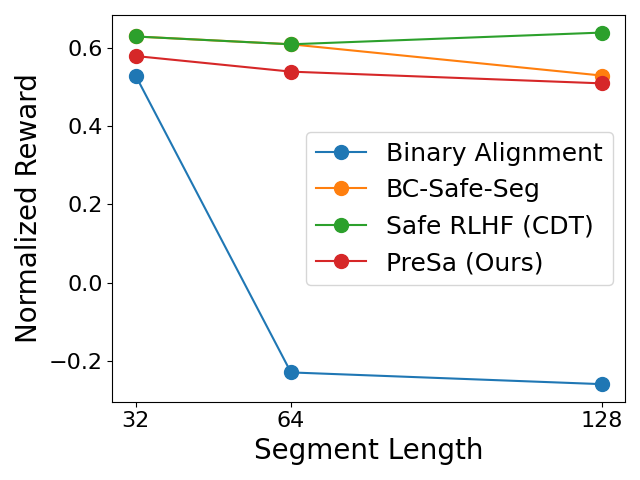}
        \includegraphics[width=0.48\textwidth]{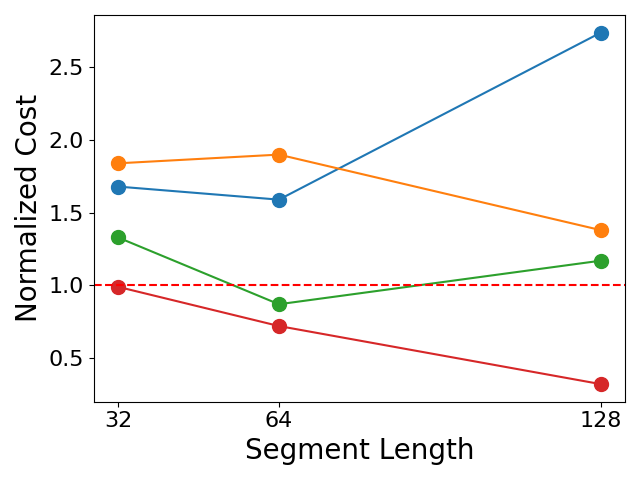}
        \caption{Different segment lengths.}
    \end{subfigure}
    ~ 
    
    \begin{subfigure}[t]{0.49\textwidth}
        \centering
        \includegraphics[width=0.48\textwidth]{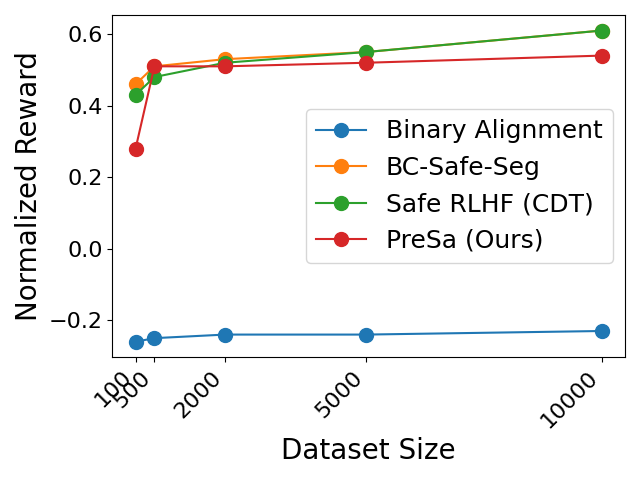}
        \includegraphics[width=0.48\textwidth]{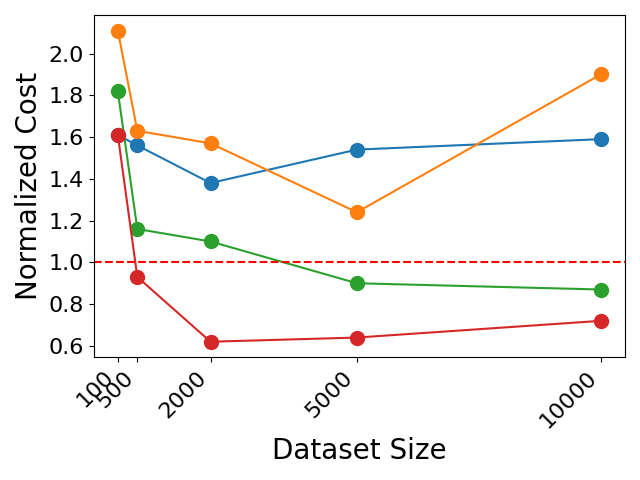}
        \caption{Different sizes of the offline dataset.}
    \end{subfigure}
    \caption{Performance of Offline Safe POHF methods across varying trajectory segment lengths, dataset sizes.}
    \label{fig:varying}
\end{figure}

\begin{table*}
\caption{Ablation study of varying values of $\alpha$ and $\beta$.}
  \centering
  \resizebox{\textwidth}{!}{
  \begin{tabular}{ccccccccc|cccccccc}
    \toprule
    \multirow{2}[2]{*}{Task} & \multicolumn{2}{c}{$\alpha=0.2$} & \multicolumn{2}{c}{$\alpha=0.4$} & \multicolumn{2}{c}{$\alpha=0.6$} & \multicolumn{2}{c}{$\alpha=0.8$} & \multicolumn{2}{c}{$\beta=0.25$} & \multicolumn{2}{c}{$\beta=0.5$} & \multicolumn{2}{c}{$\beta=0.75$} & \multicolumn{2}{c}{$\beta=1.0$} \\
    \cmidrule(lr){2-3}
    \cmidrule(lr){4-5}
    \cmidrule(lr){6-7}
    \cmidrule(lr){8-9}
    \cmidrule(lr){10-11}
    \cmidrule(lr){12-13}
    \cmidrule(lr){14-15}
    \cmidrule(lr){16-17}
      &   reward$\uparrow$    &   cost$\downarrow$ &   reward$\uparrow$    &   cost$\downarrow$  &   reward$\uparrow$    &   cost$\downarrow$ &   reward$\uparrow$    &   cost$\downarrow$  &   reward$\uparrow$    &   cost$\downarrow$ &   reward$\uparrow$    &   cost$\downarrow$  &   reward$\uparrow$    &   cost$\downarrow$ &   reward$\uparrow$    &   cost$\downarrow$     \\
    \midrule
    BallRun   &   0.19    &   0.09    &   0.19    &   0.1    &   0.38    &   2.8    &   0.19    &   0.12    &   0.18    &   0.11    &   0.18    &   0.09    &   0.34    &   2.67    &   0.19    &   0.09    \\
    DroneCircle   &   0.54    &   0.72    &   0.5    &   0.92    &   0.27    &   1.9    &   0.54    &   1.09    &   0.52    &   1.04    &   0.49    &   0.98    &   0.32    &   1.45    &   0.54    &   0.72    \\
    \bottomrule
  \end{tabular}
  }
  \label{tab:ablation}
\end{table*}

\begin{table*}[t]
  \centering
  \caption{Performance of \textsc{PreSa} with varying $\delta$.}
  \begin{tabular}{ccccccccccc}
    \toprule
    \multirow{2}[2]{*}{Task} & \multicolumn{2}{c}{$\delta=0.55$} & \multicolumn{2}{c}{$\delta=0.65$} & \multicolumn{2}{c}{$\delta=0.75$} & \multicolumn{2}{c}{$\delta=0.85$} & \multicolumn{2}{c}{$\delta=0.95$} \\ 
    \cmidrule(lr){2-3}
    \cmidrule(lr){4-5}
    \cmidrule(lr){6-7}
    \cmidrule(lr){8-9}
    \cmidrule(lr){10-11}
      &   reward$\uparrow$    &   cost$\downarrow$ &   reward$\uparrow$    &   cost$\downarrow$ &   reward$\uparrow$    &   cost$\downarrow$ &   reward$\uparrow$    &   cost$\downarrow$  &   reward$\uparrow$    &   cost$\downarrow$    \\
    \midrule
    BallRun &   0.35    &   1.94    &   \textbf{0.22}    &   \textbf{0.43}    &   \textbf{0.2}    &   \textbf{0.2}    &   \textbf{0.2} &   \textbf{0.17}    &   \textbf{0.19}    &   \textbf{0.09}   \\
    DroneCircle &   0.6    &   2.21    &   0.59    &   1.83    &   0.56    &   1.43    &   0.54    &   1.13    &   \textbf{0.54}    &   \textbf{0.72}  \\
    \bottomrule
  \end{tabular}
  \label{tab:delta}
\end{table*}

\subsubsection{Performance of Preference and Safety Alignment Modules}
We show the results for preference alignment and safety alignment modules separately in the BulletGym domain in Figure~\ref{fig:module_performance}. The preference alignment module is designed to learn a policy that aligns with human pairwise preferences, implicitly capturing reward information without accounting for safety. Consequently, the policy learned by this module tends to achieve relatively high rewards. In contrast, the safety alignment module leverages binary safety labels, which encode implicit safety constraints, and successfully learns safe policies for most tasks (most dots to the left of the cost threshold of $1$). The effectiveness of each alignment module forms a solid foundation for learning high-reward, safe policies. When both modules are integrated, \textsc{PreSa} refines the preference-aligned policy to prioritize safer behaviors simultaneously.

\begin{figure*}[t]
    \centering
    \begin{subfigure}[t]{0.3\textwidth}
        \centering
        \includegraphics[width=\textwidth]{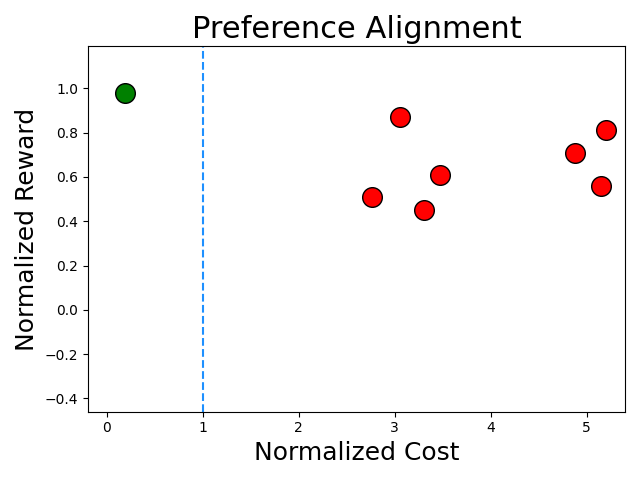}
    \end{subfigure}%
    ~ 
    \begin{subfigure}[t]{0.3\textwidth}
        \centering
        \includegraphics[width=\textwidth]{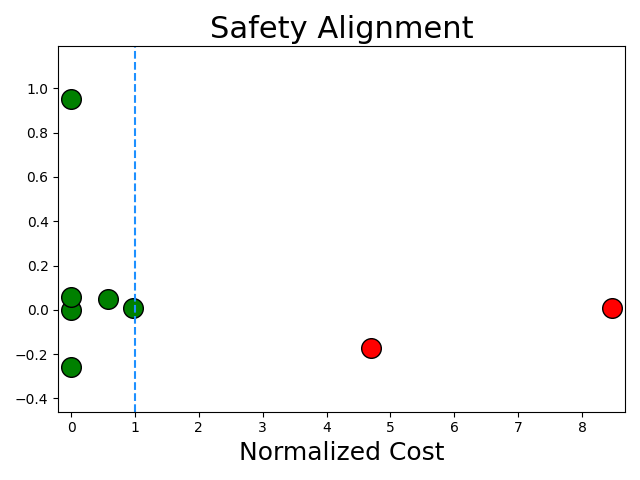}
    \end{subfigure}
    ~ 
    \begin{subfigure}[t]{0.3\textwidth}
        \centering
        \includegraphics[width=\textwidth]{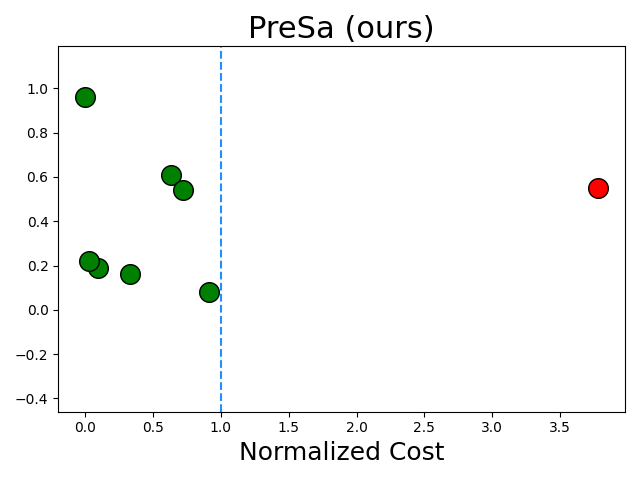}
    \end{subfigure}
    \caption{Visualization of normalized reward and cost for each task within BulletGym domain. The dotted blue vertical lines mark the cost threshold of $1$. Each round dot represents a task, where green dots indicate tasks meeting safety constraints, and red dots indicate tasks with constraint violations.}
    \label{fig:module_performance}
\end{figure*}

\subsubsection{Varying Segment Lengths, Dataset Sizes.}
To investigate how \textsc{PreSa} performs in different settings, we evaluate its effectiveness compared to baselines across varying segment lengths and offline dataset sizes. The results of DroneCircle task, shown in Figure~\ref{fig:varying}, indicate that as the length of the segments increases, \textsc{PreSa} consistently learns policies with lower costs while maintaining stable reward performance, compared to other baselines. Moreover, \textsc{PreSa} demonstrates a stable and better performance as more offline data becomes available. These results show that \textsc{PreSa} outperforms other baselines across different segment lengths and dataset sizes.


\subsubsection{What Contributes To \textsc{PreSa}'s Performance?}
Ablation studies are conducted to assess the impact of varying the hyperparameters $\alpha$ and $\beta$, which are used by the preference alignment and safety alignment modules, respectively, to regularize the learning policy with a pretrained reference policy. The results for BallRun and DroneCircle tasks are shown in Table~\ref{tab:ablation}. We find that \textsc{PreSa} remains robust across different choices of $\alpha$ and $\beta$, although higher performance could likely be achieved with further hyperparameter tuning. 
We also evaluate the effect of varying the safety threshold $\delta$, with results shown in Table~\ref{tab:delta}. Higher values of $\delta$ impose a stricter safety criterion on the learned policy, leading to lower incurred costs and safer behaviors. This demonstrates that $\delta$ effectively controls the desired safety level of the policy.

\subsection{Experiments With Real Human Feedback}


In this section, we conduct experiments using real human feedback. The offline data and human feedback are pre-collected for autonomous driving tasks in the Highway environment~\cite{highway-env} by following the offline data generation pipeline proposed in DSRL~\cite{liu2023datasets} and feedback collection process commonly used in prior offline PbRL studies~\cite{kimpreference, hejna2024inverse, hejnacontrastive}.
For each trajectory pair, we collect not only pairwise preferences regarding task completion and binary safety labels, but also ``general preferences'' which assess the overall performance of the agent, considering both reward and cost. \footnote{To mitigate the immediate impact of collecting one type of feedback when gathering another, we collect one type of feedback per run and shuffle the order of the data to be labeled.} These general preferences are used solely to train a policy, allowing us to investigate whether separating reward and safety feedback, as introduced in Offline Safe POHF, is beneficial, since ``general preferences'' implicitly encode information about both reward and cost as well. In real human experiments, where ground truth reward or cost functions are unavailable, human evaluators are also responsible for assessing the learned policies in terms of two aspects: task completion quality (i.e., rewards) and safety (i.e., cost). The sampled trajectories from each method are shuffled and randomly presented to the human evaluators, ensuring that they are unaware of which method the current trajectory is sampled from, to allow for a fair evaluation.


\begin{table}
  \centering
  \caption{ Experimental Results With Real Human Feedback, evaluated using 10 trajectories per approach across 3 seeds. \textit{Task Comp}: Normalized relative preference for policy behaviors compared to \textsc{PreSa} in task completion. Values $>1.0$ indicate better performance, $<1.0$ worse performance.
\textit{Safety}: Percentage of safe behaviors generated by each method.}
  \resizebox{\columnwidth}{!}{
  \begin{tabular}{ccc|cc|cc}
    \toprule
    \multirow{2}[2]{*}{Task} & \multicolumn{2}{c}{CPL \emph{(w/ general pref)}} & \multicolumn{2}{c}{\textsc{PreSa} (Ours)} & \multicolumn{2}{c}{Safe RLHF (CDT)} \\
    \cmidrule(lr){2-3}
    \cmidrule(lr){4-5}
    \cmidrule(lr){6-7}
      &   task comp    &   safety &   task comp    &   safety &   task comp    &   safety    \\
    \midrule
    Intersection   &   0.95    &   0.8    &   \textbf{1.0}    &   \textbf{0.87}    &   0.92    &   0.7    \\
    U-Turn   &   0.87    &   0.73    &   \textbf{1.0}    &   \textbf{0.83}    &   0.82    &   0.55    \\
    \midrule
    Average  &  0.91    &   0.77    &   \textbf{1.0}    &   \textbf{0.85}    &   0.87    &   0.63      \\
    \bottomrule
  \end{tabular}
  }
  \label{tab:real_human}
\end{table}


\subsubsection{Performance of \textsc{PreSa} With Real Human Feedback.} 
Our experiments investigate two typical autonomous driving tasks: \emph{Intersection} and \emph{U-Turn}. In both tasks, the ego-vehicle must manage lane changes and longitudinal control to reach the goal as quickly as possible while avoiding collisions. We compare the performance of \textsc{PreSa} with Safe RLHF (CDT).
As shown in Table~\ref{tab:real_human}, \textsc{PreSa} outperforms it in both task completion and maintaining safe behavior, highlighting that \textsc{PreSa} performs effectively with real human feedback as well.
\subsubsection{Significance of Offline Safe POHF} 
With ``general preferences'', we train a policy using CPL, assuming it implicitly encodes both reward and cost. In contrast, \textsc{PreSa} applies Offline Safe POHF, where reward and cost are leveraged separately from two types of feedback. As shown in Table~\ref{tab:real_human}, \textsc{PreSa} outperforms CPL with ``general preferences'' in both metrics, providing empirical evidence of the significance of separate feedback in Offline Safe POHF. Specifically, the added safety feedback improves safe policy learning.


\section{Conclusion}

In this paper, we present Offline Safe POHF, a framework for learning policies using human pairwise preferences and binary safety labels for each trajectory segment, without access to ground truth rewards or costs. We first analyze the problem through two modules: preference alignment and safety alignment, which can be applied separately for learning based on preferences or safety labels. We then introduce \textsc{PreSa}, which integrates both modules into a unified constrained optimization objective, as the safety alignment module defines a feasible policy set. \textsc{PreSa} learns a policy directly from offline human feedback, without the needs for reward or cost models or constrained RL. Empirical results with synthetic and real human feedback show that \textsc{PreSa} outperforms Offline Safe POHF baselines as well as matches or surpasses offline safe RL methods with ground truth rewards and costs.





\begin{acks}
This research/project is supported by the National Research Foundation Singapore and DSO National Laboratories under the AI Singapore Programme (Award Number: AISG2-RP-2020-016).
\end{acks}



\balance

\bibliographystyle{ACM-Reference-Format} 
\bibliography{sample}

\newpage
\appendix

\section{Theory}

\subsection{Score Function Justification}
\label{app:score}

\begin{lemma} 
For a trajectory segment $\sigma$ and a given policy $\pi$, the utility function $u(\sigma;\pi)$ is directly proportional to the difference between the log probability of $\sigma$ under $\pi$ and the reference point $z_{\text{ref}}$, which is given by:
$u(\sigma;\pi)=\log p(\sigma; \pi) - z_{\text{ref}}(\pi)+ \text{constants}$.
\end{lemma}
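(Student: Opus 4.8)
\emph{Proof plan.} The plan is to unpack both sides of the claimed identity using the definitions in Equations~\ref{eq:utility} and~\ref{eq:score}, and to show that the segment score $\psi_\pi(\sigma)$ coincides with the log-likelihood of $\sigma$ under $\pi$ up to a multiplicative factor and additive terms that do not depend on $\pi$.

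First I would factorize the probability of a segment $\sigma=(s_1,a_1,\dots,s_k,a_k)$ generated by $\pi$ in the MDP $\mathcal{M}$ as $p(\sigma;\pi)=\mu(s_1)\prod_{t=1}^{k}\pi(a_t\mid s_t)\prod_{t=1}^{k-1}\mathcal P(s_{t+1}\mid s_t,a_t)$, where $\mu$ is the (fixed, offline) distribution over the segment's first state. Taking logarithms, $\log p(\sigma;\pi)=\sum_{t=1}^{k}\log\pi(a_t\mid s_t)+c_1(\sigma)$ with $c_1(\sigma)=\log\mu(s_1)+\sum_{t=1}^{k-1}\log\mathcal P(s_{t+1}\mid s_t,a_t)$ independent of $\pi$. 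Next I would insert the reference policy, writing $\sum_{t}\log\pi(a_t\mid s_t)=\sum_{t}\log\frac{\pi(a_t\mid s_t)}{\pi_\text{ref}(a_t\mid s_t)}+c_2(\sigma)$ with $c_2(\sigma)=\sum_{t}\log\pi_\text{ref}(a_t\mid s_t)$ again independent of $\pi$. Comparing with Equation~\ref{eq:score}, the surviving sum $\sum_{t}\log\frac{\pi(a_t\mid s_t)}{\pi_\text{ref}(a_t\mid s_t)}$ equals $\psi_\pi(\sigma)/\beta$ in the undiscounted case $\gamma=1$ (and its $\gamma^t$-weighted counterpart in general). Hence $\psi_\pi(\sigma)=\beta\bigl(\log p(\sigma;\pi)-c_1(\sigma)-c_2(\sigma)\bigr)$, so $\psi_\pi(\sigma)$ is directly proportional to $\log p(\sigma;\pi)$ up to $\pi$-independent constants.

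Finally I would substitute this into $u(\sigma;\pi)=\psi_\pi(\sigma)-z_\text{ref}(\pi)$ from Equation~\ref{eq:utility}, obtaining $u(\sigma;\pi)=\beta\log p(\sigma;\pi)-z_\text{ref}(\pi)+\text{constants}$, which is exactly the claimed form once $\beta$ is absorbed into the proportionality.

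I expect the only delicate point to be the discount factor: $\psi_\pi(\sigma)$ is a $\gamma$-weighted sum of per-step log-ratios whereas $\log p(\sigma;\pi)$ is an unweighted sum, so the identification is literal only when $\gamma=1$; otherwise one should either restrict attention to the finite-horizon/undiscounted segment likelihood or define $\log p(\sigma;\pi)$ as the matching $\gamma$-reweighted log-likelihood. A secondary point is ensuring $\mu(s_1)$ is treated as fixed — which holds in the offline setting, where segment endpoints come from the static dataset rather than rollouts of the learned $\pi$ — so that it legitimately belongs to the constants rather than introducing hidden $\pi$-dependence.
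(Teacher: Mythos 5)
Your proposal is correct and follows essentially the same route as the paper's proof: expand $\psi_\pi(\sigma)$ as the sum of log-ratios, identify it with $\log p(\sigma;\pi)$ by absorbing the transition and $\pi_\text{ref}$ terms into $\pi$-independent constants, and substitute into $u(\sigma;\pi)=\psi_\pi(\sigma)-z_\text{ref}(\pi)$. If anything, you are more explicit than the paper about the $\beta$ factor, the $\gamma=1$ caveat, and the fixed initial-state distribution, whereas the paper simply drops $\beta$ "for simplicity" and treats $\gamma\approx 1$ as an approximation.
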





\begin{proof}
For simplicity, we ignore the $\beta$ term from Equation
6. We have:
\begin{align}
u(\sigma;\pi) \approx \sum_{t=0}^T \log \pi(a_t|s_t) - \sum_{t=0}^T \log \pi_{\text{ref}}(a_t|s_t) - z_{\text{ref}}(\pi) \label{eq:scorepolicy}
\end{align}
The approximation sign is due to ignoring the discount factor (being close to 1); $z_{\text{ref}}(\pi)$ is defined as the average over all trajectories (both safe, unsafe): $\mathbb{E}_\sigma \left[ \sum_{t=0}^T \gamma^t \beta \log \frac{\pi(a_t | s_t)}{\pi_\text{ref}(a_t | s_t)} \right]$.

The log progability of a trajectory segment $\sigma$ as per policy $\pi$ is given as:
\begin{equation}
    \log p(\sigma; \pi) = \sum_{t=0}^T \log \pi(a_t|s_t) + \text{constants}
\end{equation}
where constant terms refer to the log of transition function, which is independent of $\pi$. Using Equation~\ref{eq:scorepolicy}, we have:
\begin{align}
&u(\sigma;\pi) \nonumber \\
&\approx \log p(\sigma; \pi) - \sum_{t=0}^T \log \pi_{\text{ref}}(a_t|s_t) - z_{\text{ref}}(\pi) + \text{constants} \nonumber  \\ 
&= \log p(\sigma; \pi) - z_{\text{ref}}(\pi) + \text{constants} \label{eq:logprob}
\end{align}
Notice that $z_{\text{ref}}(\pi)$ only depends on policy $\pi$ and is the same for all the trajectories in the safe and unsafe datasets. Minimizing the loss function in Equation
7 would optimize the policy $\pi_\theta$ such that for safe policies, their utility score increases. This is, the score $u(\sigma;\pi_\theta)=\log p(\sigma; \pi_\theta) - z_{\text{ref}}(\pi_\theta)$ increases, implying that the log-probability of safe trajectory segments under policy $\pi_\theta$ should move higher than reference point $z_{\text{ref}}(\pi_\theta)$, which is as desired in the HALO formulation. Analogous reasoning applies when $\sigma$ is unsafe, its utility should decrease.
\end{proof}

\subsection{Theoretical Guarantees for \textsc{PreSa}}
\label{subsec:theory-presa}

\begin{lemma}[Feasibility generalization bound]
\label{lemma:feasibility-bound-corrected}
Let $\mathcal{D}=\{(\sigma_i,y_i)\}_{i=1}^N$ be drawn i.i.d.\ from an unknown distribution $\mathcal P$ over trajectory segments and binary safety labels. Define the function class
\begin{equation}
    \mathcal G = \left\{ g_\theta:(\sigma,y)\mapsto \texttt{sigmoid} \left(y\left(\psi_{\pi_\theta}(\sigma)-z_{\text{ref}} \right)\right)\;:\;\theta\in\Theta\right\},
\end{equation}
and the empirical and population safety scores
\begin{equation*}
    \hat F_{\mathcal D}(\pi_\theta)=\frac{1}{N}\sum_{i=1}^N g_\theta(\sigma_i,y_i),\qquad
    F(\pi_\theta)=\mathbb{E}_{(\sigma,y)\sim\mathcal P} \left[g_\theta(\sigma,y) \right].
\end{equation*}
Assume each $g_\theta$ takes values in $[0,1]$ (true for \texttt{sigmoid}). Then for any confidence level $\tau\in(0,1)$, with probability at least $1-\tau$ the following holds simultaneously for all $\pi_\theta\in\Pi_\Theta$:
\begin{equation}\label{eq:two-sided-bound}
    \big|F(\pi_\theta)-\hat F_{\mathcal D}(\pi_\theta)\big|
    \le 2\,\mathfrak R_N(\mathcal G) + \sqrt{\frac{\ln(2/\tau)}{2N}},
\end{equation}
where $\mathfrak R_N(\mathcal G)$ denotes the Rademacher complexity of $\mathcal G$.
\end{lemma}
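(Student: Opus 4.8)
The plan is to invoke the classical symmetrization and uniform-convergence machinery for bounded function classes. The statement is a two-sided uniform deviation bound over the class $\mathcal G$, and since every $g_\theta$ maps into $[0,1]$, this is a textbook application of the Rademacher complexity generalization bound (McDiarmid's inequality plus symmetrization). First I would consider the one-sided supremum $\Phi(\mathcal D) = \sup_{\theta\in\Theta}\bigl(F(\pi_\theta) - \hat F_{\mathcal D}(\pi_\theta)\bigr)$ and observe that replacing a single sample $(\sigma_i,y_i)$ changes $\hat F_{\mathcal D}$ by at most $1/N$ (because $g_\theta\in[0,1]$), hence $\Phi$ has bounded differences with constant $1/N$. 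McDiarmid's inequality then gives $\Phi(\mathcal D) \le \mathbb{E}[\Phi(\mathcal D)] + \sqrt{\ln(1/\tau')/(2N)}$ with probability at least $1-\tau'$.

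Second, I would bound the expectation $\mathbb{E}[\Phi(\mathcal D)]$ by the standard symmetrization argument: introducing a ghost sample $\mathcal D'$ and Rademacher variables $\epsilon_i$, one obtains $\mathbb{E}[\Phi(\mathcal D)] \le 2\,\mathfrak R_N(\mathcal G)$, where $\mathfrak R_N(\mathcal G) = \mathbb{E}\bigl[\sup_{\theta}\frac1N\sum_i \epsilon_i g_\theta(\sigma_i,y_i)\bigr]$ is the Rademacher complexity as defined in the lemma. This yields the one-sided bound $\sup_\theta(F-\hat F) \le 2\mathfrak R_N(\mathcal G) + \sqrt{\ln(1/\tau')/(2N)}$ with probability $\ge 1-\tau'$.

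Third, I would apply the identical argument to the reversed quantity $\sup_\theta(\hat F_{\mathcal D}(\pi_\theta) - F(\pi_\theta))$, which is symmetric in structure and gives the matching bound with the same Rademacher term (the Rademacher complexity is insensitive to the sign of the functions). Setting $\tau' = \tau/2$ in each of the two one-sided statements and taking a union bound, both hold simultaneously with probability at least $1-\tau$, which combines into $\big|F(\pi_\theta) - \hat F_{\mathcal D}(\pi_\theta)\big| \le 2\mathfrak R_N(\mathcal G) + \sqrt{\ln(2/\tau)/(2N)}$ uniformly over $\pi_\theta\in\Pi_\Theta$, as claimed.

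The argument is essentially routine once the boundedness of $g_\theta$ is in hand, so there is no serious obstacle; the only points requiring a modicum of care are (i) verifying the bounded-differences constant is exactly $1/N$ rather than something larger — this is immediate from $g_\theta\in[0,1]$ — and (ii) correctly splitting the failure probability as $\tau/2+\tau/2$ across the two one-sided events so that the logarithm inside the square root becomes $\ln(2/\tau)$ rather than $\ln(1/\tau)$. I would also remark that the bound is vacuous unless $\mathfrak R_N(\mathcal G)$ itself decays, which it does at rate $\mathcal O(1/\sqrt N)$ whenever $\Theta$ induces a class of controlled capacity (e.g., via a Lipschitz contraction argument on $\texttt{sigmoid}$ composed with the linear-in-$\log\pi_\theta$ score $\psi_{\pi_\theta}$), recovering the advertised $\mathcal O(1/\sqrt N)$ overall rate.
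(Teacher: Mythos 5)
Your proposal is correct and follows essentially the same route as the paper's proof: bounded differences of $1/N$ plus McDiarmid for concentration, symmetrization with a ghost sample and Rademacher variables to bound the expectation by $2\,\mathfrak R_N(\mathcal G)$, and a union bound over the two one-sided deviations with $\tau/2$ each to obtain the $\ln(2/\tau)$ term. The only cosmetic difference is that the paper handles the reverse deviation by passing to the negated class $-\mathcal G$, which is the same observation you make about sign-insensitivity of the Rademacher complexity.
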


\begin{proof}
Let $\Phi(\mathcal D)=\sup_{g\in\mathcal G}\left(F(g)-\hat F_{\mathcal D}(g)\right)$ denote the one-sided uniform deviation between population and empirical averages over the class $\mathcal G$.\footnote{For brevity, we write $g \in \mathcal G$ to denote functions of the form $g_\theta(\sigma, y) = \texttt{sigmoid}(y(\psi_{\pi_\theta}(\sigma) - z_{\mathrm{ref}}))$, and suppress the parameter $\theta$ in the derivation.} Writing $F(g)=\mathbb{E}_{\mathcal D'}[\hat F_{\mathcal D'}(g)]$ for an independent sample dataset $\mathcal D'$, and applying Jensen's inequality to the supremum, we obtain the standard symmetrization identity
\begin{equation}
    \mathbb{E}_{\mathcal D}[\Phi(\mathcal D)]
    \le \mathbb{E}_{\mathcal D,\mathcal D'}\left[\sup_{g\in\mathcal G}\frac{1}{N}\sum_{i=1}^N\left(g\left(\sigma'_i,y'_i\right)-g(\sigma_i,y_i)\right)\right].
\end{equation}
Introduce independent Rademacher random variables \(\epsilon_1,\dots,\epsilon_N\), i.e. independent variables taking values in \(\{-1,+1\}\) with \(\Pr(\epsilon_i=1)=\Pr(\epsilon_i=-1)=1/2\). The role of these signs is to ``symmetrize'' the two-sample difference that appears after introducing $\mathcal D'=\{(\sigma'_i,y'_i)\}_{i=1}^N$. Concretely, 
\begin{align*}
    &\mathbb{E}_{\mathcal{D}, \mathcal{D}', \boldsymbol{\epsilon}}\left[ \sup_{g\in\mathcal G} \frac{1}{N} \sum_{i=1}^N \epsilon_i \left( g(\sigma'_i, y'_i) - g(\sigma_i, y_i) \right) \right] \\
    \leq\, &\mathbb{E}_{\mathcal{D}', \boldsymbol{\epsilon}}\left[ \sup_{g\in\mathcal G} \frac{1}{N} \sum_{i=1}^N \epsilon_i g(\sigma'_i, y'_i) \right] + \mathbb{E}_{\mathcal{D}, \boldsymbol{\epsilon}}\left[ \sup_{g\in\mathcal G} \frac{1}{N} \sum_{i=1}^N (-\epsilon_i) g(\sigma_i, y_i) \right] \\
    =\, & 2 \mathbb{E}_{\mathcal{D}, \boldsymbol{\epsilon}}\left[ \sup_{g\in\mathcal G} \frac{1}{N} \sum_{i=1}^N \epsilon_i g(\sigma_i, y_i) \right] = 2 \mathfrak{R}_N(\mathcal{G}).
\end{align*}
Thus the expectation of the supremum deviation is at most $2\mathfrak R_N(\mathcal G)$. 

Because each $g\in\mathcal G$ takes values in $[0,1]$, replacing a single example in the sample can change any empirical average $\hat F_{\mathcal D}(g)=\tfrac{1}{N}\sum_{i=1}^N g(\sigma_i,y_i)$ by at most $1/N$. More precisely, if $\mathcal D$ and $\mathcal D^{(j)}$ differ only in the $j$-th example, then for any $g\in\mathcal G$,
\begin{equation}
    \big|\hat F_{\mathcal D}(g)-\hat F_{\mathcal D^{(j)}}(g)\big|
    = \frac{1}{N}\big|g(\sigma_j,y_j)-g(\tilde\sigma_j,\tilde y_j)\big|
    \le \frac{1}{N}.
\end{equation}
Since the population term \(F(g)\) does not depend on the sample, this implies:
\begin{align*}
    &\big|\Phi(\mathcal D)-\Phi(\mathcal D^{(j)})\big| \\
    = &\left|\sup_{g}(F(g)-\hat F_{\mathcal D}(g))-\sup_{g}(F(g)-\hat F_{\mathcal D^{(j)}}(g))\right| \\
    \le &\sup_{g}\left|\hat F_{\mathcal D}(g)-\hat F_{\mathcal D^{(j)}}(g)\right|
    \le \frac{1}{N},
\end{align*}
so $\Phi$ satisfies the bounded-differences property with constants $c_i=1/N$. Applying McDiarmid's inequality with $\sum_{i=1}^N c_i^2 = N\cdot(1/N)^2 = 1/N$ yields, for any $\epsilon>0$,
\begin{equation}
    \Pr\big(\Phi(\mathcal D)-\mathbb{E}[\Phi(\mathcal D)]\ge \epsilon\big)
    \le \exp\!\big(-2N\epsilon^2\big).
\end{equation}
To obtain a bound that holds with confidence at least $1-\tau/2$, we choose $\epsilon$ such that the right-hand side equals $\tau/2$, i.e.,
$\exp(-2N\epsilon^2)=\tau/2$. Solving for $\epsilon$ gives
$\epsilon=\sqrt{\tfrac{\ln(2/\tau)}{2N}}$. Substituting this into the inequality and using $\mathbb{E}[\Phi(\mathcal D)]\le 2\mathfrak R_N(\mathcal G)$ yields that, with probability at least $1-\tau/2$,
\begin{equation}
    \sup_{g\in\mathcal G} \left(F(g)-\hat F_{\mathcal D}(g)\right)
    \le 2\,\mathfrak R_N(\mathcal G)
    + \sqrt{\frac{\ln(2/\tau)}{2N}}.
\end{equation}
Applying the same argument to the negated function class $-\mathcal G = \{-g : g \in \mathcal G\}$, which has the same Rademacher complexity as $\mathcal G$, provides a symmetric bound for the reverse deviation $\sup_{g\in\mathcal G}\big(\hat F_{\mathcal D}(g)-F(g)\big)$ with the same confidence level. By the union bound, both one-sided inequalities hold simultaneously with probability at least $1-\tau$. Combining these two results yields the desired two-sided uniform generalization bound:
\begin{equation}
    \sup_{g\in\mathcal G}\!\left|F(g)-\hat F_{\mathcal D}(g)\right|
    \le 2\,\mathfrak R_N(\mathcal G)
    + \sqrt{\frac{\ln(2/\tau)}{2N}}.
    \label{eq:bound}
\end{equation}
Note that the above expression is uniform over $\mathcal G$; hence, the pointwise statement in Lemma~\ref{lemma:feasibility-bound-corrected} follows immediately since it holds simultaneously for all $\pi_\theta\in\Pi_\Theta$.
\end{proof}

\section{Experimental Details}

This section provides the experimental details required to reproduce the experiments and results presented in our paper.

\subsection{Task Description}

We conducted our experiments and generated synthetic feedback using the well-established DSRL benchmark~\cite{liu2023datasets}, which offers datasets specifically designed for offline safe RL research. This benchmark includes 38 datasets with various safe RL environments and difficulty levels in SafetyGym~\cite{ray2019benchmarking,ji2023omnisafe}, BulletGym~\cite{gronauer2022bullet}, and MetaDrive~\cite{li2022metadrive}.

\begin{itemize}
    \item \textbf{SafetyGym} is a suite of environments built on the Mujoco physics simulator, with a diverse set of tasks. It features two types of agents, \texttt{Car} and \texttt{Point}, each tasked with four different activities: \texttt{Button}, \texttt{Circle}, \texttt{Goal}, and \texttt{Push}. The difficulty of these tasks is further distinguished by levels, denoted by \texttt{1} and \texttt{2}. In each task, the agents must reach a goal while avoiding hazards and obstacles. Moreover, SafetyGym includes five velocity-constrained tasks for different agents: \texttt{Ant}, \texttt{HalfCheetah}, \texttt{Hopper}, \texttt{Walker2d}, and \texttt{Swimmer}. Figure~\ref{fig:safetygym} illustrates the agents and tasks within SafetyGym.
    \item \textbf{BulletGym} is a collection of environments built using the PyBullet physics simulator. Similar to SafetyGym, it focuses on safety-critical tasks but has shorter time horizons and a wider variety of agents. The suite includes four types of agents: \texttt{Ball}, \texttt{Car}, \texttt{Drone}, and \texttt{Ant}, each with two tasks: \texttt{Circle} and \texttt{Run}. The agents and tasks within BulletGym are shown in Figure~\ref{fig:bulletgym}.
    \item \textbf{MetaDrive} is built on the Panda3D game engine \cite{goslin2004panda3d}, providing complex road conditions and dynamic scenarios that closely emulate real-world driving, making it well-suited for evaluating safe RL algorithms in high-stakes, realistic environments. The environments feature three road types—\texttt{easy}, \texttt{medium}, and \texttt{hard}—each with varying levels of surrounding traffic: \texttt{sparse}, \texttt{mean}, and \texttt{dense}. Figure \ref{fig:metadrive} illustrates examples of the tasks in MetaDrive.
\end{itemize}

\begin{figure*}[h]
    \centering
    \subfloat[SafetyGym]{
        \includegraphics[height=5cm]{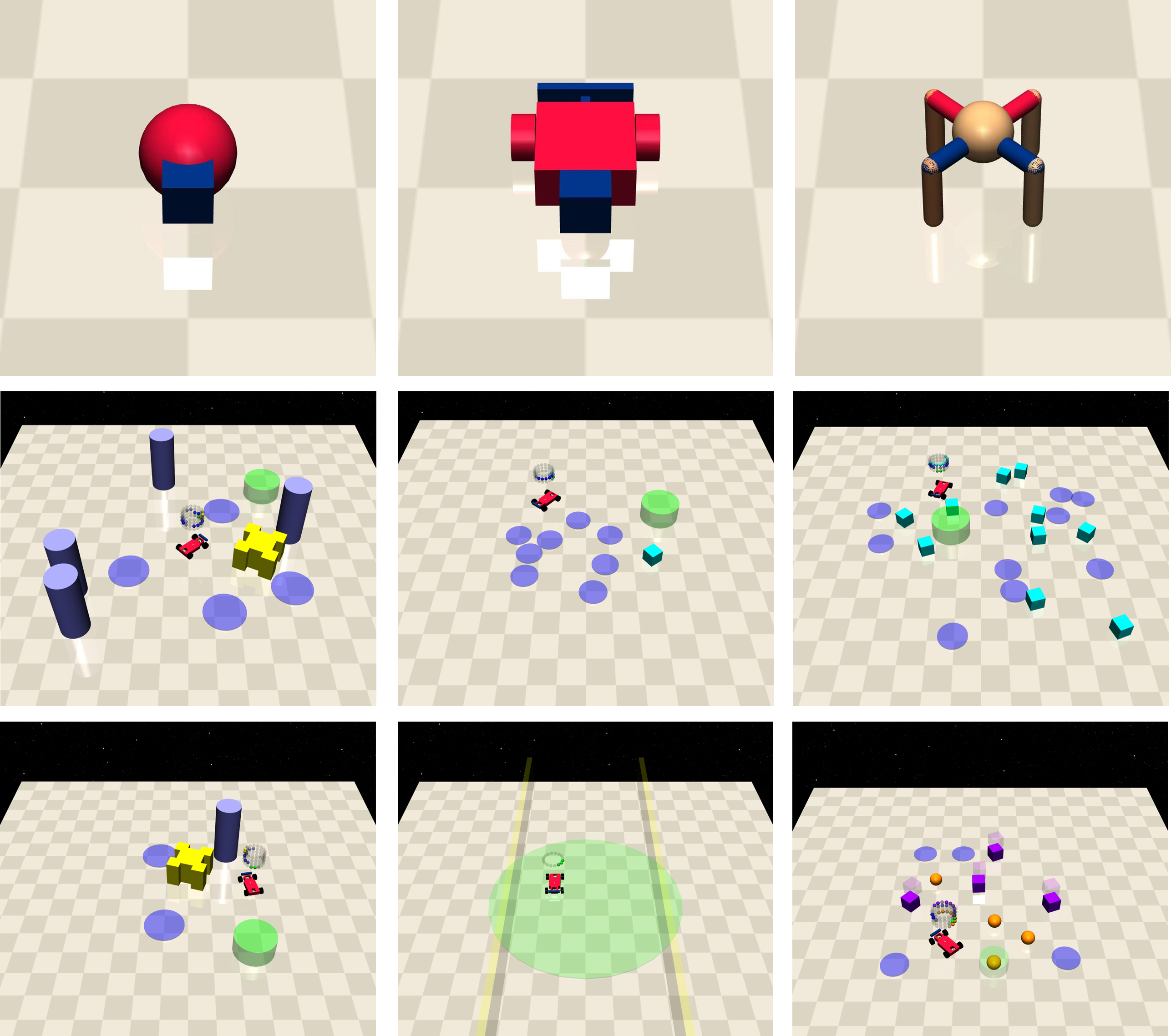}%
        \label{fig:safetygym}
    }
    \qquad
    \subfloat[BulletGym]{
        \includegraphics[height=5cm]{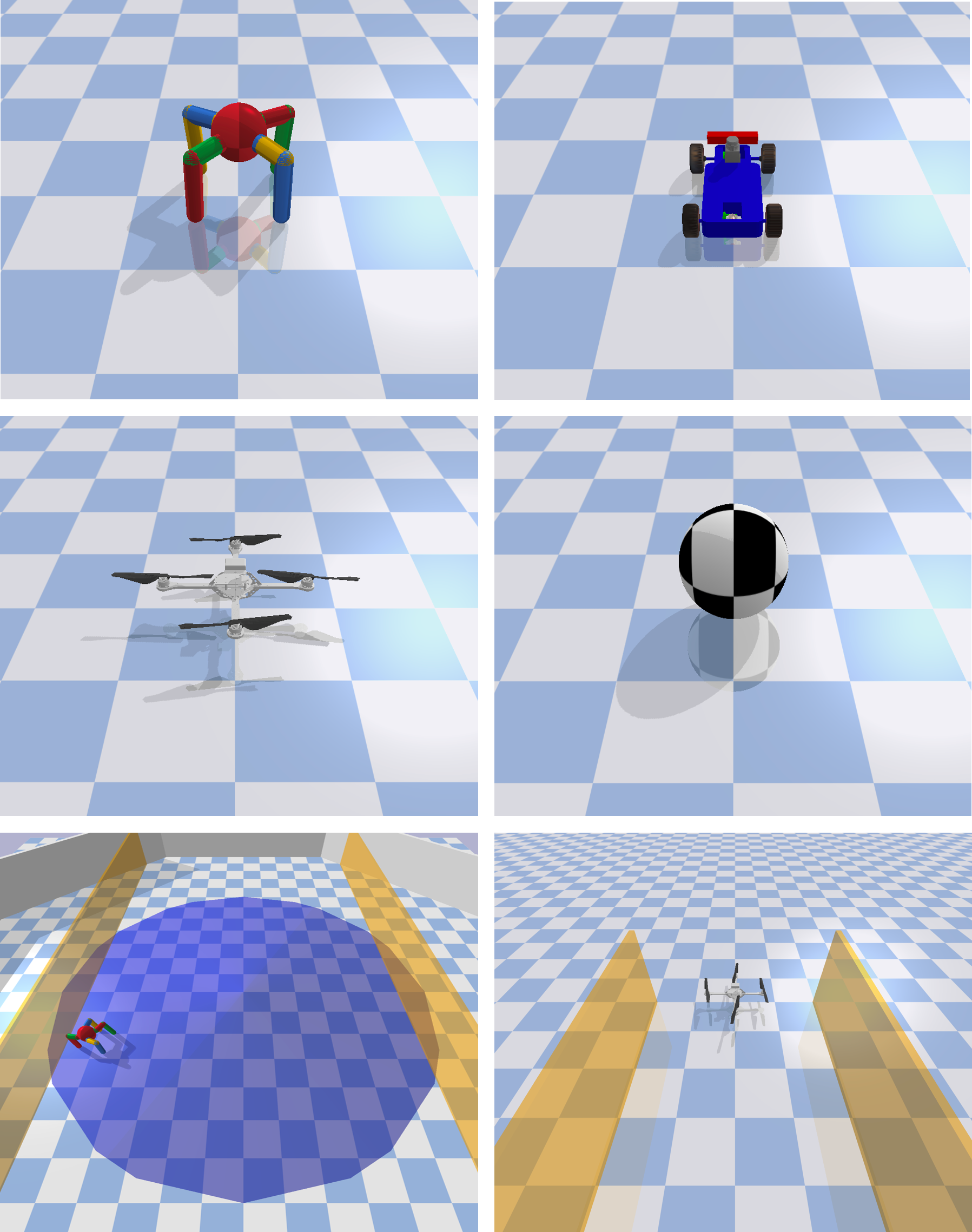}
        \label{fig:bulletgym}
    }
    \qquad
    \subfloat[MetaDrive]{
        \includegraphics[height=5cm]{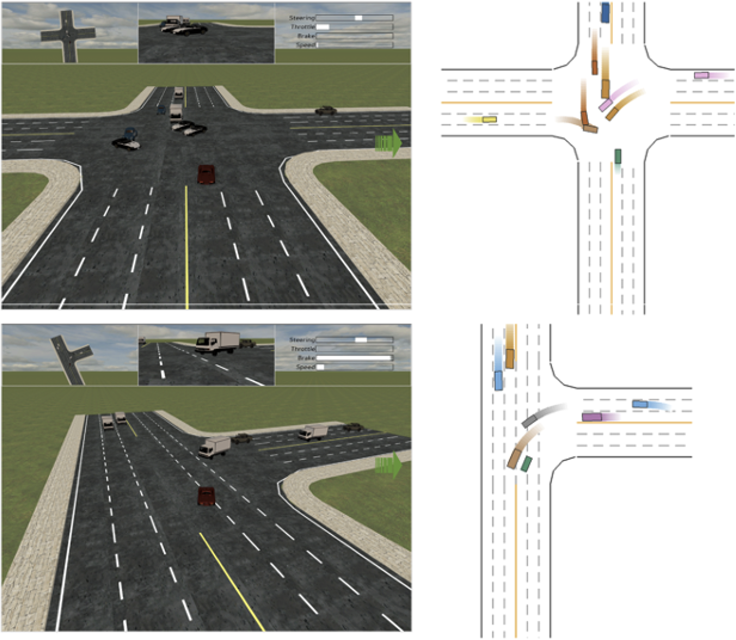}
        \label{fig:metadrive}
    }
    \caption{Visualization of agents and tasks in SafetyGym, BulletGym, and MetaDrive.}
\end{figure*} 

\subsection{Synthetic Feedback}

To evaluate our approach in the Offline Safe POHF setting, we generate synthetic feedback consisting of pairwise preferences regarding agent behavior and safety labels indicating whether the agent's actions are safe or not. For pairwise preferences, we follow the method used in CPL~\cite{hejnacontrastive}, assuming that the human model follows a regret-based preference framework. Given that negated regret is equivalent to the discounted sum of optimal advantages, we provide preference feedback based on the cumulative advantages of the compared trajectory segments. However, since obtaining optimal advantages requires running RL for each task, we simplify the process by using cumulative rewards instead. This is because optimal advantages are essentially a reshaped version of the reward, especially in the dense reward setting, leading to the same optimal policy~\cite{hejnacontrastive}.

For the binary safety labels, we use the ground truth cost data from the original offline dataset to assess the safety of each trajectory segment. Safety labels are then assigned based on predefined cost thresholds, which are generally defined for the entire task or trajectory. To label individual segments (which are parts of full trajectories), we proportionally adjust the cost threshold according to the segment’s length relative to the maximum trajectory length in each domain.

\subsection{Evaluation Metrics}

To assess the algorithm's performance, we adopt the evaluation methodology from DSRL~\cite{liu2023datasets}, using normalized reward and normalized cost as metrics. The normalized reward is defined as follows:
\begin{equation}
    R_\text{normalized} = \frac{R_\pi - r_\text{min}}{r_\text{max} - r_\text{min}} \nonumber
\end{equation}
where $R_\pi$ is the cumulative reward under policy $\pi$, and $r_\text{max}$ and $r_\text{min}$ denote the empirical maximum and minimum reward returns. The normalized cost is represented as:
\begin{equation}
    C_\text{normalized} = \frac{C_\pi + \epsilon}{\kappa + \epsilon} \nonumber
\end{equation}
where $C_\pi$ is the cumulative cost under policy $\pi$. The cost threshold is given by $\kappa$, and $\epsilon$ is a small positive constant added to ensure numerical stability when $\kappa=0$. According to the DSRL benchmark, a task is considered safe if the normalized cost does not exceed 1.

\subsection{Training Details and Hyperparameters}

Our approach follows a two-step training process. In the first step, we pretrain the policy using behavior cloning (BC) on all trajectory segments, establishing a reference policy, denoted as $\pi_\text{ref}$, which will later regulate the learning policy $\pi$. In the second step, we optimize the policy by applying \textsc{PreSa} with offline human feedback.

The hyperparameters used in the experiments are summarized in Table~\ref{tab:hyperparameters}. We assume all policies to be Gaussian with a fixed variance. Actions are predicted using a standard multi-layer perceptron (MLP), and the log probability $\log \pi(a|s)$ is calculated as $-| \pi(s) - a |^2_2$, following the implementation design from CPL. The policy networks are structured with two hidden layers, each containing 256 hidden units and employing ReLU activation functions, with dropout applied. Different values for the hyperparameters in \textsc{PreSa} are used depending on the specific domain.

\begin{table*}
  \centering
  \caption{Hyperparameters of \textsc{PreSa} for tasks in two domains.}
  \begin{tabular}{cccc}
    \toprule
    Hyperparameters &   BulletGym   & SafetGym     \\
    \midrule
    Training Steps   &   100k    &   200k    \\
    Pretraining Steps   &   30k    &   60k    \\
    Batch Size  &  32    &   96      \\
    Policy network architecture   &   [256, 256] MLP    &   [256, 256] MLP  \\
    Policy network dropout   &   0.1    &   0.25    \\
    Optimizer   &   Adam    &   Adam    \\
    Policy learning Rate   &   0.0001    &   0.0001    \\
    Temperature $\alpha$  &  0.2    &   0.2      \\
    Temperature $\beta$  &  1.0    &   0.2      \\
    Discount factor $\gamma$  &  1.0    &   1.0      \\
    Balancing factor $\eta$  &  0.1    &   2.0      \\
    Constraint lower bound $\delta$   &   0.95    &   0.9    \\
    Lagrangian multiplier learning rate   &   0.005    &   0.0005    \\
    \bottomrule
  \end{tabular}
  \label{tab:hyperparameters}
\end{table*}

\begin{figure*}[ht]
    \centering
    \subfloat[Intersection]{
        \includegraphics[height=4cm]{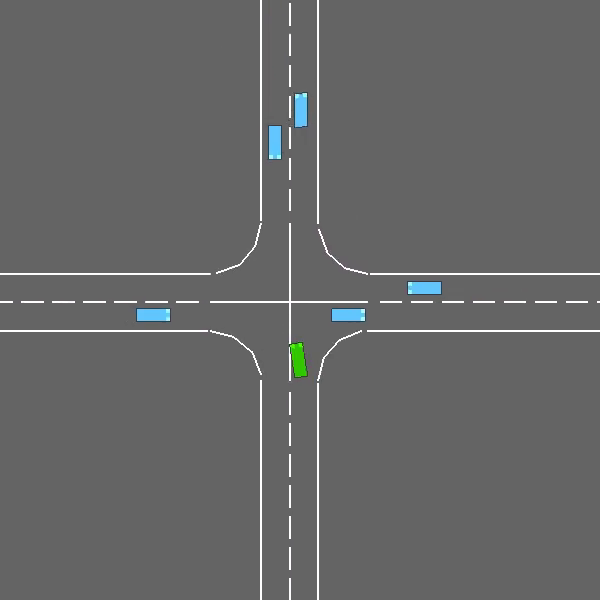}%
        \label{fig:highway-intersection}
    }
    \qquad
    \subfloat[U-Turn]{
        \includegraphics[height=3cm]{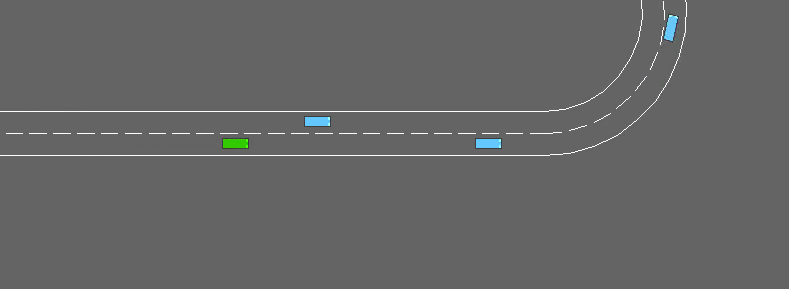}
        \label{fig:highway-uturn}
    }
    \caption{Visualization of autonomous driving tasks in Highway environment.}
\end{figure*} 

\subsection{Real Human Experimental Details}

We perform real human experiments on two autonomous driving tasks in Highway environment~\cite{highway-env}, specifically intersection and U-turn tasks.
\begin{itemize}
    \item \textbf{Intersection}: In this task, the ego-vehicle must navigate through a typical intersection, driving as quickly as possible to reach the destination while avoiding collisions. As illustrated in Figure~\ref{fig:highway-intersection}, The ego-vehicle is depicted in green, while other vehicles are shown in blue.
    \item \textbf{U-Turn}: As visualized in Figure~\ref{fig:highway-uturn}, the ego-vehicle approaches a U-turn, requiring lane changes and the ability to drive as fast as possible to reach the destination while avoiding collisions.
\end{itemize}

To gather offline human feedback, we collect data and feedback in two steps, following the data and feedback collection pipelines introduced in~\cite{liu2023datasets} and~\cite{kimpreference, hejna2024inverse, hejnacontrastive}, respectively.:
\begin{itemize}
    \item \textbf{Step 1 (Data Collection)}: To create a diverse set of datasets, we train a group of policies using different cost functions and save their intermediate checkpoint models. This process generates a collection of raw data that encompasses various task scenarios. From this pool, we then uniformly select pairs of trajectory segments to build the pairwise comparison dataset.
    \item \textbf{Step 2 (Feedback Collection)}: Each pair of trajectory segments is visualized and presented to a human labeler, who provides three types of feedback: pairwise preferences based on task completion, binary safety labels, and "general preferences" that consider both task performance (reward) and safety (cost).
\end{itemize}

To minimize the cognitive impact of feedback from one type influencing the labeler’s response to another, we collect each type of feedback separately in individual runs. During feedback collection, the unlabeled data is shuffled and presented in a random order for each feedback type. Furthermore, to ensure a fair evaluation, we randomly select trajectories from different testing approaches during the evaluation phase. This ensures that human evaluators are unaware of the specific approach from which the trajectories were sampled, helping to reduce potential biases that could arise if the evaluators knew the source of the trajectories.

\section{Extended Results}

In this section, we provide complete evaluation results as reported in Table~\ref{tab:complete_result} and supplementary results, including individual performance of preference alignment and safety alignment modules, \textsc{PreSa}'s performance across various segment lengths and dataset sizes, its performance with noisy safety feedback, the proportion of safe/unsafe trajectories under different threshold and the ablation study on segment weight $w(y_\sigma)$, comprehensive ablation studies on several main hyperparameters (such as $\alpha$, $\beta$, $\eta$, and $\delta$), and the learning curves of \textsc{PreSa} for all 29 tasks.

\begin{table*}[t]
  \caption{All evaluation results of normalized reward and cost. The $\uparrow$ symbol indicates that the higher reward, the better, while the $\downarrow$ symbol signifies that the lower cost (up to a threshold of $1$), the better. 
  \textbf{Bold}: Safe agents whose normalized cost is below 1. \textcolor{blue}{\textbf{Blue}}: Safe agent with the highest reward among offline safe RL baselines. \textcolor{orange}{\textbf{Orange}}: Safe agent with the highest reward among approaches learning from offline synthetic feedback. \textcolor{gray}{\textbf{Gray}}: Unsafe agent.}
  \resizebox{\textwidth}{!}{
  \begin{tabular}{ccccccc|cccccccccc}
    \toprule
    \multirow{2}[2]{*}{Task} & \multicolumn{6}{c}{Offline Safe RL (w/ ground truth rewards and costs)}  &  \multicolumn{8}{c}{Offline Safe POHF} \\
    \cmidrule(lr){2-7}
    \cmidrule(lr){8-15} 
    & \multicolumn{2}{c}{BC-All} & \multicolumn{2}{c}{BC-Safe} & \multicolumn{2}{c}{CDT} & \multicolumn{2}{c}{Binary Alignment} & \multicolumn{2}{c}{BC-Safe-Seg} & \multicolumn{2}{c}{Safe-RLHF (CDT)} & \multicolumn{2}{c}{\textsc{PreSa} (Ours)}  \\ 
    \cmidrule(lr){2-3}
    \cmidrule(lr){4-5}
    \cmidrule(lr){6-7}
    \cmidrule(lr){8-9}
    \cmidrule(lr){10-11}
    \cmidrule(lr){12-13}
    \cmidrule(lr){14-15}
      &   reward$\uparrow$    &   cost$\downarrow$ &   reward$\uparrow$    &   cost$\downarrow$  &   reward$\uparrow$    &   cost$\downarrow$ &   reward$\uparrow$    &   cost$\downarrow$  &   reward$\uparrow$    &   cost$\downarrow$    &   reward$\uparrow$    &   cost$\downarrow$    &   reward$\uparrow$    &   cost$\downarrow$    \\
    \midrule
    PointButton1    &   $\textcolor{gray}{0.1}$   &   $\textcolor{gray}{1.05}$   &   $\bm{0.06}$   &   $\bm{0.52}$   &   $\textcolor{gray}{0.53}$   &   $\textcolor{gray}{1.68}$   &   \textbf{0.02}   &   \textbf{0.55}    &    \textbf{0.06}    &   \textbf{0.81}    & \textbf{0.06}    &   \textbf{0.78}    &   $\textcolor{orange}{\bm{0.09}}$    &   $\textcolor{orange}{\bm{0.84}}$    \\
    PointButton2    &   $\textcolor{gray}{0.27}$    &   $\textcolor{gray}{2.02}$    &   $\textcolor{gray}{0.16}$    &   $\textcolor{gray}{1.1}$ &   $\textcolor{gray}{0.46}$    &   $\textcolor{gray}{1.57}$    &   \textcolor{orange}{\textbf{-0.03}}   &   \textcolor{orange}{\textbf{0.5}}    &    \textcolor{gray}{0.22}    &   \textcolor{gray}{1.57}    & \textcolor{gray}{0.18}    &   \textcolor{gray}{1.33}    &   $\bm{-0.1}$   &   $\bm{0.74}$    \\
    PointCircle1    &   $\textcolor{gray}{0.79}$    &   $\textcolor{gray}{3.98}$    &   $\bm{0.41}$    &   $\bm{0.16}$    &   $\textcolor{blue}{\bm{0.59}}$    &   $\textcolor{blue}{\bm{0.69}}$    &   \textcolor{gray}{-0.23}   &   \textcolor{gray}{1.21}    &   \textcolor{gray}{0.32}    &   \textcolor{gray}{1.09}    &   \textcolor{gray}{0.37}    &   \textcolor{gray}{2.97}    &   \textcolor{orange}{\textbf{0.4}} &   \textcolor{orange}{\textbf{0.21}} \\
    PointCircle2    &   $\textcolor{gray}{0.66}$    &   $\textcolor{gray}{4.17}$    &   $\bm{0.48}$    &   $\bm{0.99}$    &   $\textcolor{gray}{0.64}$    &   $\textcolor{gray}{1.05}$    &   \textcolor{gray}{-0.24}   &   \textcolor{gray}{8.3}    &   \textcolor{gray}{0.44}    &   \textcolor{gray}{1.89}    &  \textcolor{gray}{0.66}    &   \textcolor{gray}{4.87}    &   \textcolor{orange}{\textbf{0.16}} &   \textcolor{orange}{\textbf{0.96}}    \\
    PointGoal1  &   $\bm{0.65}$    &   $\bm{0.95}$    &   $\bm{0.43}$    &   $\bm{0.54}$    &   $\textcolor{gray}{0.69}$    &   $\textcolor{gray}{1.12}$    &   \textbf{0.31}    &   \textbf{0.42}    &   \textcolor{gray}{0.48}    &   \textcolor{gray}{1.17}    & \textbf{0.34}    &   \textbf{0.52}    &   \textcolor{orange}{\textbf{0.37}}    &   \textcolor{orange}{\textbf{0.73}}  \\
    PointGoal2  &   $\textcolor{gray}{0.54}$    &   $\textcolor{gray}{1.97}$    &   $\bm{0.29}$    &   $\bm{0.78}$    &   $\textcolor{gray}{0.59}$    &   $\textcolor{gray}{1.34}$    &   \textcolor{gray}{0.39}   &   \textcolor{gray}{1.15}    &   \textcolor{gray}{0.52}  &   \textcolor{gray}{2.08}    &    \textcolor{gray}{0.35}    &   \textcolor{gray}{2.5} &   \textcolor{orange}{\textbf{0.16}}    &   \textcolor{orange}{\textbf{0.96}}    \\
    PointPush1  &   $\bm{0.19}$    &   $\bm{0.61}$    &   $\bm{0.13}$    &   $\bm{0.43}$    &   $\bm{0.24}$    &   $\bm{0.48}$    &   \textbf{0.14}   &   \textbf{0.51}    &   \textcolor{orange}{\textbf{0.19}} &   \textcolor{orange}{\textbf{0.6}} &   \textbf{0.1} &   \textbf{0.36}    &   \textbf{0.14}    &   \textbf{0.4}  \\
    PointPush2  &   $\bm{0.18}$    &   $\bm{0.91}$    &   $\bm{0.11}$    &   $\bm{0.8}$ &   $\bm{0.21}$    &   $\bm{0.65}$    &   \textcolor{gray}{0.17}   &   \textcolor{gray}{1.69}    &   \textcolor{orange}{\textbf{0.18}}    &   \textcolor{orange}{\textbf{0.8}}    &   \textbf{0.08}    &   \textbf{0.22}    &   \textbf{0.12}    &   \textbf{0.9}  \\
    CarButton1  &   $\textcolor{gray}{0.03}$    &   $\textcolor{gray}{1.38}$    &   $\textcolor{blue}{\bm{0.07}}$    &   $\textcolor{blue}{\bm{0.85}}$    &   $\textcolor{gray}{0.21}$    &   $\textcolor{gray}{1.6}$ &   \textcolor{gray}{-0.01}   &   \textcolor{gray}{2.52}    &   \textcolor{gray}{0.02}    &   \textcolor{gray}{1.42}    &   \textcolor{gray}{0.05}    &   \textcolor{gray}{3.96}    &   \textcolor{gray}{0.12}    &   \textcolor{gray}{1.87}   \\
    CarButton2  &   $\textcolor{gray}{-0.13}$   &   $\textcolor{gray}{1.24}$    &   $\textcolor{blue}{\bm{-0.01}}$   &   $\textcolor{blue}{\bm{0.63}}$    &   $\textcolor{gray}{0.13}$    &   $\textcolor{gray}{1.58}$    &   \textcolor{gray}{-0.06}   &   \textcolor{gray}{1.36}    &   \textcolor{gray}{-0.03}   &   \textcolor{gray}{1.01}    & \textcolor{gray}{0.02}  & \textcolor{gray}{1.77}    &   \textcolor{gray}{-0.04}   &   \textcolor{gray}{1.27}    \\
    CarCircle1  &   $\textcolor{gray}{0.72}$    &   $\textcolor{gray}{4.39}$    &   $\textcolor{gray}{0.37}$    &   $\textcolor{gray}{1.38}$    &   $\textcolor{gray}{0.6}$ &   $\textcolor{gray}{1.73}$    &   \textcolor{gray}{-0.32}   &   \textcolor{gray}{4.71}    &   \textcolor{gray}{0.61}    &   \textcolor{gray}{4.53}    &   \textcolor{gray}{0.27}    &   \textcolor{gray}{3.53}    &   \textcolor{gray}{-0.26}    &   \textcolor{gray}{2.86}\\
    CarCircle2  &   $\textcolor{gray}{0.76}$    &   $\textcolor{gray}{6.44}$    &   $\textcolor{gray}{0.54}$    &   $\textcolor{gray}{3.38}$    &   $\textcolor{gray}{0.66}$    &   $\textcolor{gray}{2.53}$    &   \textbf{-0.23}   &   \textbf{0.0}   &   \textcolor{gray}{0.63}    &   \textcolor{gray}{4.23}    &   \textcolor{gray}{0.5} &   \textcolor{gray}{3.91}    &   \textcolor{orange}{\textbf{0.23}}    &   \textcolor{orange}{\textbf{0.22}}  \\
    CarGoal1    &   $\bm{0.39}$    &   $\bm{0.33}$    &   $\bm{0.24}$    &   $\bm{0.28}$    &   $\textcolor{gray}{0.66}$    &   $\textcolor{gray}{1.21}$    &   \textbf{0.29}    &   \textbf{0.38}    &   \textbf{0.25}    &   \textbf{0.3} &   \textcolor{orange}{\textbf{0.4}}    &   \textcolor{orange}{\textbf{0.61}}    &   \textbf{0.26}    &   \textbf{0.14}    \\
    CarGoal2    &   $\textcolor{gray}{0.23}$    &   $\textcolor{gray}{1.05}$    &   $\bm{0.14}$    &   $\bm{0.51}$    &   $\textcolor{gray}{0.48}$    &   $\textcolor{gray}{1.25}$    &   \textcolor{orange}{\textbf{0.18}}   &   \textcolor{orange}{\textbf{0.64}}    &   \textcolor{gray}{0.17} &   \textcolor{gray}{1.03}    &   \textcolor{gray}{0.18}    &   \textcolor{gray}{1.01}    &   \textbf{0.14}   &   \textbf{0.35}  \\
    CarPush1    &   $\bm{0.22}$    &   $\bm{0.36}$    &   $\bm{0.14}$    &   $\bm{0.33}$    &   $\textcolor{blue}{\bm{0.31}}$    &   $\textcolor{blue}{\bm{0.4}}$ &   \textbf{0.16}   &   \textbf{0.34} &   \textcolor{orange}{\textbf{0.21}}    &   \textcolor{orange}{\textbf{0.51}}    & \textbf{0.17}    &   \textbf{0.96}    &   \textbf{0.15}    &   \textbf{0.56}  \\
    CarPush2    &   $\textcolor{blue}{\bm{0.14}}$    &   $\textcolor{blue}{\bm{0.9}}$ &   $\bm{0.05}$    &   $\bm{0.45}$    &   $\textcolor{gray}{0.19}$    &   $\textcolor{gray}{1.3}$ &   \textbf{0.07}   &   \textbf{0.69} &   \textbf{0.07}    &   \textbf{0.91}    &   \textcolor{gray}{0.1} &   \textcolor{gray}{1.81}    &   \textcolor{orange}{\textbf{0.1}}    &   \textcolor{orange}{\textbf{0.52}}    \\
    SwimmerVelocity &   $\textcolor{gray}{0.49}$    &   $\textcolor{gray}{4.72}$    &   $\textcolor{gray}{0.51}$    &   $\textcolor{gray}{1.07}$    &   $\textcolor{blue}{\bm{0.66}}$    &   $\textcolor{blue}{\bm{0.96}}$    &   \textcolor{orange}{\textbf{-0.04}}   &   \textcolor{orange}{\textbf{0.7}}    &   \textcolor{gray}{0.33}  &   \textcolor{gray}{2.61}    &  \textcolor{gray}{0.66}    &   \textcolor{gray}{1.1} &   \textcolor{gray}{0.39}    &   \textcolor{gray}{1.96}    \\
    HopperVelocity  &  $\textcolor{gray}{0.65}$    &   $\textcolor{gray}{6.39}$    &   $\bm{0.36}$    &   $\bm{0.67}$    &   $\textcolor{blue}{\bm{0.63}}$    &   $\textcolor{blue}{\bm{0.61}}$    &   \textbf{-0.02}   &   \textbf{0.0} &   \textcolor{orange}{\textbf{0.64}}    &   \textcolor{orange}{\textbf{0.64}}    &   \textcolor{gray}{0.17}    &   \textcolor{gray}{1.27}    &   \textcolor{gray}{0.42}    &   \textcolor{gray}{5.89} \\
    HalfCheetahVelocity &   $\textcolor{gray}{0.97}$    &   $\textcolor{gray}{13.1}$    &   $\bm{0.88}$    &   $\bm{0.54}$    &   $\textcolor{blue}{\bm{1.0}}$ &   $\textcolor{blue}{\bm{0.01}}$    &   \textbf{0.05}   &   \textbf{0.0} &   \textbf{0.92}    &   \textbf{0.54}    &    \textcolor{orange}{\textbf{0.93}}    &   \textcolor{orange}{\textbf{0.37}}    &   \textcolor{gray}{0.71}    &   \textcolor{gray}{4.11}    \\
    Walker2dVelocity    &   $\textcolor{gray}{0.79}$    &   $\textcolor{gray}{3.88}$    &   $\textcolor{blue}{\bm{0.79}}$    &   $\textcolor{blue}{\bm{0.04}}$    &   $\bm{0.78}$    &   $\bm{0.06}$    &   \textbf{-0.01}   &   \textbf{0.0} &   \textbf{0.78}    &   \textbf{0.01}    &   \textcolor{gray}{0.11}    &   \textcolor{gray}{1.42}    &   \textcolor{orange}{\textbf{0.79}}    &   \textcolor{orange}{\textbf{0.0}}   \\
    AntVelocity &   $\textcolor{gray}{0.98}$    &   $\textcolor{gray}{3.72}$    &   $\textcolor{blue}{\bm{0.98}}$    &   $\textcolor{blue}{\bm{0.29}}$    &   $\bm{0.98}$    &   $\bm{0.39}$    &   \textbf{-0.06}   &   \textbf{0.0}    &   \textcolor{orange}{\textbf{0.96}}  &   \textcolor{orange}{\textbf{0.3}}  & \textbf{0.93}    &   \textbf{0.23}    &   \textcolor{orange}{\textbf{0.96}}    &   \textcolor{orange}{\textbf{0.27}}  \\
    \midrule
    \textbf{SafetyGym Average}   &   $\textcolor{gray}{0.46}$    &   $\textcolor{gray}{3.03}$    &   $\bm{0.34}$    &   $\bm{0.75}$    &   $\textcolor{gray}{0.54}$    &   $\textcolor{gray}{1.06}$    &   \textcolor{gray}{0.03}   &   \textcolor{gray}{1.22}    &   \textcolor{gray}{0.38}   &   \textcolor{gray}{1.34}    &  \textcolor{gray}{0.32}    &   \textcolor{gray}{1.7} &   \textcolor{gray}{0.25}    &   \textcolor{gray}{1.23}  \\
    \midrule
    BallRun &   $\textcolor{gray}{0.6}$ &   $\textcolor{gray}{5.08}$    &   $\textcolor{gray}{0.27}$    &   $\textcolor{gray}{1.46}$    &   $\textcolor{gray}{0.39}$    &   $\textcolor{gray}{1.16}$    &   \textcolor{gray}{0.31}   &   $\textcolor{gray}{4.79}$   &  $\textcolor{gray}{0.37}$    &   $\textcolor{gray}{1.13}$    &   $\textcolor{gray}{0.35}$    &   $\textcolor{gray}{1.65}$  &   $\textcolor{orange}{\bm{0.19}}$    &   $\textcolor{orange}{\bm{0.09}}$   \\
    CarRun  &   $\bm{0.97}$    &   $\bm{0.33}$    &   $\bm{0.94}$    &   $\bm{0.22}$    &   $\textcolor{blue}{\bm{0.99}}$    &   $\textcolor{blue}{\bm{0.65}}$    &   \textbf{0.94}   &    \textbf{0.0}   &   \textcolor{orange}{\textbf{0.97}}    &   \textcolor{orange}{\textbf{0.95}}    &   \textcolor{gray}{0.87}    &   \textcolor{gray}{1.16}  &   $\bm{0.96}$    &   $\bm{0.0}$   \\
    DroneRun    &   $\textcolor{gray}{0.24}$    &   $\textcolor{gray}{2.13}$    &   $\bm{0.28}$    &   $\bm{0.74}$    &   $\textcolor{blue}{\bm{0.63}}$    &   $\textcolor{blue}{\bm{0.79}}$    &   $\bm{0.11}$   &   $\bm{0.17}$   &   $\textcolor{gray}{0.17}$    &   $\textcolor{gray}{5.97}$    &   $\textcolor{gray}{0.47}$    &   $\textcolor{gray}{3.12}$    &   $\textcolor{orange}{\bm{0.16}}$    &   $\textcolor{orange}{\bm{0.33}}$    \\
    AntRun  &   $\textcolor{gray}{0.7}$   &   $\textcolor{gray}{2.93}$    &   $\textcolor{gray}{0.65}$    &   $\textcolor{gray}{1.09}$    &   $\textcolor{blue}{\bm{0.72}}$    &   $\textcolor{blue}{\bm{0.91}}$    &   \textbf{0.09}   &    \textbf{0.01}   &   \textcolor{gray}{0.66}    &   \textcolor{gray}{1.38}    &   \textcolor{gray}{0.72} &   \textcolor{gray}{1.04}    &   \textcolor{orange}{\textbf{0.61}}    &   \textcolor{orange}{\textbf{0.63}}  \\
    BallCircle  &   $\textcolor{gray}{0.74}$    &   $\textcolor{gray}{4.71}$    &   $\bm{0.52}$    &   $\bm{0.65}$    &   $\textcolor{gray}{0.77}$    &   $\textcolor{gray}{1.07}$    &   $\bm{0.06}$   &    $\bm{0.24}$   &  $\textcolor{orange}{\bm{0.39}}$    &   $\textcolor{orange}{\bm{0.68}}$    &   $\textcolor{gray}{0.68}$    &   $\textcolor{gray}{1.2}$    &   $\bm{0.22}$ &   $\bm{0.03}$    \\
    CarCircle   &   $\textcolor{gray}{0.58}$    &   $\textcolor{gray}{3.74}$    &   $\bm{0.5}$ &   $\bm{0.84}$    &   $\textcolor{blue}{\bm{0.75}}$    &   $\textcolor{blue}{\bm{0.95}}$    &   \textbf{0.06}   &   \textbf{0.35}   &  \textcolor{gray}{0.56}    &   \textcolor{gray}{1.76}    &   \textcolor{orange}{\textbf{0.57}}    &   \textcolor{orange}{\textbf{0.84}}    &   \textbf{0.08}    &   \textbf{0.91} \\
    DroneCircle &   $\textcolor{gray}{0.72}$    &   $\textcolor{gray}{3.03}$    &   $\bm{0.56}$    &   $\bm{0.57}$    &   $\textcolor{blue}{\bm{0.63}}$    &   $\textcolor{blue}{\bm{0.98}}$    &   $\textcolor{gray}{-0.23}$   &   $\textcolor{gray}{1.59}$   &   $\textcolor{gray}{0.61}$    &   $\textcolor{gray}{1.9}$  &   $\textcolor{orange}{\bm{0.61}}$    &   $\textcolor{orange}{\bm{0.87}}$  &   $\bm{0.54}$    &   $\bm{0.72}$    \\
    AntCircle   &   $\textcolor{gray}{0.58}$    &   $\textcolor{gray}{4.9}$ &   $\bm{0.4}$ &   $\bm{0.96}$    &   $\textcolor{gray}{0.54}$    &   $\textcolor{gray}{1.78}$    &   \textcolor{gray}{0.48}   &    \textcolor{gray}{3.03}   &    \textcolor{gray}{0.54}    &   \textcolor{gray}{3.15}    &   \textcolor{gray}{0.45}    &   \textcolor{gray}{2.04}    &   \textcolor{gray}{0.55}    &   \textcolor{gray}{3.78} \\
    \midrule
    \textbf{BulletGym Average}   &   $\textcolor{gray}{0.64}$    &   $\textcolor{gray}{3.36}$    &   $\bm{0.52}$    &   $\bm{0.82}$    &   $\textcolor{gray}{0.68}$    &   $\textcolor{gray}{1.04}$    &   \textcolor{gray}{0.23}   &   \textcolor{gray}{1.27}   &    \textcolor{gray}{0.53}    &   \textcolor{gray}{2.11}    &   \textcolor{gray}{0.59}    &   \textcolor{gray}{1.49}   &   \textcolor{orange}{\textbf{0.41}}    &   \textcolor{orange}{\textbf{0.81}}   \\
    \midrule
    easysparse  &   \textcolor{gray}{0.17}    &   \textcolor{gray}{1.54}    &   \textbf{0.11}    &   \textbf{0.21}    &   \textcolor{blue}{\textbf{0.17}}    &   \textcolor{blue}{\textbf{0.23}}    &   \textbf{-0.04}   &   \textbf{0.06}    &   \textbf{0.01}    &   \textbf{0.1} &   \textcolor{gray}{0.42}    &   \textcolor{gray}{1.5} &   \textcolor{orange}{\textbf{0.31}}    &   \textcolor{orange}{\textbf{0.67}}    \\
    easymean    &   \textcolor{gray}{0.43}    &   \textcolor{gray}{2.82}    &   \textbf{0.04}    &   \textbf{0.29}    &   \textcolor{blue}{\textbf{0.45}}    &   \textcolor{blue}{\textbf{0.54}}    &   \textbf{-0.04}   &   \textbf{0.06}    &   \textbf{0.13}    &   \textbf{0.14}    &   \textcolor{gray}{0.33}    &   \textcolor{gray}{1.37}    &   \textcolor{orange}{\textbf{0.37}}    &   \textcolor{orange}{\textbf{0.6}} \\
    easydense   &   \textcolor{gray}{0.27}    &   \textcolor{gray}{1.94}    &   \textbf{0.11}    &   \textbf{0.14}    &   \textcolor{blue}{\textbf{0.32}}    &   \textcolor{blue}{\textbf{0.62}}    &   \textbf{-0.01}   &   \textbf{0.11}    &   \textbf{0.1} &   \textbf{0.14}    &   \textcolor{gray}{0.43}    &   \textcolor{gray}{1.68}    &   \textcolor{orange}{\textbf{0.21}}    &   \textcolor{orange}{\textbf{0.37}}    \\
    mediumsparse    &   \textcolor{gray}{0.83}    &   \textcolor{gray}{3.34}    &   \textcolor{blue}{\textbf{0.33}}    &   \textcolor{blue}{\textbf{0.3}} &   \textcolor{gray}{0.87}    &   \textcolor{gray}{1.1} & \textbf{0.02}    &   \textbf{0.11}    &   \textbf{0.19}    &   \textbf{0.16}    &   \textcolor{gray}{0.45}    &   \textcolor{gray}{1.12}    &   \textcolor{orange}{\textbf{0.52}}    &   \textcolor{orange}{\textbf{0.54}}    \\
    mediummean  &   \textcolor{gray}{0.77}    &   \textcolor{gray}{2.53}    &   \textbf{0.31}    &   \textbf{0.21}    &   \textcolor{blue}{\textbf{0.45}}    &   \textcolor{blue}{\textbf{0.75}}    &   \textbf{0.03}    &   \textbf{0.09}    &   \textbf{0.09}    &   \textbf{0.17}    &   \textcolor{gray}{0.46}    &   \textcolor{gray}{1.0} &   \textcolor{orange}{\textbf{0.35}}    &   \textcolor{orange}{\textbf{0.43}}    \\
    mediumdense &   \textcolor{gray}{0.45}    &   \textcolor{gray}{1.47}    &   \textcolor{blue}{\textbf{0.24}}    &   \textcolor{blue}{\textbf{0.17}}    &   \textcolor{gray}{0.88}    &   $\textcolor{gray}{2.41}$    & \textbf{-0.0}    &   \textbf{0.07}    &   \textbf{0.16}    &   \textbf{0.18}    &   \textbf{0.45}    &   \textbf{0.94}    &   \textcolor{orange}{\textbf{0.56}}    &   \textcolor{orange}{\textbf{0.52}}    \\
    hardsparse  &   \textcolor{gray}{0.42}    &   \textcolor{gray}{1.8} &   \textcolor{gray}{0.17}    &   \textcolor{gray}{3.25}    &   \textcolor{blue}{\textbf{0.25}}    &   \textcolor{blue}{\textbf{0.41}}    & \textbf{-0.02}   &   \textbf{0.06}    &   \textbf{0.09}    &   \textbf{0.18}    &   \textcolor{gray}{0.27}    &   \textcolor{gray}{1.16}    &   \textcolor{orange}{\textbf{0.24}}    &   \textcolor{orange}{\textbf{0.51}}    \\
    hardmean    &   \textcolor{gray}{0.2} &   \textcolor{gray}{1.77}    &   \textbf{0.13}    &   \textbf{0.4} &   \textcolor{blue}{\textbf{0.33}}    &   \textcolor{blue}{\textbf{0.97}}    &   \textbf{-0.02}   &   \textbf{0.07}    &   \textbf{0.01}    &   \textbf{0.29}    &   \textcolor{gray}{0.29}    &   \textcolor{gray}{1.23}    &   \textcolor{orange}{\textbf{0.19}}    &   \textcolor{orange}{\textbf{0.58}}    \\
    harddense   &   \textcolor{gray}{0.2} &   \textcolor{gray}{1.33}    &   \textcolor{blue}{\textbf{0.15}}    &   \textcolor{blue}{\textbf{0.22}}    &   \textbf{0.08}    &   \textbf{0.21}    &   \textbf{0.05}    &   \textbf{0.16}    &   \textbf{0.08}    &   \textbf{0.19}    &   \textcolor{gray}{0.21}    &   \textcolor{gray}{1.25}    &   \textcolor{orange}{\textbf{0.12}}    &   \textcolor{orange}{\textbf{0.35}}    \\
    \midrule
    \textbf{MetaDrive Average}  &  \textcolor{gray}{0.42}    &   \textcolor{gray}{2.06}    &   \textbf{0.18}    &   \textbf{0.58}    &   \textcolor{blue}{\textbf{0.42}}    &   \textcolor{blue}{\textbf{0.8}} &    \textbf{-0.0}    &   \textbf{0.09}    &   \textbf{0.1} &   \textbf{0.17}    &   \textcolor{gray}{0.37}    &   \textcolor{gray}{1.25}    &   \textcolor{orange}{\textbf{0.32}}    &   \textcolor{orange}{\textbf{0.51}}    \\
    \bottomrule
  \end{tabular}
  }
  \label{tab:complete_result}
\end{table*}

\subsection{PreSa With Varying Segment Lengths and Dataset Sizes}

The comprehensive results of \textsc{PreSa} with varying segment lengths are presented in Table~\ref{tab:complete_varying_seglen}. Overall, \textsc{PreSa} demonstrates strong performance across different segment lengths. However, for certain specific tasks, such as DroneRun and BallCircle, the policies learned under different segment lengths exhibit varying performance.

Table~\ref{tab:complete_varying_datasize} presents the results of \textsc{PreSa} trained with different sizes of offline datasets. Unsurprisingly, when trained on only 100 pairs of trajectory segments, \textsc{PreSa} struggles to learn safe policies. However, as the size of the offline datasets increases, we observe that \textsc{PreSa} trained on a few hundred to thousands segment pairs can achieve performance comparable to that of models trained on 10,000 pairs. This suggests the effectiveness of \textsc{PreSa}.

\begin{table*}[t]
  \centering
  \caption{Performance of \textsc{PreSa} with varying segment lengths}
  \begin{tabular}{ccccccccc}
    \toprule
    \multirow{2}[2]{*}{Segment Length} & \multicolumn{2}{c}{32} & \multicolumn{2}{c}{64} & \multicolumn{2}{c}{128} \\ 
    \cmidrule(lr){2-3}
    \cmidrule(lr){4-5}
    \cmidrule(lr){6-7}
      &   reward$\uparrow$    &   cost$\downarrow$ &   reward$\uparrow$    &   cost$\downarrow$  &   reward$\uparrow$    &   cost$\downarrow$    \\
    \midrule
    BallRun &   \textbf{0.26} &   \textbf{1.12}    &   \textbf{0.19}    &   \textbf{0.09}    &   -    &   -   \\
    CarRun  &   \textbf{0.96}    &   \textbf{0.0}    &   \textbf{0.96}    &   \textbf{0.0}    &   \textbf{0.96}    &   \textbf{0.0}   \\
    DroneRun    &   \textbf{0.17}    &   \textbf{0.18}    &   \textbf{0.16}    &   \textbf{0.33}    &   \textbf{-0.01}    &   \textbf{0.0} \\
    AntRun  &   \textbf{0.64}    &   \textbf{0.79}    &   \textbf{0.61}    &   \textbf{0.63}    &   0.69    &   1.26   \\
    BallCircle  &   \textbf{0.36}    &   \textbf{0.25}    &   \textbf{0.22}    &   \textbf{0.03}    &   \textbf{0.22}    &   \textbf{0.05}  \\
    CarCircle   &   0.11    &   1.29    &   \textbf{0.08} &   \textbf{0.91}    &   \textbf{0.07}    &   \textbf{0.79}   \\
    DroneCircle &   \textbf{0.58}    &   \textbf{0.99}    &   \textbf{0.54}    &   \textbf{0.72}    &   \textbf{0.51}    &   \textbf{0.32}  \\
    AntCircle   &   0.56    &   3.88    &   0.55 &   3.78    &   0.57    &   3.88    \\
    \midrule
    \textbf{BulletGym Average}   &   0.45    &   1.06    &   \textbf{0.41}    &   \textbf{0.81}    &   \textbf{0.43}    &   \textbf{0.9}    \\
    \bottomrule
  \end{tabular}
  \label{tab:complete_varying_seglen}
\end{table*}

\begin{table*}[t]
  \centering
  \caption{Performance of \textsc{PreSa} with varying offline dataset sizes}
  \begin{tabular}{ccccccccccc}
    \toprule
    \multirow{2}[2]{*}{Offline Data} & \multicolumn{2}{c}{100 pairs} & \multicolumn{2}{c}{500 pairs} & \multicolumn{2}{c}{2000 pairs} & \multicolumn{2}{c}{5000 pairs} & \multicolumn{2}{c}{10000 pairs} \\ 
    \cmidrule(lr){2-3}
    \cmidrule(lr){4-5}
    \cmidrule(lr){6-7}
    \cmidrule(lr){8-9}
    \cmidrule(lr){10-11}
      &   reward$\uparrow$    &   cost$\downarrow$ &   reward$\uparrow$    &   cost$\downarrow$ &   reward$\uparrow$    &   cost$\downarrow$ &   reward$\uparrow$    &   cost$\downarrow$  &   reward$\uparrow$    &   cost$\downarrow$    \\
    \midrule
    BallRun &   0.27    &   1.67    &   \textbf{0.21}    &   \textbf{0.23}    &   \textbf{0.19}    &   \textbf{0.14}    &   \textbf{0.19} &   \textbf{0.14}    &   \textbf{0.19}    &   \textbf{0.09}   \\
    CarRun  &   \textbf{0.91}    &   \textbf{0.0}   &   \textbf{0.96}    &    \textbf{0.0}    &   \textbf{0.96}    &   \textbf{0.0}   &   \textbf{0.96}    &   \textbf{0.0}    &   \textbf{0.96}    &   \textbf{0.0}   \\
    DroneRun    &   0.38    &   5.11    &   0.2 &   2.6 &   \textbf{0.19}    &   \textbf{0.52}    &   0.17    &   1.24    &   \textbf{0.16}    &   \textbf{0.33}    \\
    AntRun  &   0.5 &   1.72    &   \textbf{0.61}    &   \textbf{0.95}    &   \textbf{0.64}    &   \textbf{0.88}    &   \textbf{0.59}    &   \textbf{0.61}    &   \textbf{0.61}    &   \textbf{0.63}   \\
    BallCircle  &   \textbf{0.26}    &   \textbf{0.77}    &   \textbf{0.24}    &   \textbf{0.07}    &   \textbf{0.21}    &   \textbf{0.06}    &   \textbf{0.2}    &   \textbf{0.02}    &   \textbf{0.22}    &   \textbf{0.03}  \\
    CarCircle   &   0.26    &   1.55    &   \textbf{0.17}    &   \textbf{0.85}    &   \textbf{0.1}    &   \textbf{0.56}    &   0.09    &   1.18    &   \textbf{0.08} &   \textbf{0.91}   \\
    DroneCircle &   0.28    &   1.61    &   \textbf{0.51}    &   \textbf{0.93}    &   \textbf{0.51}    &   \textbf{0.62}    &   \textbf{0.52}    &   \textbf{0.64}    &   \textbf{0.54}    &   \textbf{0.72}  \\
    AntCircle   &   0.5 &   3.67    &   0.52    &   3.09    &   0.56    &   3.91    &   0.57    &   3.86    &   0.55    &   3.78    \\
    \midrule
    \textbf{BulletGym Average}   &  0.42    &   2.01    &   0.43    &   1.09    &  \textbf{0.42}    &   \textbf{0.84}    &   \textbf{0.41}    &   \textbf{0.96}    &   \textbf{0.41}    &   \textbf{0.81}    \\
    \bottomrule
  \end{tabular}
  \label{tab:complete_varying_datasize}
\end{table*}

\subsection{PreSa with Noisy Feedback}
To evaluate the effectiveness of \textsc{PreSa} under noisy safety feedback, we simulate different levels of noisiness by introducing noise into the safety feedback. At each level, a subset of the feedback is randomly selected, and its True/False labels are flipped. The results, shown in Table~\ref{tab:imperfect_safety}, demonstrate that \textsc{PreSa} consistently outperforms the baselines across varying levels of noisy safety feedback, although its performance gradually declines as the noise level increases. 

Additionally, we conduct experiments with synthetic noisy feedback at varying levels for pairwise preferences. The results, presented in Table~\ref{tab:imperfect_pref}, demonstrate that \textsc{PreSa} consistently outperforms the baselines across different levels of noisy feedback, although as noise levels increase, its performance slightly declines for both noisy pairwise preferences and binary safety labels.

\begin{table*}
  \centering
  \caption{Evaluation results with noisy binary safety labels: The noise level represents the percentage of binary safety labels that are randomly selected for label flipping.}
  \begin{tabular}{ccccccccccc}
    \toprule
    \multirow{2}[2]{*}{Task} & \multirow{2}[2]{*}{Noise Level} & \multicolumn{2}{c}{Binary Alignment} & \multicolumn{2}{c}{BC-Safe-Seg} & \multicolumn{2}{c}{Safe-RLHF (CDT)} & \multicolumn{2}{c}{PreSa (Ours)} \\
    \cmidrule(lr){3-4}
    \cmidrule(lr){5-6}
    \cmidrule(lr){7-8}
    \cmidrule(lr){9-10}
      & &   reward$\uparrow$    &   cost$\downarrow$ &   reward$\uparrow$    &   cost$\downarrow$  &   reward$\uparrow$    &   cost$\downarrow$ &   reward$\uparrow$    &   cost$\downarrow$     \\
    \midrule
    \multirow{4}{*}{BallRun}   & 0\% &   0.31   &   4.79   &  0.37    &   1.13    &   0.35    &   1.65   &   \textcolor{orange}{\textbf{0.19}}    &   \textcolor{orange}{\textbf{0.09}}   \\
       &    10\% &   \textbf{0.2}   &   \textbf{0.04}   &   0.8    &   3.98  &   0.46    &   3.15    &   \textcolor{orange}{\textbf{0.21}}    &   \textcolor{orange}{\textbf{0.37}}    \\
       &    20\% &   \textbf{0.2}   &   \textbf{0.03}   &   0.71    &   3.51  &   0.52    &   2.87    &   \textcolor{orange}{\textbf{0.24}}    &   \textcolor{orange}{\textbf{0.92}}    \\
       &    30\% &   \textcolor{orange}{\textbf{0.2}}   &   \textcolor{orange}{\textbf{0.13}}   &   0.64    &   3.34  &   0.39    &   2.43    &   0.29    &   1.49    \\
    \midrule
    \multirow{4}{*}{BallCircle}   & 0\% &   \textbf{0.06}   &    \textbf{0.24}   &  \textcolor{orange}{\textbf{0.39}}    &   \textcolor{orange}{\textbf{0.68}}    &   0.68    &   1.2    &   \textbf{0.22} &   \textbf{0.03}    \\
       &    10\% &   \textbf{0.14}   &   \textbf{0.03}   &   0.48    &   1.57  &   0.66    &   1.34    &   \textcolor{orange}{\textbf{0.22}}    &   \textcolor{orange}{\textbf{0.07}}    \\
       &    20\% &   \textbf{0.14}   &   \textbf{0.03}   &   0.6    &   2.07  &   0.69    &   1.96    &   \textcolor{orange}{\textbf{0.27}}    &   \textcolor{orange}{\textbf{0.26}}    \\
       &    30\% &   \textbf{0.16}   &   \textbf{0.07}   &   0.53    &   1.52  &   0.68    &   2.21    &   \textcolor{orange}{\textbf{0.32}}    &   \textcolor{orange}{\textbf{0.49}}    \\
    \midrule
    \multirow{4}{*}{DroneRun}   & 0\% &   \textbf{0.11}   &   \textbf{0.17}   &   0.17    &   5.97    &   0.47    &   3.12    &   \textcolor{orange}{\textbf{0.16}}    &   \textcolor{orange}{\textbf{0.33}}    \\
       &    10\% &   \textbf{0.24}   &   \textbf{0.32}   &   \textcolor{orange}{\textbf{0.4}}    &   \textcolor{orange}{\textbf{0.6}}  &   0.6    &   4.0    &   \textbf{0.15}    &   \textbf{0.37}    \\
       &    20\% &   \textcolor{orange}{\textbf{0.25}}   &   \textcolor{orange}{\textbf{0.31}}   &   \textbf{0.22}    &   \textbf{0.82}  &   0.36    &   2.07    &   \textbf{0.14}    &   \textbf{0.44}    \\
       &    30\% &   \textcolor{orange}{\textbf{0.26}}   &   \textcolor{orange}{\textbf{0.32}}   &   0.59    &   1.67  &   \textbf{0.26}    &   \textbf{0.76}    &   \textbf{0.14}    &   \textbf{0.26}    \\
    \midrule
    \multirow{4}{*}{DroneCircle}   & 0\% &   -0.23   &   1.59   &   0.61    &   1.9  &   \textcolor{orange}{\textbf{0.61}}    &   \textcolor{orange}{\textbf{0.87}}    &   \textbf{0.54}    &   \textbf{0.72}    \\
       &    10\% &   0.51   &   1.33   &   0.53    &   1.61  &   0.59    &   1.21    &   \textcolor{orange}{\textbf{0.57}}    &   \textcolor{orange}{\textbf{0.92}}    \\
       &    20\% &   0.53   &   1.37   &   0.71    &   2.79  &   0.58    &   1.44    &   0.59    &   1.2    \\
       &    30\% &   0.54   &   1.49   &   0.72    &   2.78  &   0.59    &   1.4    &   0.61    &   1.41    \\
    \bottomrule
  \end{tabular}
  \label{tab:imperfect_safety}
\end{table*}

\begin{table*}
  \centering
  \caption{Evaluation results with noisy pairwise preferences: The noise level represents the percentage of preferences that are randomly selected for preference flipping. \textbf{Bold}: Safe agents whose normalized cost is below 1. \textcolor{orange}{\textbf{Orange}}: Safe agent with the highest reward among approaches learning from offline synthetic feedback.}
  \begin{tabular}{ccccccccccc}
    \toprule
    \multirow{2}[2]{*}{Task} & \multirow{2}[2]{*}{Noise Level} & \multicolumn{2}{c}{Binary Alignment} & \multicolumn{2}{c}{BC-Safe-Seg} & \multicolumn{2}{c}{Safe-RLHF (CDT)} & \multicolumn{2}{c}{PreSa (Ours)} \\
    \cmidrule(lr){3-4}
    \cmidrule(lr){5-6}
    \cmidrule(lr){7-8}
    \cmidrule(lr){9-10}
      & &   reward$\uparrow$    &   cost$\downarrow$ &   reward$\uparrow$    &   cost$\downarrow$  &   reward$\uparrow$    &   cost$\downarrow$ &   reward$\uparrow$    &   cost$\downarrow$     \\
    \midrule
    \multirow{4}{*}{BallRun}   & 0\% &   0.31   &   4.79   &  0.37    &   1.13    &   0.35    &   1.65   &   \textcolor{orange}{\textbf{0.19}}    &   \textcolor{orange}{\textbf{0.09}}   \\
       &    10\% &   0.35   &   4.9   &   0.37    &   1.13  &   0.33    &   1.9    &   \textcolor{orange}{\textbf{0.2}}    &   \textcolor{orange}{\textbf{0.1}}    \\
       &    20\% &   0.62   &   5.11   &   0.37    &   1.13  &   0.4    &   1.72    &   \textcolor{orange}{\textbf{0.2}}    &   \textcolor{orange}{\textbf{0.08}}    \\
       &    30\% &   0.63   &   5.28   &   0.37    &   1.13  &   0.38    &   2.09    &   \textcolor{orange}{\textbf{0.2}}    &   \textcolor{orange}{\textbf{0.06}}    \\
    \midrule
    \multirow{4}{*}{BallCircle}   & 0\% &   \textbf{0.06}   &    \textbf{0.24}   &  \textcolor{orange}{\textbf{0.39}}    &   \textcolor{orange}{\textbf{0.68}}    &   0.68    &   1.2    &   \textbf{0.22} &   \textbf{0.03}    \\
       &    10\% &   0.11   &   3.21   &   \textcolor{orange}{\textbf{0.39}}    &   \textcolor{orange}{\textbf{0.68}}  &   0.69    &   1.36    &   \textbf{0.21}    &   \textbf{0.06}    \\
       &    20\% &   0.11   &   2.22   &   \textcolor{orange}{\textbf{0.39}}    &   \textcolor{orange}{\textbf{0.68}}  &   0.69    &   1.28    &   \textbf{0.17}    &   \textbf{0.12}    \\
       &    30\% &   \textbf{0.1}   &   \textbf{0.73}   &   \textcolor{orange}{\textbf{0.39}}    &   \textcolor{orange}{\textbf{0.68}}  &   0.69    &   1.23    &   \textbf{0.18}    &   \textbf{0.3}    \\
    \midrule
    \multirow{4}{*}{DroneRun}   & 0\% &   \textbf{0.11}   &   \textbf{0.17}   &   0.17    &   5.97    &   0.47    &   3.12    &   \textcolor{orange}{\textbf{0.16}}    &   \textcolor{orange}{\textbf{0.33}}    \\
       &    10\% &   0.08   &   1.52   &   0.17    &   5.97  &   0.42    &   3.29    &   \textcolor{orange}{\textbf{0.13}}    &   \textcolor{orange}{\textbf{0.44}}    \\
       &    20\% &   \textbf{0.05}   &   \textbf{0.98}   &   0.17    &   5.97  &   0.42    &   2.61    &   \textcolor{orange}{\textbf{0.15}}    &   \textcolor{orange}{\textbf{0.42}}    \\
       &    30\% &   0.12   &   1.23   &   0.17    &   5.97  &   0.46    &   3.48    &   \textcolor{orange}{\textbf{0.09}}    &   \textcolor{orange}{\textbf{0.26}}    \\
    \midrule
    \multirow{4}{*}{DroneCircle}   & 0\% &   -0.23   &   1.59   &   0.61    &   1.9  &   \textcolor{orange}{\textbf{0.61}}    &   \textcolor{orange}{\textbf{0.87}}    &   \textbf{0.54}    &   \textbf{0.72}    \\
       &    10\% &   -0.25   &   1.91   &   0.61    &   1.9  &   0.58    &   1.07    &   \textcolor{orange}{\textbf{0.35}}    &   \textcolor{orange}{\textbf{0.68}}    \\
       &    20\% &   -0.26   &   1.06   &   0.61    &   1.9  &   0.61    &   1.19    &   \textcolor{orange}{\textbf{0.23}}    &   \textcolor{orange}{\textbf{0.65}}    \\
       &    30\% &   -0.25   &   1.26   &   0.61    &   1.9  &   0.57    &   1.06    &   \textcolor{orange}{\textbf{0.18}}    &   \textcolor{orange}{\textbf{0.4}}    \\
    \bottomrule
  \end{tabular}
  \label{tab:imperfect_pref}
\end{table*}

\subsection{Threshold and Proportion of Safe/Unsafe Trajectories}
Following the experimental design of DSRL, we use three cost thresholds to generate data (e.g., [10, 20, 40] for BulletGym and [20, 40, 80] for SafetyGym). Safety labels of trajectory segments are assigned based on ground truth costs, with thresholds adjusted proportionally to segment length, ensuring segment-level compliance aligns with trajectory-level constraints. Notably, different cost thresholds yield varying proportions of safe and unsafe trajectories. We present statistics for four tasks in Table~\ref{tab:prop_safe_unsafe}, showing imbalance in the dataset. In Table~\ref{tab:varying_thres}, we demonstrate that \textsc{PreSa} performs effectively and robustly, with $w(y_\sigma)$ in Equation~\ref{tab:ablation_w} specifically designed to mitigate the impact of this imbalance. Additionally, an ablation study on $w(y_\sigma)$ in Table 4 reveals that its removal significantly degrades performance, making it more susceptible to the imbalanced training data.

\begin{table*}
  \centering
  \caption{Proportion of safe and unsafe trajectory segments in the dataset. The number of safe trajectory segments is shown in \textcolor{teal}{teal}, while the number of unsafe trajectory segments is shown in \textcolor{red}{red}.}
  \begin{tabular}{cccc}
    \toprule
    Task   &   cost threshold $= 10$    &   cost threshold $= 20$    &   cost threshold $= 40$    \\
    \midrule
    BallRun   &   \textcolor{teal}{4,045}/\textcolor{red}{15,955}    &   \textcolor{teal}{5,194}/\textcolor{red}{14,806}    &   \textcolor{teal}{8,010}/\textcolor{red}{11,990}    \\
    BallCircle   &   \textcolor{teal}{3,374}/\textcolor{red}{16,626}    &   \textcolor{teal}{4,424}/\textcolor{red}{15,576}    &   \textcolor{teal}{7,934}/\textcolor{red}{12,066}    \\
    DroneRun  &  \textcolor{teal}{11,551}/\textcolor{red}{8,449}    &   \textcolor{teal}{12,038}/\textcolor{red}{7,962}    &   \textcolor{teal}{13,054}/\textcolor{red}{6,946}      \\
    DroneCircle   &   \textcolor{teal}{4,625}/\textcolor{red}{15,375}    &   \textcolor{teal}{4,988}/\textcolor{red}{15,012}    &   \textcolor{teal}{5,739}/\textcolor{red}{14,261}    \\
    \bottomrule
  \end{tabular}
  \label{tab:prop_safe_unsafe}
\end{table*}

\begin{table*}
  \centering
  \caption{The performance of \textsc{PreSa} with varying cost threshold.}
  \begin{tabular}{ccccccccc}
    \toprule
    \multirow{2}[2]{*}{Task} & \multicolumn{2}{c}{cost threshold $= 10$} & \multicolumn{2}{c}{cost threshold $= 20$} & \multicolumn{2}{c}{cost threshold $= 40$} & \multicolumn{2}{c}{\textbf{Average}} \\
    \cmidrule(lr){2-3}
    \cmidrule(lr){4-5}
    \cmidrule(lr){6-7}
    \cmidrule(lr){8-9}
      &   reward$\uparrow$    &   cost$\downarrow$ &   reward$\uparrow$    &   cost$\downarrow$ &   reward$\uparrow$    &   cost$\downarrow$    &   reward$\uparrow$    &   cost$\downarrow$     \\
    \midrule
    BallRun   &   \textbf{0.18}    &   \textbf{0.04}    &   \textbf{0.18}    &   \textbf{0.0}   &   \textbf{0.21}    &   \textbf{0.22}  &   \textbf{0.19}    &   \textbf{0.09}    \\
    BallCircle   &   \textbf{0.14}    &   \textbf{0.0}    &   \textbf{0.24}    &   \textbf{0.02}    &   \textbf{0.29}    &   \textbf{0.08}    &   \textbf{0.22}    &   \textbf{0.03}    \\
    DroneRun  &  \textbf{0.15}    &   \textbf{0.45}    &   \textbf{0.17}    &   \textbf{0.25}   &   \textbf{0.16}    &   \textbf{0.29}    &   \textbf{0.16}    &   \textbf{0.33}      \\
    DroneCircle   &   0.53    &   1.34    &   \textbf{0.54}    &   \textbf{0.55}  &   \textbf{0.54}    &   \textbf{0.29}    &   \textbf{0.54}    &   \textbf{0.72}    \\
    \midrule
    \textbf{Average}  &  \textbf{0.25}    &   \textbf{0.46}    &   \textbf{0.28}    &   \textbf{0.21}  &   \textbf{0.3}    &   \textbf{0.22}   &   \textbf{0.28}    &   \textbf{0.29}      \\
    \bottomrule
  \end{tabular}
  \label{tab:varying_thres}
\end{table*}

\begin{table*}
  \centering
  \caption{The performance of \textsc{PreSa} w/o $w(y_\sigma)$ with varying cost threshold.}
  \begin{tabular}{ccccccccc}
    \toprule
    \multirow{2}[2]{*}{Task} & \multicolumn{2}{c}{cost threshold $= 10$} & \multicolumn{2}{c}{cost threshold $= 20$} & \multicolumn{2}{c}{cost threshold $= 40$} & \multicolumn{2}{c}{\textbf{Average}} \\
    \cmidrule(lr){2-3}
    \cmidrule(lr){4-5}
    \cmidrule(lr){6-7}
    \cmidrule(lr){8-9}
      &   reward$\uparrow$    &   cost$\downarrow$ &   reward$\uparrow$    &   cost$\downarrow$ &   reward$\uparrow$    &   cost$\downarrow$    &   reward$\uparrow$    &   cost$\downarrow$     \\
    \midrule
    BallRun   &   -0.48    &   9.4    &   0.04    &   4.6   &   \textbf{0.12}    &   \textbf{0.0}   &   -0.1    &   4.67    \\
    BallCircle   &   0.0    &   12.8    &   -0.01    &   2.76    &   \textbf{0.06}    &   \textbf{0.9}   &   0.02    &   5.49    \\
    DroneRun  &  0.52    &   6.72    &   0.24    &   3.73   &   0.38    &   2.4  &  0.38    &   4.28      \\
    DroneCircle   &   \textbf{-0.26}    &   \textbf{0.02}    &   \textbf{-0.19}    &   \textbf{0.89}  &   \textbf{-0.26}    &   \textbf{0.38}   &   \textbf{-0.24}    &   \textbf{0.43}    \\
    \midrule
    \textbf{Average}  &  -0.05    &   7.24    &   0.02    &   2.99  &   \textbf{0.07}    &   \textbf{0.92}  &  0.02    &   3.72      \\
    \bottomrule
  \end{tabular}
  \label{tab:ablation_w}
\end{table*}

\subsection{Additional Baseline}
We conducted experiments on an additional baseline, with the results presented in Table~\ref{tab:cost_learning}. 
\textit{Safe-RLHF (CDT) (Cost: binary label only)}: A variant of Safe-RLHF (CDT) in which the cost model is trained using binary labels only.
The results indicate that \textsc{PreSa} outperforms this additional baseline. This performance gap is attributed to the baselines incorporating less information during learning, resulting in weaker performance.

\begin{table*}
  \centering
  \caption{Evaluation results for additional baselines using cost models trained exclusively on binary labels for Safe-RLHF.}
  \begin{tabular}{ccccccc}
    \toprule
    \multirow{2}[2]{*}{Task} & \multicolumn{2}{c}{Safe-RLHF (CDT)} & \multicolumn{2}{c}{Safe-RLHF (CDT) \emph{(Cost: binary label only)}} & \multicolumn{2}{c}{PreSa (Ours)} \\
    \cmidrule(lr){2-3}
    \cmidrule(lr){4-5}
    \cmidrule(lr){6-7}
      &   reward$\uparrow$    &   cost$\downarrow$ &   reward$\uparrow$    &   cost$\downarrow$ &   reward$\uparrow$    &   cost$\downarrow$     \\
    \midrule
    BallRun   &   0.35    &   1.65   &   0.31    &   1.52  &   \textcolor{orange}{\textbf{0.19}}    &   \textcolor{orange}{\textbf{0.09}}   \\
    DroneCircle   &   \textcolor{orange}{\textbf{0.61}}    &   \textcolor{orange}{\textbf{0.87}}    &   0.61    &   1.19    &   \textbf{0.54}    &   \textbf{0.72}    \\
    \bottomrule
  \end{tabular}
  \label{tab:cost_learning}
\end{table*}

\subsection{Evaluation With Filtered Preference Dataset}
To investigate whether rewards matter when safety constraints are violated, we conducted experiments with \textsc{PreSa} using filtered preference data. Specifically, we excluded pairs where both segments were unsafe and pairs where the preferred segment was unsafe. The results, presented in Table~\ref{tab:filtered}, indicate that when using filtered preference data, our approach experiences a significant drop in performance and fails to learn safe behaviors. This demonstrates that reward information is crucial, even for unsafe trajectories.

\begin{table*}
  \centering
  \caption{Evaluation results using a filtered preference dataset, excluding pairs where both segments are unsafe or where the preferred segment is unsafe.}
  \begin{tabular}{ccccc}
    \toprule
    \multirow{2}[2]{*}{Task} & \multicolumn{2}{c}{Preference w/ safety} & \multicolumn{2}{c}{PreSa} \\
    \cmidrule(lr){2-3}
    \cmidrule(lr){4-5}
      &   reward$\uparrow$    &   cost$\downarrow$ &   reward$\uparrow$    &   cost$\downarrow$     \\
    \midrule
    BallRun   &   -0.08    &   5.46    &   \textbf{0.19}    &   \textbf{0.09}    \\
    CarRun   &   \textbf{0.44}    &   \textbf{0.0}    &   \textbf{0.96}    &   \textbf{0.0}    \\
    DroneRun  &  \textbf{0.02}    &   \textbf{0.0}    &   \textbf{0.16}    &   \textbf{0.33}      \\
    AntRun   &   \textbf{0.35}    &   \textbf{0.46}    &   \textbf{0.61}    &   \textbf{0.63}    \\
    BallCircle   &   -0.0    &   5.03    &   \textbf{0.22}    &   \textbf{0.03}    \\
    CarCircle  &  -0.1    &   3.23    &   \textbf{0.08}    &   \textbf{0.91}      \\
    DroneCircle   &   \textbf{-0.26}    &   \textbf{0.03}    &   \textbf{0.54}    &   \textbf{0.72}    \\
    AntCircle  &  \textbf{0.0}    &   \textbf{0.0}    &   0.55    &   3.78      \\
    \midrule
    Average  &  0.05    &   1.77    &   \textbf{0.41}    &   \textbf{0.81}      \\
    \bottomrule
  \end{tabular}
  \label{tab:filtered}
\end{table*}

\begin{table*}[t]
  \centering
  \caption{Performance of \textsc{PreSa} with varying $\alpha$.}
  \begin{tabular}{ccccccccccc}
    \toprule
    \multirow{2}[2]{*}{Hyperparameter $\alpha$} & \multicolumn{2}{c}{$\alpha=0.2$} & \multicolumn{2}{c}{$\alpha=0.4$} & \multicolumn{2}{c}{$\alpha=0.6$} & \multicolumn{2}{c}{$\alpha=0.8$} \\ 
    \cmidrule(lr){2-3}
    \cmidrule(lr){4-5}
    \cmidrule(lr){6-7}
    \cmidrule(lr){8-9}
      &   reward$\uparrow$    &   cost$\downarrow$ &   reward$\uparrow$    &   cost$\downarrow$  &   reward$\uparrow$    &   cost$\downarrow$   &   reward$\uparrow$    &   cost$\downarrow$    \\
    \midrule
    BallRun &   \textbf{0.19}    &   \textbf{0.09}    & \textbf{0.19}    &   \textbf{0.1} &   0.38    &   2.8 &   \textbf{0.19}    &   \textbf{0.12}   \\
    CarRun  &   \textbf{0.96}    &   \textbf{0.0}    &   \textbf{0.95}    &   \textbf{0.0}  &   \textbf{0.95}    &   \textbf{0.0}   &   \textbf{0.95}    &   \textbf{0.0}   \\
    DroneRun    &   \textbf{0.16}    &   \textbf{0.33}    &   0.2    &   2.49   &   0.41    &   2.67    &   0.2 &   2.47    \\
    AntRun  &   \textbf{0.61}    &   \textbf{0.63}    &   0.65    &   1.85  &   0.63    &   2.88    &   0.68    &   1.86   \\
    BallCircle  &   \textbf{0.22}    &   \textbf{0.03}    &   \textbf{0.15}    &   \textbf{0.03}    &   0.3 &   1.06    &   \textbf{0.17}    &   \textbf{0.04}  \\
    CarCircle   &   \textbf{0.08}   &   \textbf{0.91}    &   \textbf{0.2}    &   \textbf{0.87}  &   0.32    &   1.01    &   \textbf{0.23}    &   \textbf{0.99}   \\
    DroneCircle &   \textbf{0.54}    &   \textbf{0.72}    &   \textbf{0.5}    &   \textbf{0.92} &   0.27    &   1.9 &   0.54    &   1.09  \\
    AntCircle   &   0.55 &   3.78    &   0.57    &   3.85   &   0.57    &   4.65    &   0.58    &   4.25    \\
    \midrule
    \textbf{BulletGym Average}   &   \textbf{0.41}    &   \textbf{0.81}    &   0.43    &   1.26    &   0.48    &   2.12 &   0.44    &   1.35    \\
    \bottomrule
  \end{tabular}
  \label{tab:alpha}
\end{table*}

\begin{table*}[t]
  \centering
  \caption{Performance of \textsc{PreSa} with varying $\beta$.}
  \begin{tabular}{ccccccccccc}
    \toprule
    \multirow{2}[2]{*}{Hyperparameter $\beta$} & \multicolumn{2}{c}{$\beta=0.25$} & \multicolumn{2}{c}{$\beta=0.5$} & \multicolumn{2}{c}{$\beta=0.75$} & \multicolumn{2}{c}{$\beta=1.0$} \\ 
    \cmidrule(lr){2-3}
    \cmidrule(lr){4-5}
    \cmidrule(lr){6-7}
    \cmidrule(lr){8-9}
      &   reward$\uparrow$    &   cost$\downarrow$ &   reward$\uparrow$    &   cost$\downarrow$  &   reward$\uparrow$    &   cost$\downarrow$   &   reward$\uparrow$    &   cost$\downarrow$    \\
    \midrule
    BallRun &   \textbf{0.18}    &   \textbf{0.11}    & \textbf{0.18}    &   \textbf{0.09}  &   0.34    &   2.67   &   \textbf{0.19}    &   \textbf{0.09}   \\
    CarRun  &   \textbf{0.95}    &   \textbf{0.0}    &   \textbf{0.95}    &   \textbf{0.0}  &   \textbf{0.95}    &   \textbf{0.0}   &   \textbf{0.96}    &   \textbf{0.0}   \\
    DroneRun    &   0.22    &   2.61    &   0.22    &   2.49    &   0.26    &   2.13    &   \textbf{0.16}   &   \textbf{0.33}    \\
    AntRun  &   \textbf{0.6}    &   \textbf{0.88}    &   0.64    &   1.37  &   0.53    &   2.14    &   \textbf{0.61}    &   \textbf{0.63}   \\
    BallCircle  &   \textbf{0.16}    &   \textbf{0.03}    &   \textbf{0.16}    &   \textbf{0.03}    &   0.3 &   1.09    &   \textbf{0.22}    &   \textbf{0.03}  \\
    CarCircle   &   \textbf{0.22}   &   \textbf{0.83}    &   \textbf{0.2}    &   \textbf{0.79}  &   0.27    &   1.24    &   \textbf{0.08}    &   \textbf{0.91}   \\
    DroneCircle &   0.52    &   1.04    &   \textbf{0.49}    &   \textbf{0.98} &   0.32    &   1.45   &   \textbf{0.54}    &   \textbf{0.72}  \\
    AntCircle   &   0.58 &   4.01    &   0.54    &   4.07   &   0.51    &   4.36    &   0.55    &   3.78    \\
    \midrule
    \textbf{BulletGym Average}   &   0.43    &   1.19    &   0.43    &   1.23    &   0.44    &   1.88 &   \textbf{0.41}    &   \textbf{0.81}    \\
    \bottomrule
  \end{tabular}
  \label{tab:beta}
\end{table*}

\begin{table*}[t]
  \centering
  \caption{Performance of \textsc{PreSa} with varying $\eta$.}
  \begin{tabular}{ccccccccccc}
    \toprule
    \multirow{2}[2]{*}{Hyperparameter $\eta$} & \multicolumn{2}{c}{$\eta=0.1$} & \multicolumn{2}{c}{$\eta=0.5$} & \multicolumn{2}{c}{$\eta=1.0$} & \multicolumn{2}{c}{$\eta=5.0$} \\ 
    \cmidrule(lr){2-3}
    \cmidrule(lr){4-5}
    \cmidrule(lr){6-7}
    \cmidrule(lr){8-9}
      &   reward$\uparrow$    &   cost$\downarrow$ &   reward$\uparrow$    &   cost$\downarrow$  &   reward$\uparrow$    &   cost$\downarrow$   &   reward$\uparrow$    &   cost$\downarrow$    \\
    \midrule
    BallRun &   \textbf{0.19}    &   \textbf{0.09}    & 0.66    &   4.4   &   -0.53    &   4.68   &   -0.17    &   5.46   \\
    CarRun  &   \textbf{0.96}    &   \textbf{0.0}    &   \textbf{0.94}    &   \textbf{0.0}  &   \textbf{0.91}    &   \textbf{0.0}   &   \textbf{0.6}    &   \textbf{0.01}   \\
    DroneRun    &   \textbf{0.16}    &   \textbf{0.33}    &   0.35    &   3.41  &   0.32    &   3.14    &   0.21 &   1.85    \\
    AntRun  &   \textbf{0.61}    &   \textbf{0.63}    &   \textbf{0.6}    &   \textbf{0.7}  &   \textbf{0.57}    &   \textbf{0.71}    &   \textbf{0.5}    &   \textbf{0.65}   \\
    BallCircle  &   \textbf{0.22}    &   \textbf{0.03}    &   \textbf{0.13}    &   \textbf{0.14}    &   0.05    &   3.14    &   0.09    &   5.45  \\
    CarCircle   &   \textbf{0.08}   &   \textbf{0.91}    &   \textbf{0.11}    &   \textbf{0.46}  &   \textbf{0.07}    &   \textbf{0.42}    &   0.01    &   9.2   \\
    DroneCircle &   \textbf{0.54}    &   \textbf{0.72}    &   \textbf{0.38}    &   \textbf{0.6} &   0.25    &   1.12   &   \textbf{-0.26}    &   \textbf{0.04}  \\
    AntCircle   &   0.55 &   3.78    &   0.51    &   3.51   &   0.48    &   6.17    &   \textbf{0.0}    &   \textbf{0.0}    \\
    \midrule
    \textbf{BulletGym Average}   &   \textbf{0.41}    &   \textbf{0.81}    &   0.46    &   1.65    &   0.26    &   2.42 &   0.12    &   2.83    \\
    \bottomrule
  \end{tabular}
  \label{tab:eta}
\end{table*}

\begin{table*}[t]
  \centering
  \caption{Performance of \textsc{PreSa} with varying $\delta$.}
  \begin{tabular}{ccccccccccc}
    \toprule
    \multirow{2}[2]{*}{Hyperparameter $\delta$} & \multicolumn{2}{c}{$\delta=0.55$} & \multicolumn{2}{c}{$\delta=0.65$} & \multicolumn{2}{c}{$\delta=0.75$} & \multicolumn{2}{c}{$\delta=0.85$} & \multicolumn{2}{c}{$\delta=0.95$} \\ 
    \cmidrule(lr){2-3}
    \cmidrule(lr){4-5}
    \cmidrule(lr){6-7}
    \cmidrule(lr){8-9}
    \cmidrule(lr){10-11}
      &   reward$\uparrow$    &   cost$\downarrow$ &   reward$\uparrow$    &   cost$\downarrow$ &   reward$\uparrow$    &   cost$\downarrow$ &   reward$\uparrow$    &   cost$\downarrow$  &   reward$\uparrow$    &   cost$\downarrow$    \\
    \midrule
    BallRun &   0.35    &   1.94    &   \textbf{0.22}    &   \textbf{0.43}    &   \textbf{0.2}    &   \textbf{0.2}    &   \textbf{0.2} &   \textbf{0.17}    &   \textbf{0.19}    &   \textbf{0.09}   \\
    CarRun  &   \textbf{0.95}    &   \textbf{0.0}    &   \textbf{0.95}    &   \textbf{0.0}    &   \textbf{0.95}    &   \textbf{0.0}    &   \textbf{0.95}    &   \textbf{0.0}    &   \textbf{0.96}    &   \textbf{0.0}  \\
    DroneRun    &   0.11    &   3.07    &   0.13 &   2.72 &   0.2    &   2.86    &   0.21    &   2.63    &   \textbf{0.16}    &   \textbf{0.33}    \\
    AntRun  &   0.73    &   3.87    &   0.72    &   3.63    &   0.72    &   3.3    &   0.7    &   3.15    &   \textbf{0.61}    &   \textbf{0.63}  \\
    BallCircle  &   \textbf{0.33}    &   \textbf{0.91}    &   \textbf{0.28}    &   \textbf{0.65}    &   \textbf{0.24}    &   \textbf{0.51}    &   \textbf{0.19}    &   \textbf{0.28}    &   \textbf{0.22}    &   \textbf{0.03}  \\
    CarCircle  &   0.37    &   2.7    &   0.26    &   1.18    &   0.24    &   1.01    &   \textbf{0.23}    &   \textbf{0.87}    &   \textbf{0.08}    &   \textbf{0.91}  \\
    DroneCircle &   0.6    &   2.21    &   0.59    &   1.83    &   0.56    &   1.43    &   0.54    &   1.13    &   \textbf{0.54}    &   \textbf{0.72}  \\
    AntCircle  &   0.65    &   5.62    &   0.63    &   5.56    &   0.64    &   5.71    &   0.63    &   5.16    &   0.55    &   3.78  \\
    \midrule
    \textbf{BulletGym Average}   &  0.51    &   2.54    &   0.47    &   2.0    &  0.47    &   1.88    &   0.46    &   1.68    &   \textbf{0.41}    &   \textbf{0.81}    \\
    \bottomrule
  \end{tabular}
  \label{tab:delta}
\end{table*}

\subsection{Ablation study}

We explore the influence of several key hyperparameters used in \textsc{PreSa}. The results are shown in Table~\ref{tab:alpha}, Table~\ref{tab:beta}, Table~\ref{tab:eta}, and Table~\ref{tab:delta}. These results indicate that while \textsc{PreSa} generally performs well across various hyperparameter settings, further fine-tuning can lead to improved performance.

\begin{table*}
  \centering
  \caption{Ablation study of $z_\text{ref}$.}
  \begin{tabular}{ccccc}
    \toprule
    \multirow{2}[2]{*}{Task} & \multicolumn{2}{c}{w/o $z_\text{ref}$} & \multicolumn{2}{c}{PreSa} \\
    \cmidrule(lr){2-3}
    \cmidrule(lr){4-5}
      &   reward$\uparrow$    &   cost$\downarrow$ &   reward$\uparrow$    &   cost$\downarrow$     \\
    \midrule
    BallRun   &   0.29    &   1.59    &   \textbf{0.19}    &   \textbf{0.09}    \\
    CarRun   &   \textbf{0.95}    &   \textbf{0.0}    &   \textbf{0.96}    &   \textbf{0.0}    \\
    DroneRun  &  0.61    &   2.77    &   \textbf{0.16}    &   \textbf{0.33}      \\
    AntRun   &   0.67    &   2.68    &   \textbf{0.61}    &   \textbf{0.63}    \\
    BallCircle   &   \textbf{0.39}    &   \textbf{0.67}    &   \textbf{0.22}    &   \textbf{0.03}    \\
    CarCircle  &  \textbf{0.46}    &   \textbf{0.96}    &   \textbf{0.08}    &   \textbf{0.91}      \\
    DroneCircle   &   0.32    &   1.52    &   \textbf{0.54}    &   \textbf{0.72}    \\
    AntCircle  &  0.59    &   5.67    &   0.55    &   3.78      \\
    \midrule
    Average  &  0.53    &   1.93    &   \textbf{0.41}    &   \textbf{0.81}      \\
    \bottomrule
  \end{tabular}
  \label{tab:zref}
\end{table*}

We also investigate the effect of the reference point $z_\text{ref}$, which is used to determine the ``gain'' or ``loss'' of a trajectory segment based on the relative difference between the segment's score and the reference point. In our approach, $z_\text{ref}$ is an estimated average score for all segments. To demonstrate its effectiveness, we performed an ablation study on $z_\text{ref}$, as shown in Table~\ref{tab:zref}. The results indicate that without the reference point, \textsc{PreSa}'s performance drops significantly, making it difficult to find safe policies. This occurs because, without the reference point, the utility of a segment is based solely on its absolute score rather than a relative value, which destabilizes the learning process and degrades performance.

\subsection{Learning Curves}

We train \textsc{PreSa} with the parameters in Table~\ref{tab:hyperparameters} for all 29 tasks. The learning curves are shown in Figure~\ref{fig:bulletgym_curve}, Figure~\ref{fig:safetgym_curve1}, Figure~\ref{fig:safetgym_curve2}, and Figure~\ref{fig:safetgym_curve3}. In each figure, the dotted vertical line marks the point where $\pi_\text{ref}$ pretraining stops, while the dotted horizontal line indicates the cost threshold of 1.

\section{Limitations}

Although \textsc{PreSa} demonstrates promising performance, there are several limitations that deserve further discussion and investigation. First, our framework models human feedback using a regret-based approach grounded in the Bradley-Terry assumption. While this modeling choice is common and effective, it may not always be the most appropriate, especially when handling noisy human feedback in real-world scenarios. Relaxing this assumption is an important avenue for future research.

Second, although \textsc{PreSa} enables direct policy learning and bypasses explicit reward and cost modeling, it still involves solving a constrained optimization problem, which can be complex and nontrivial. Exploring methods to simplify this optimization process without sacrificing performance represents a promising direction for future work.

Finally, the effectiveness of Offline Safe POHF methods is highly dependent on the quality and quantity of available data. Model performance can degrade significantly when trajectory data or human feedback are sparse, noisy, or biased. Developing techniques to ensure robust learning under such conditions remains an open and important challenge.



\begin{figure*}
    \centering
    \begin{subfigure}[t]{0.245\textwidth}
        \centering
        \includegraphics[width=\textwidth]{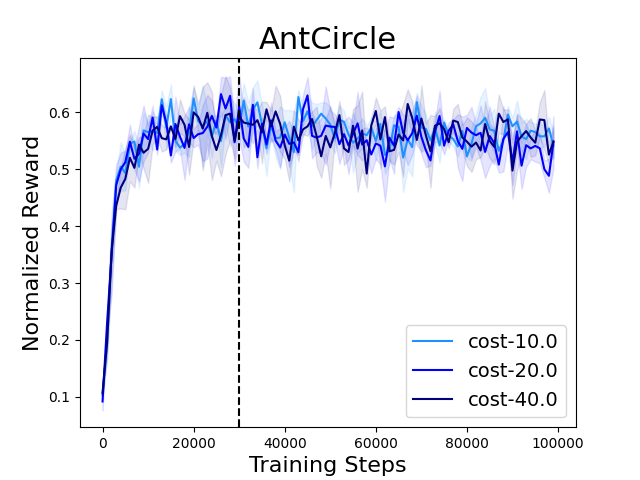}
    \end{subfigure}%
    \begin{subfigure}[t]{0.245\textwidth}
        \centering
        \includegraphics[width=\textwidth]{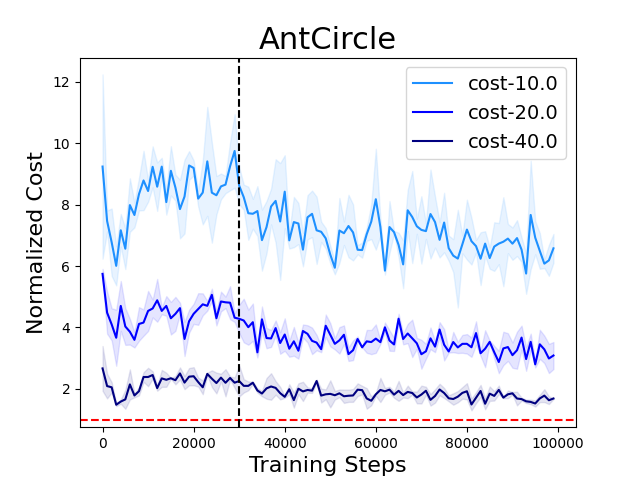}
    \end{subfigure}
    \smallskip
    \begin{subfigure}[t]{0.245\textwidth}
        \centering
        \includegraphics[width=\textwidth]{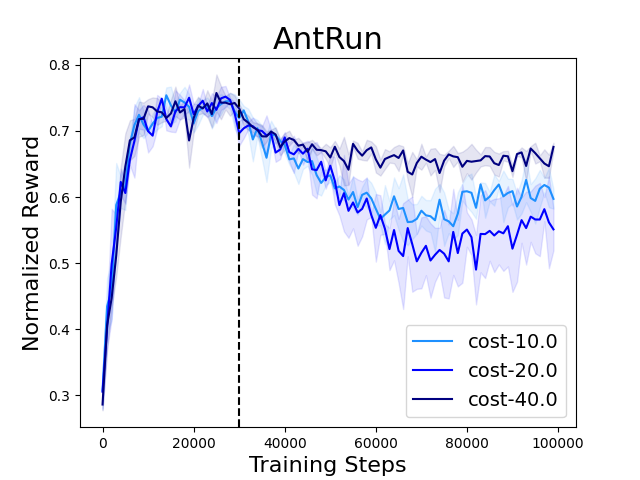}
    \end{subfigure}
    \begin{subfigure}[t]{0.245\textwidth}
        \centering
        \includegraphics[width=\textwidth]{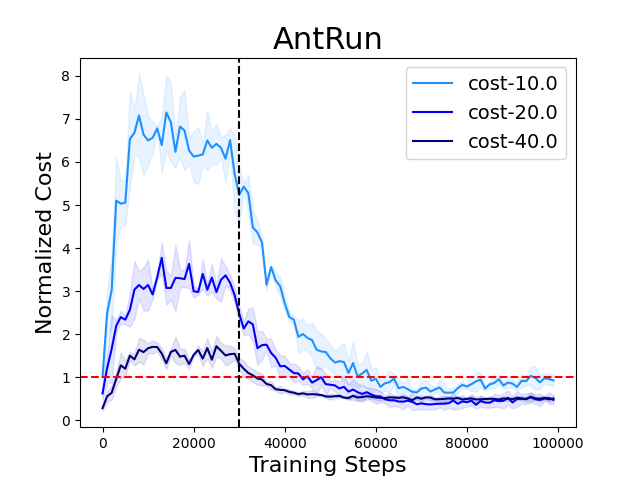}
    \end{subfigure}
    
    \begin{subfigure}[t]{0.245\textwidth}
        \centering
        \includegraphics[width=\textwidth]{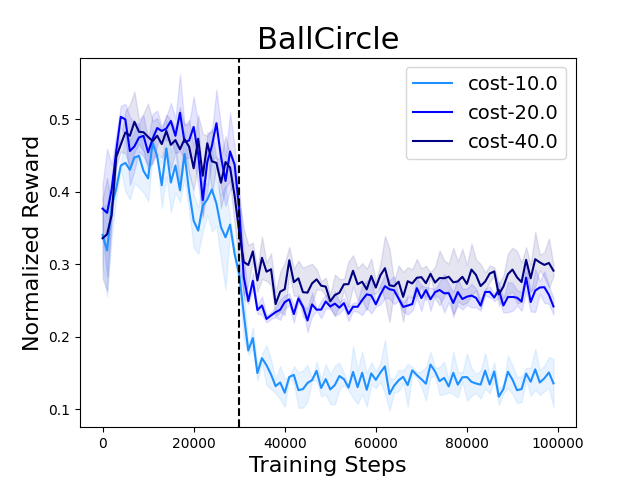}
    \end{subfigure}
    \begin{subfigure}[t]{0.245\textwidth}
        \centering
        \includegraphics[width=\textwidth]{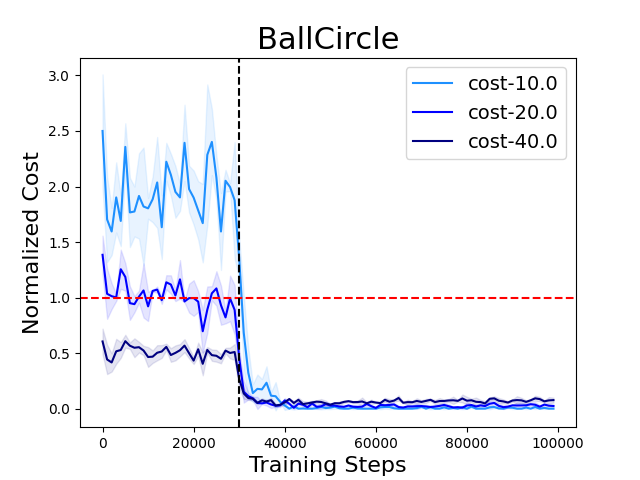}
    \end{subfigure}
    \smallskip
    \begin{subfigure}[t]{0.245\textwidth}
        \centering
        \includegraphics[width=\textwidth]{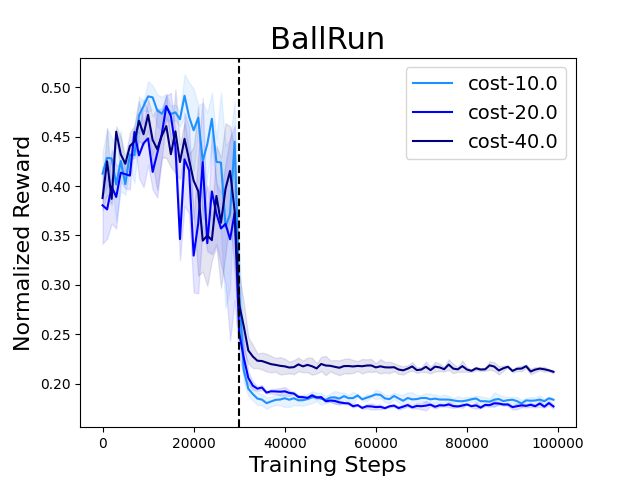}
    \end{subfigure}
    \begin{subfigure}[t]{0.245\textwidth}
        \centering
        \includegraphics[width=\textwidth]{figures/curves/bullet/BallCircle_cost_curve.png}
    \end{subfigure}

    \begin{subfigure}[t]{0.245\textwidth}
        \centering
        \includegraphics[width=\textwidth]{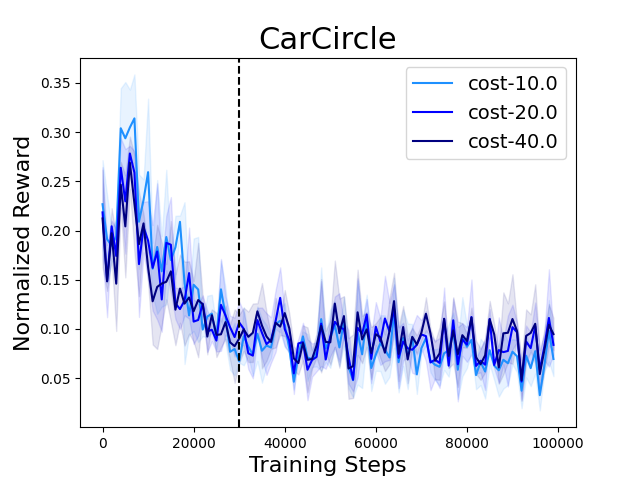}
    \end{subfigure}%
    \begin{subfigure}[t]{0.245\textwidth}
        \centering
        \includegraphics[width=\textwidth]{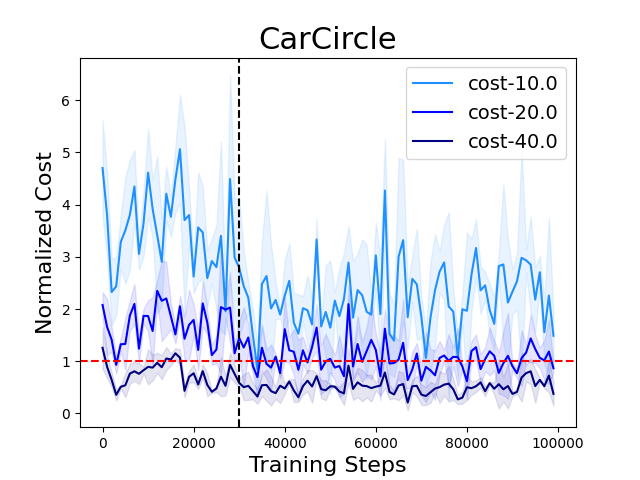}
    \end{subfigure}
    \smallskip
    \begin{subfigure}[t]{0.245\textwidth}
        \centering
        \includegraphics[width=\textwidth]{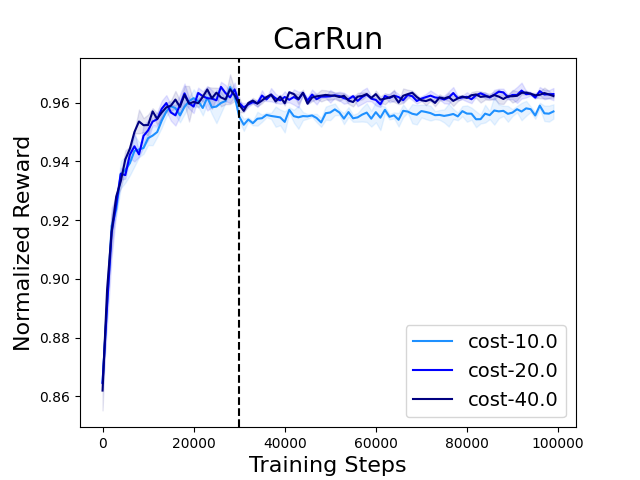}
    \end{subfigure}
    \begin{subfigure}[t]{0.245\textwidth}
        \centering
        \includegraphics[width=\textwidth]{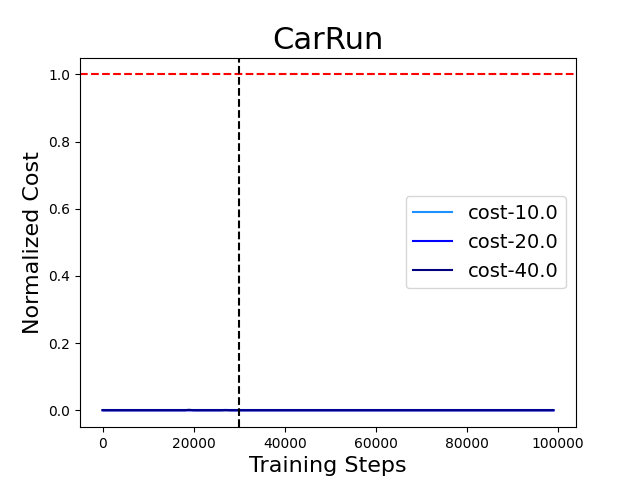}
    \end{subfigure}
    
    \begin{subfigure}[t]{0.245\textwidth}
        \centering
        \includegraphics[width=\textwidth]{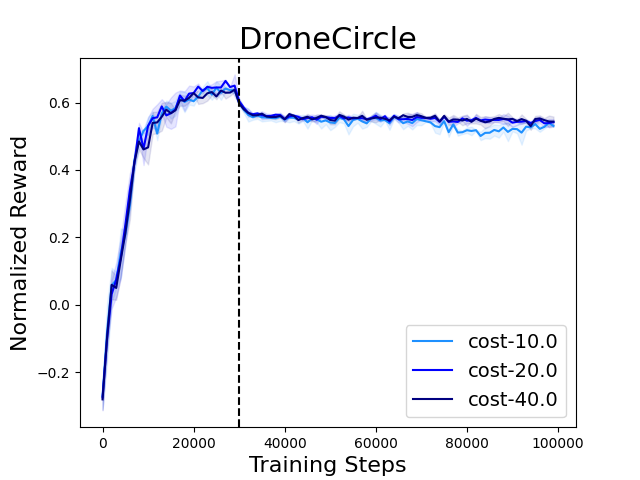}
    \end{subfigure}
    \begin{subfigure}[t]{0.245\textwidth}
        \centering
        \includegraphics[width=\textwidth]{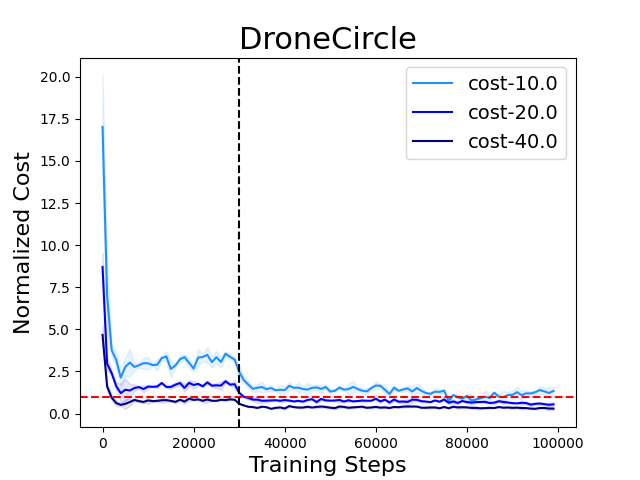}
    \end{subfigure}
    \smallskip
    \begin{subfigure}[t]{0.245\textwidth}
        \centering
        \includegraphics[width=\textwidth]{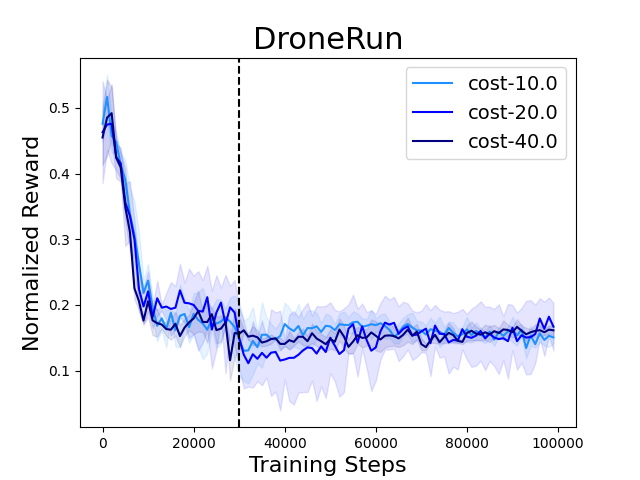}
    \end{subfigure}
    \begin{subfigure}[t]{0.245\textwidth}
        \centering
        \includegraphics[width=\textwidth]{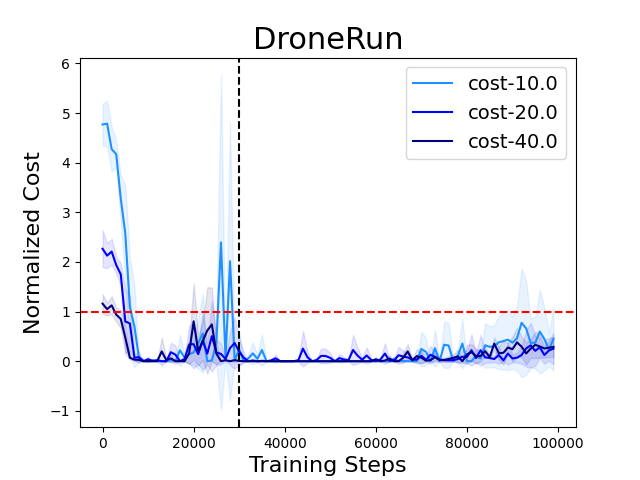}
    \end{subfigure}
    \caption{Training curves for the 8 tasks in BulletGym.}
    \label{fig:bulletgym_curve}
\end{figure*}

\begin{figure*}
    \centering
    \begin{subfigure}[t]{0.245\textwidth}
        \centering
        \includegraphics[width=\textwidth]{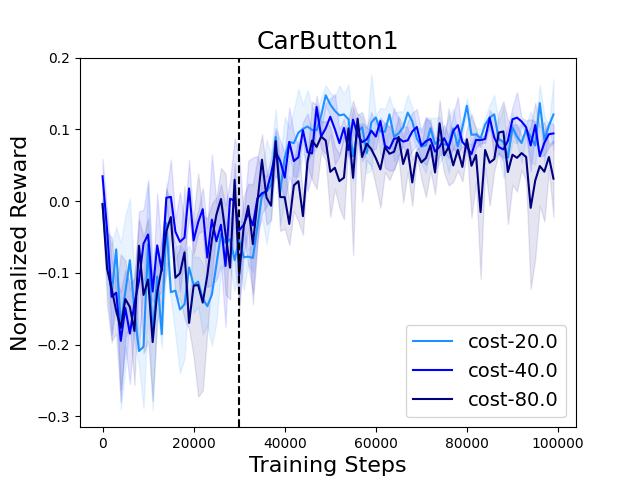}
    \end{subfigure}%
    \begin{subfigure}[t]{0.245\textwidth}
        \centering
        \includegraphics[width=\textwidth]{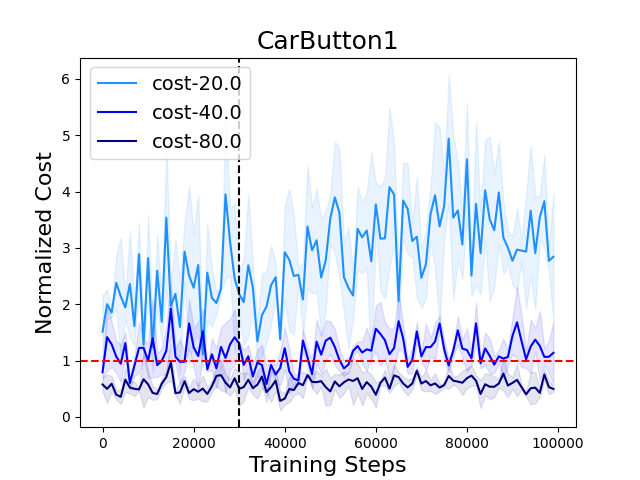}
    \end{subfigure}
    \smallskip
    \begin{subfigure}[t]{0.245\textwidth}
        \centering
        \includegraphics[width=\textwidth]{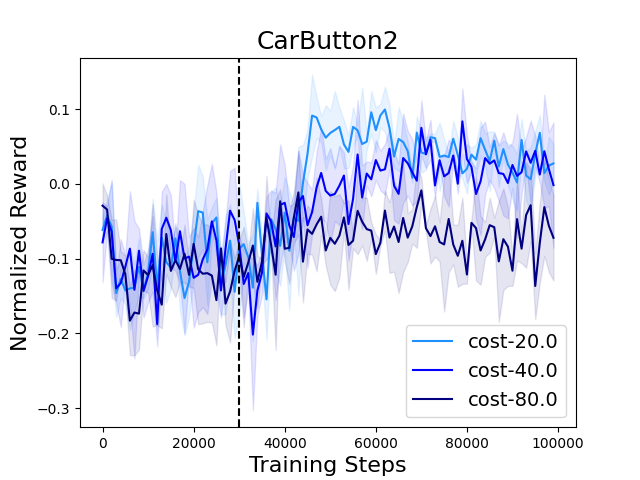}
    \end{subfigure}
    \begin{subfigure}[t]{0.245\textwidth}
        \centering
        \includegraphics[width=\textwidth]{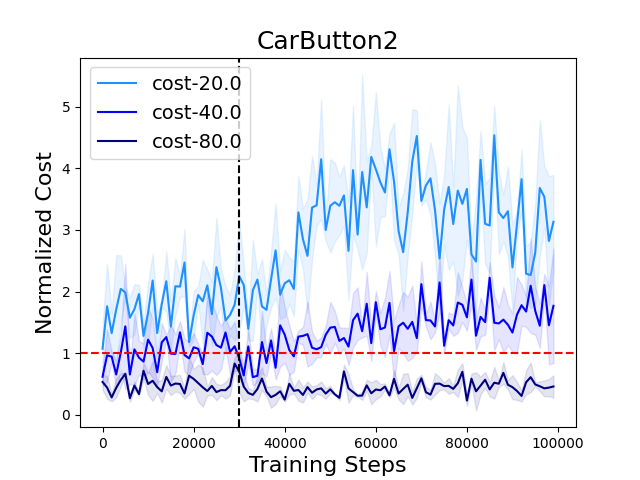}
    \end{subfigure}
    
    \begin{subfigure}[t]{0.245\textwidth}
        \centering
        \includegraphics[width=\textwidth]{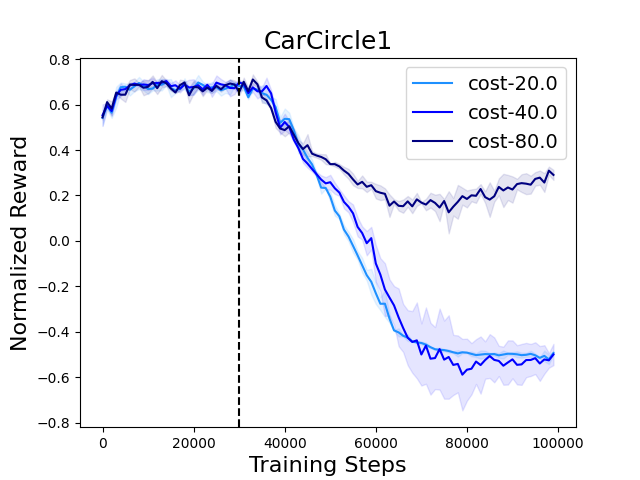}
    \end{subfigure}
    \begin{subfigure}[t]{0.245\textwidth}
        \centering
        \includegraphics[width=\textwidth]{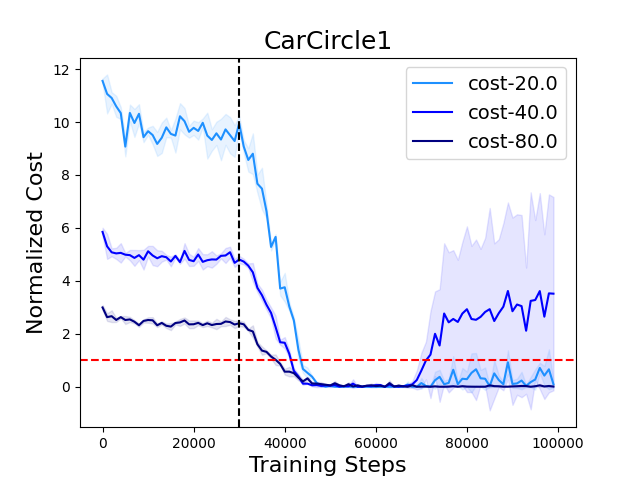}
    \end{subfigure}
    \smallskip
    \begin{subfigure}[t]{0.245\textwidth}
        \centering
        \includegraphics[width=\textwidth]{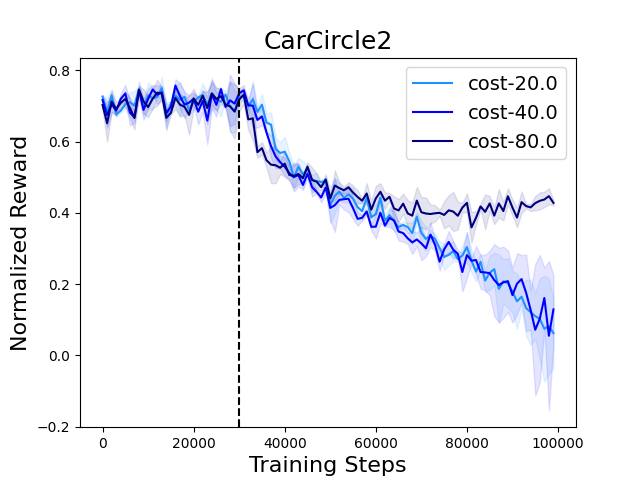}
    \end{subfigure}
    \begin{subfigure}[t]{0.245\textwidth}
        \centering
        \includegraphics[width=\textwidth]{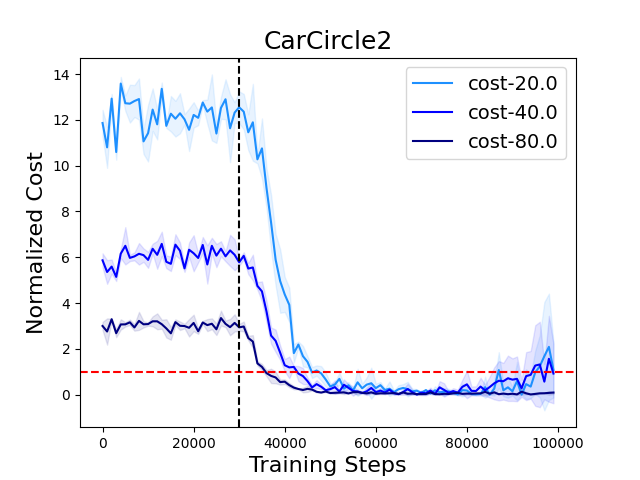}
    \end{subfigure}

    \begin{subfigure}[t]{0.245\textwidth}
        \centering
        \includegraphics[width=\textwidth]{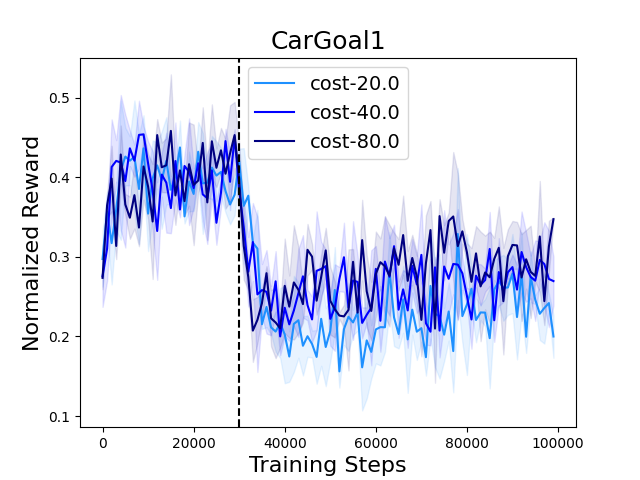}
    \end{subfigure}%
    \begin{subfigure}[t]{0.245\textwidth}
        \centering
        \includegraphics[width=\textwidth]{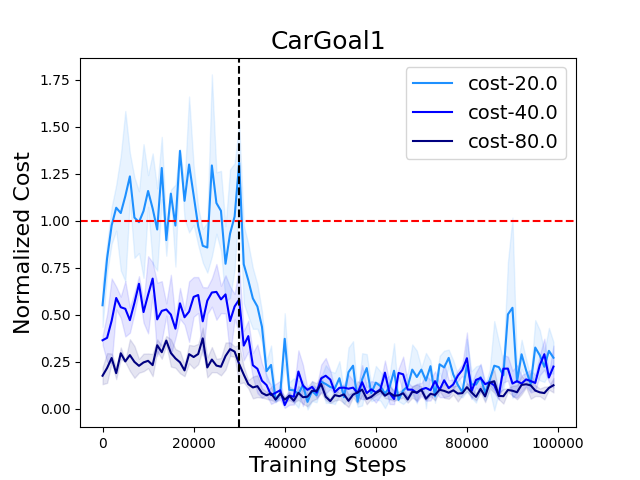}
    \end{subfigure}
    \smallskip
    \begin{subfigure}[t]{0.245\textwidth}
        \centering
        \includegraphics[width=\textwidth]{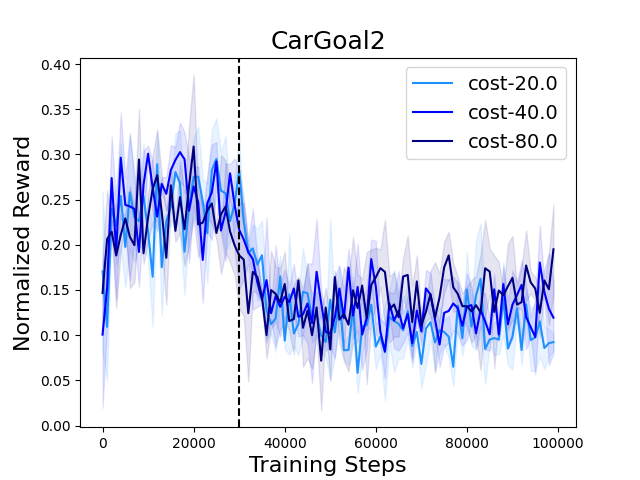}
    \end{subfigure}
    \begin{subfigure}[t]{0.245\textwidth}
        \centering
        \includegraphics[width=\textwidth]{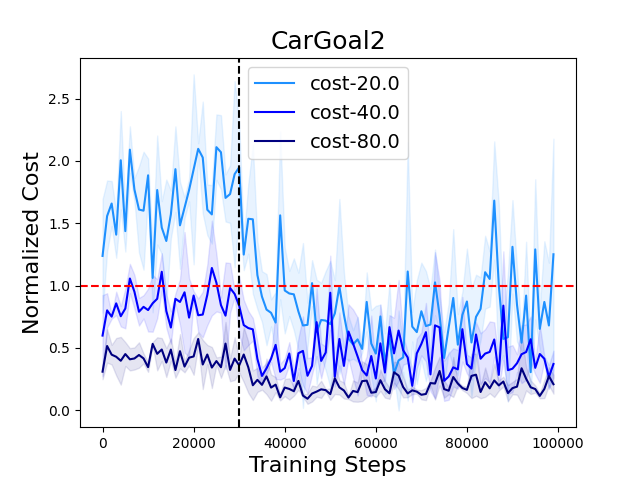}
    \end{subfigure}
    
    \begin{subfigure}[t]{0.245\textwidth}
        \centering
        \includegraphics[width=\textwidth]{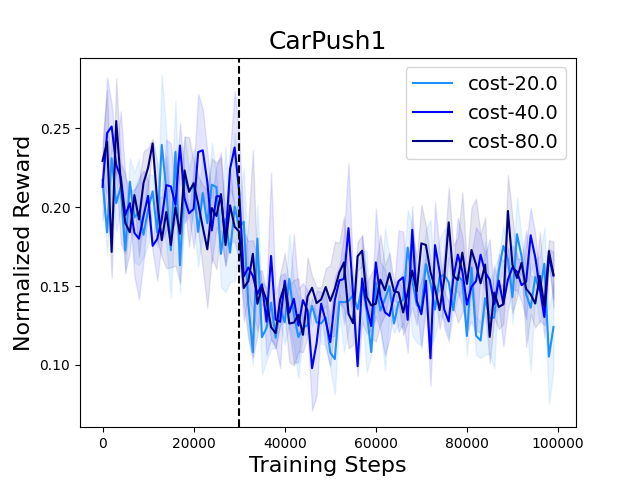}
    \end{subfigure}
    \begin{subfigure}[t]{0.245\textwidth}
        \centering
        \includegraphics[width=\textwidth]{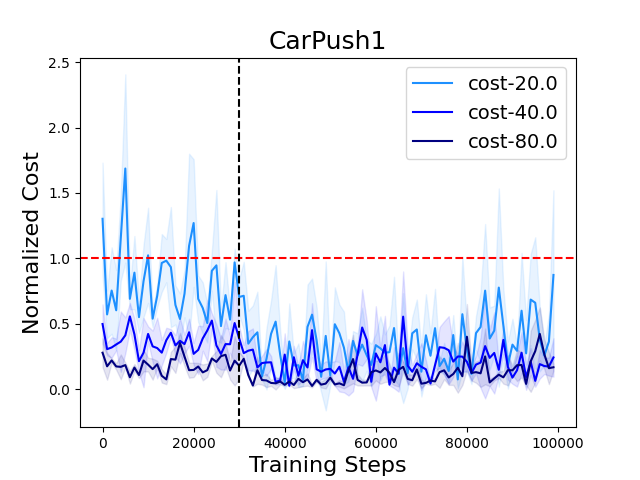}
    \end{subfigure}
    \smallskip
    \begin{subfigure}[t]{0.245\textwidth}
        \centering
        \includegraphics[width=\textwidth]{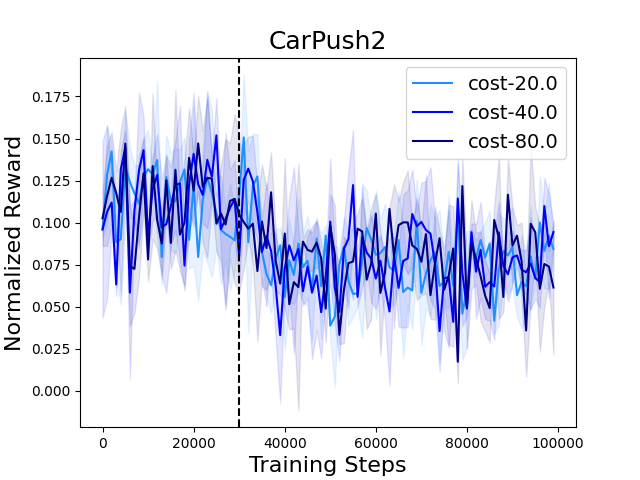}
    \end{subfigure}
    \begin{subfigure}[t]{0.245\textwidth}
        \centering
        \includegraphics[width=\textwidth]{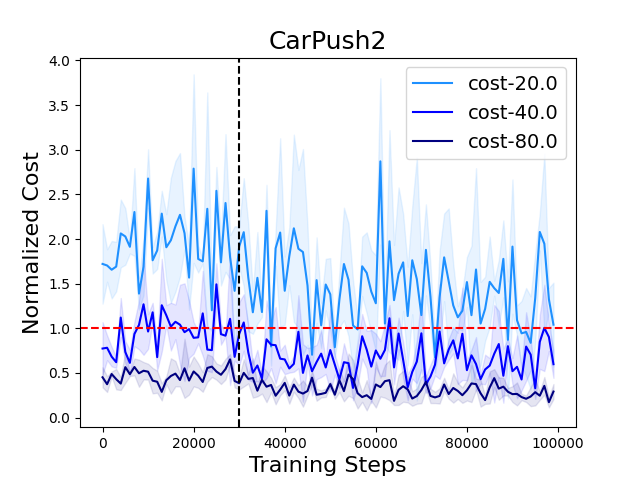}
    \end{subfigure}
    \caption{Learning curves for the 8 Car tasks in SafetyGym.}
    \label{fig:safetgym_curve1}
\end{figure*}

\begin{figure*}
    \centering
    \begin{subfigure}[t]{0.245\textwidth}
        \centering
        \includegraphics[width=\textwidth]{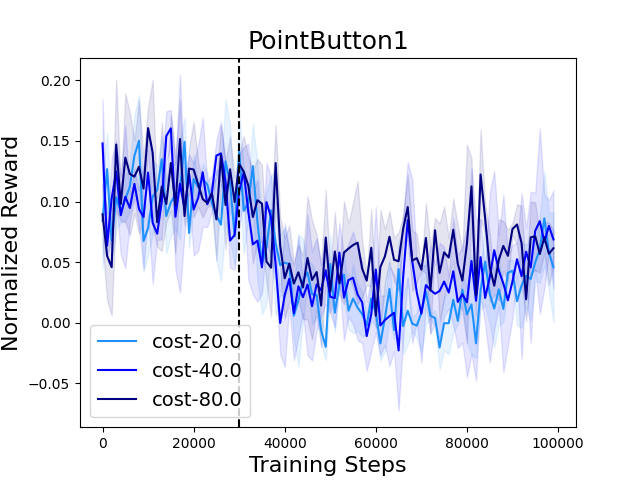}
    \end{subfigure}%
    \begin{subfigure}[t]{0.245\textwidth}
        \centering
        \includegraphics[width=\textwidth]{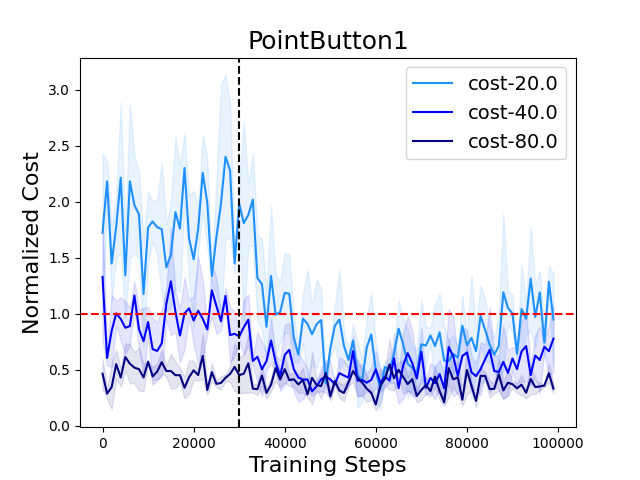}
    \end{subfigure}
    \smallskip
    \begin{subfigure}[t]{0.245\textwidth}
        \centering
        \includegraphics[width=\textwidth]{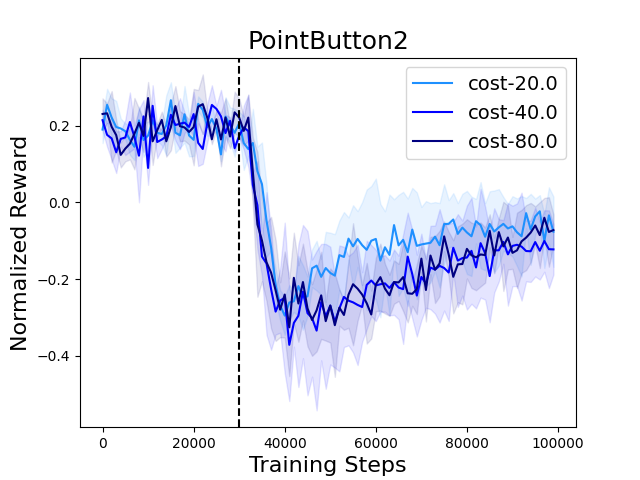}
    \end{subfigure}
    \begin{subfigure}[t]{0.245\textwidth}
        \centering
        \includegraphics[width=\textwidth]{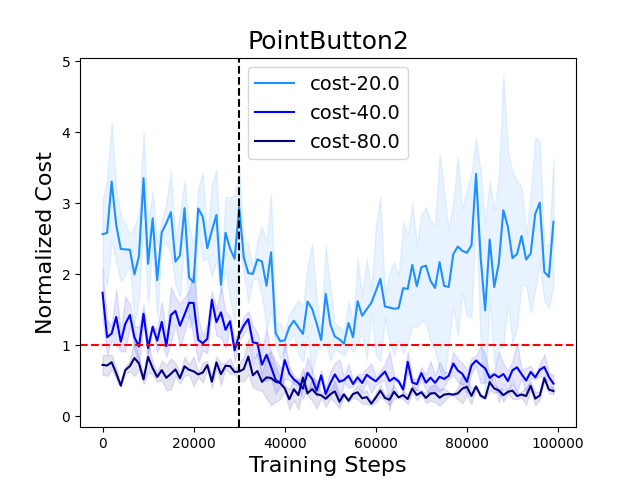}
    \end{subfigure}
    
    \begin{subfigure}[t]{0.245\textwidth}
        \centering
        \includegraphics[width=\textwidth]{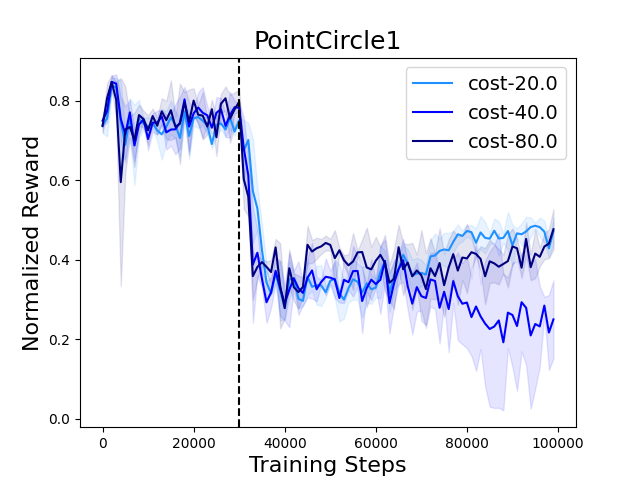}
    \end{subfigure}
    \begin{subfigure}[t]{0.245\textwidth}
        \centering
        \includegraphics[width=\textwidth]{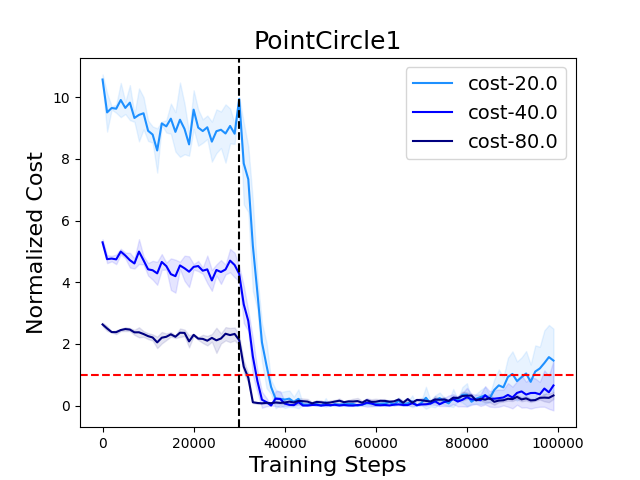}
    \end{subfigure}
    \smallskip
    \begin{subfigure}[t]{0.245\textwidth}
        \centering
        \includegraphics[width=\textwidth]{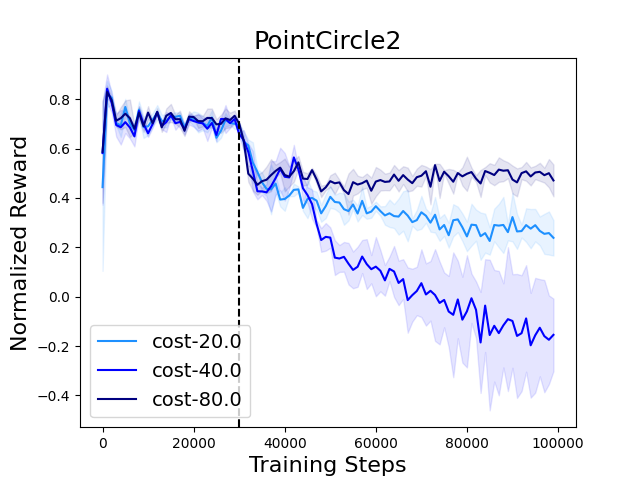}
    \end{subfigure}
    \begin{subfigure}[t]{0.245\textwidth}
        \centering
        \includegraphics[width=\textwidth]{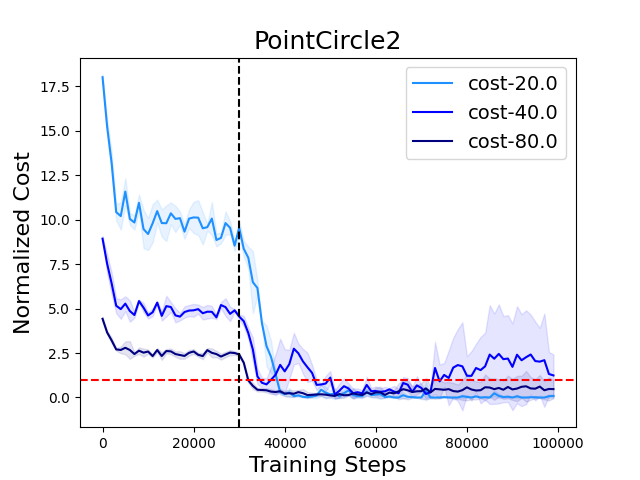}
    \end{subfigure}

    \begin{subfigure}[t]{0.245\textwidth}
        \centering
        \includegraphics[width=\textwidth]{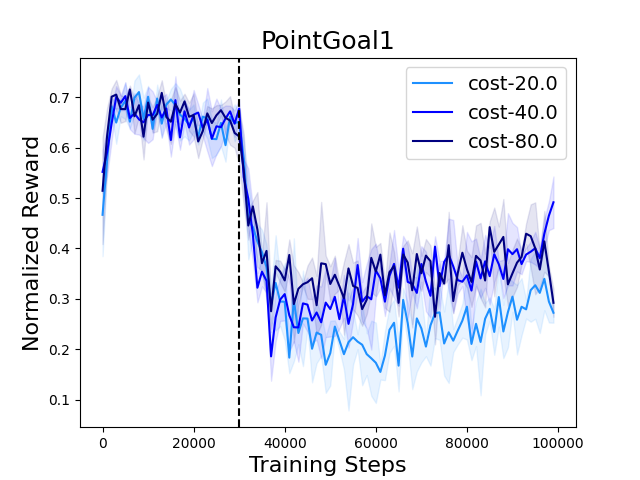}
    \end{subfigure}%
    \begin{subfigure}[t]{0.245\textwidth}
        \centering
        \includegraphics[width=\textwidth]{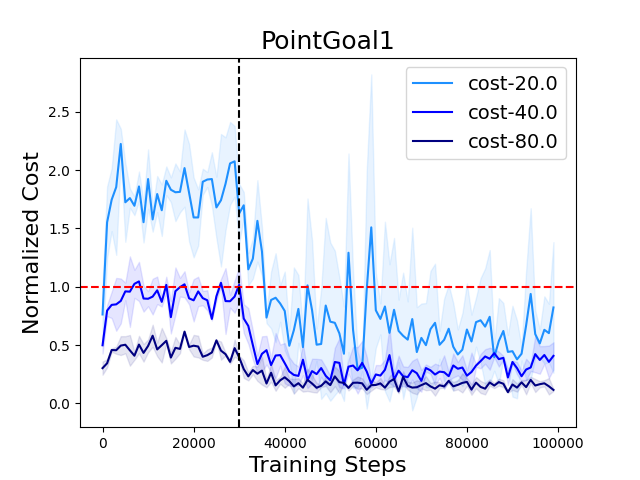}
    \end{subfigure}
    \smallskip
    \begin{subfigure}[t]{0.245\textwidth}
        \centering
        \includegraphics[width=\textwidth]{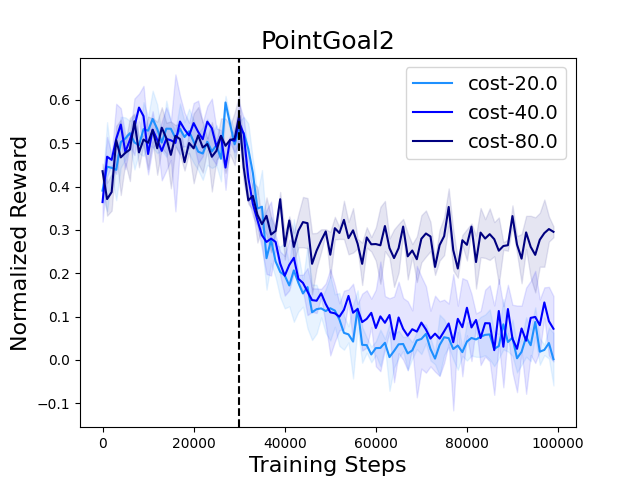}
    \end{subfigure}
    \begin{subfigure}[t]{0.245\textwidth}
        \centering
        \includegraphics[width=\textwidth]{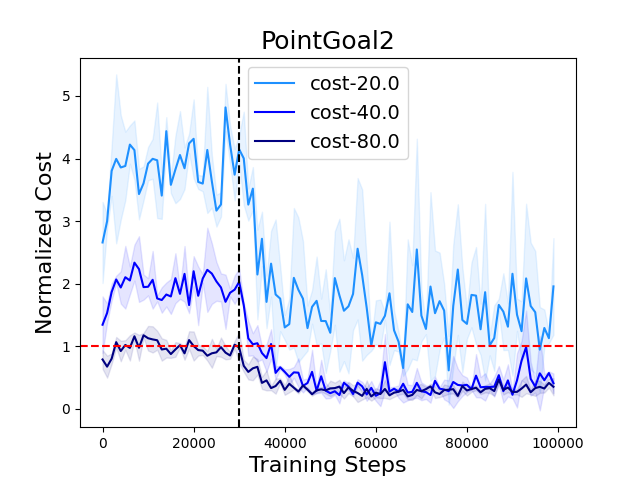}
    \end{subfigure}
    
    \begin{subfigure}[t]{0.245\textwidth}
        \centering
        \includegraphics[width=\textwidth]{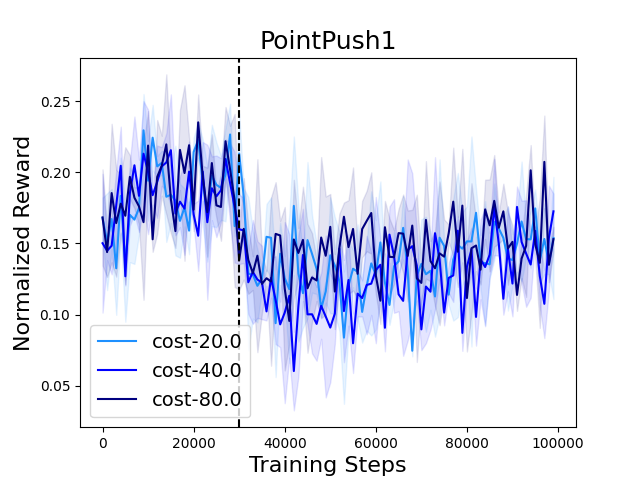}
    \end{subfigure}
    \begin{subfigure}[t]{0.245\textwidth}
        \centering
        \includegraphics[width=\textwidth]{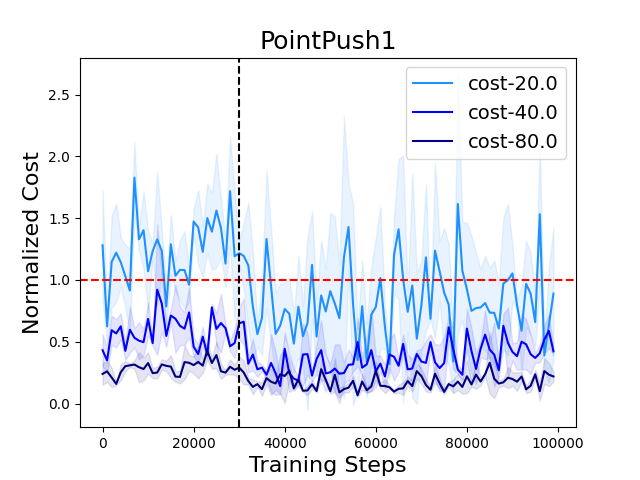}
    \end{subfigure}
    \smallskip
    \begin{subfigure}[t]{0.245\textwidth}
        \centering
        \includegraphics[width=\textwidth]{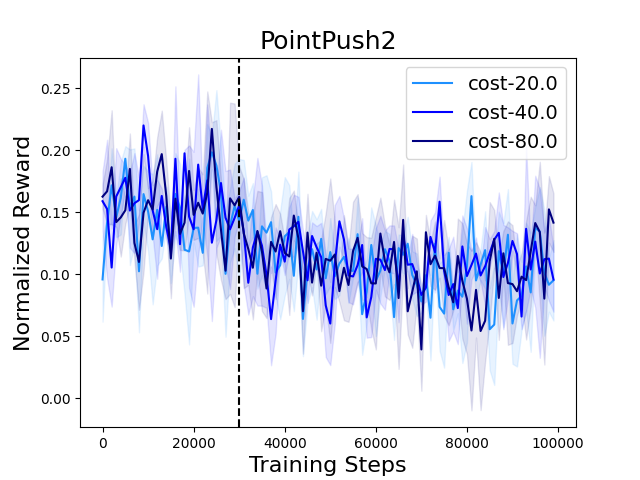}
    \end{subfigure}
    \begin{subfigure}[t]{0.245\textwidth}
        \centering
        \includegraphics[width=\textwidth]{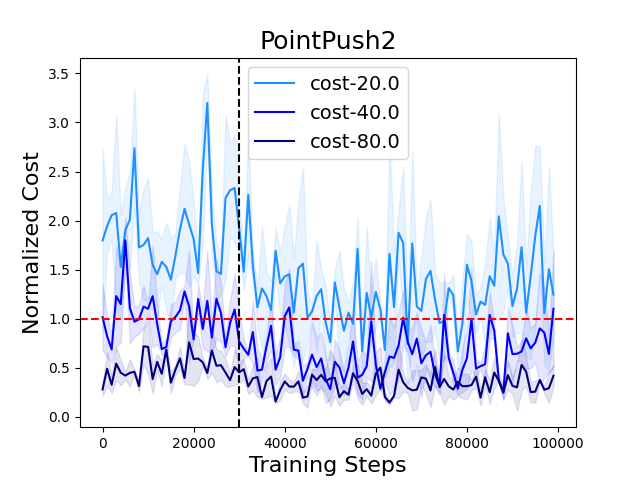}
    \end{subfigure}
    \caption{Learning curves for the 8 Point tasks in SafetyGym.}
    \label{fig:safetgym_curve2}
\end{figure*}

\begin{figure*}
    \centering
    \begin{subfigure}[t]{0.245\textwidth}
        \centering
        \includegraphics[width=\textwidth]{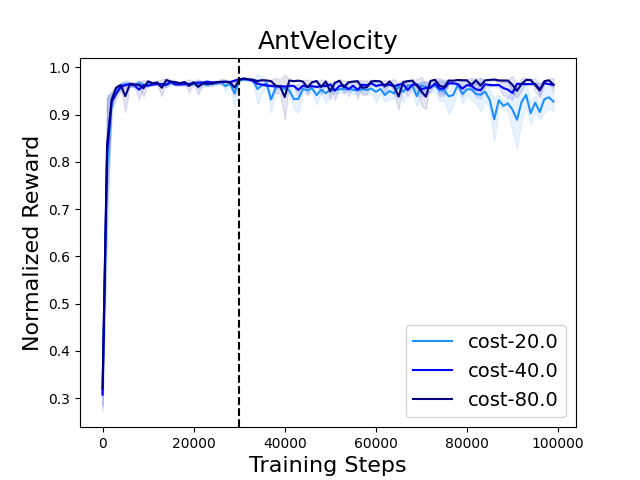}
    \end{subfigure}%
    \begin{subfigure}[t]{0.245\textwidth}
        \centering
        \includegraphics[width=\textwidth]{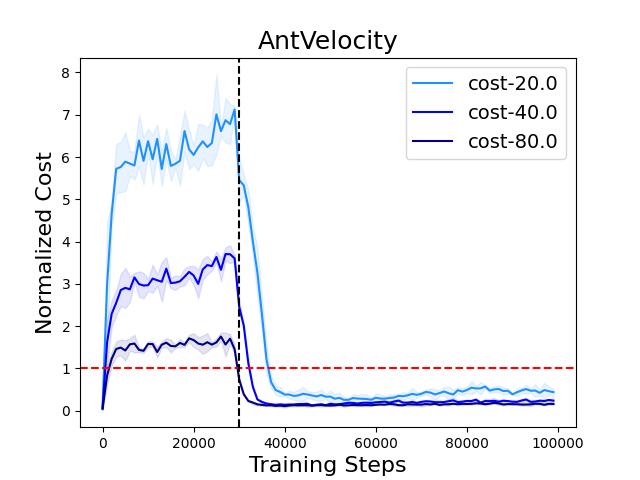}
    \end{subfigure}
    \smallskip
    \begin{subfigure}[t]{0.245\textwidth}
        \centering
        \includegraphics[width=\textwidth]{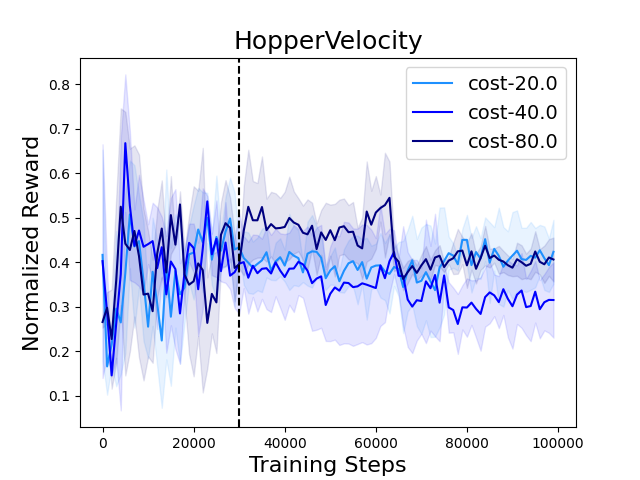}
    \end{subfigure}
    \begin{subfigure}[t]{0.245\textwidth}
        \centering
        \includegraphics[width=\textwidth]{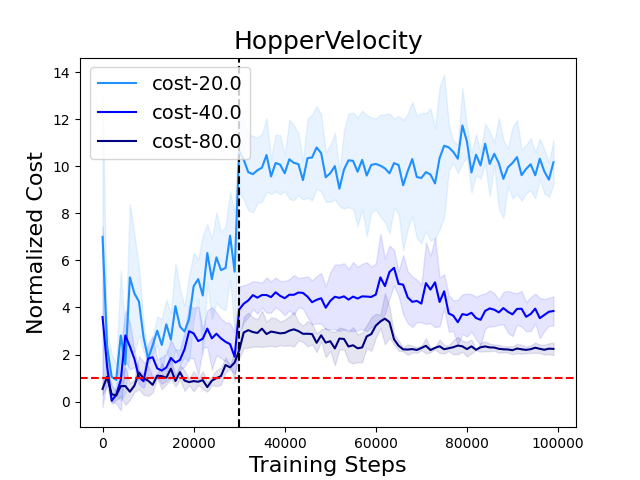}
    \end{subfigure}
    
    \begin{subfigure}[t]{0.245\textwidth}
        \centering
        \includegraphics[width=\textwidth]{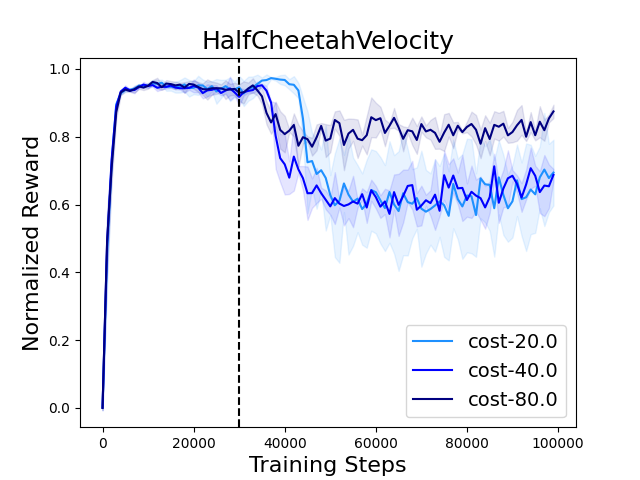}
    \end{subfigure}
    \begin{subfigure}[t]{0.245\textwidth}
        \centering
        \includegraphics[width=\textwidth]{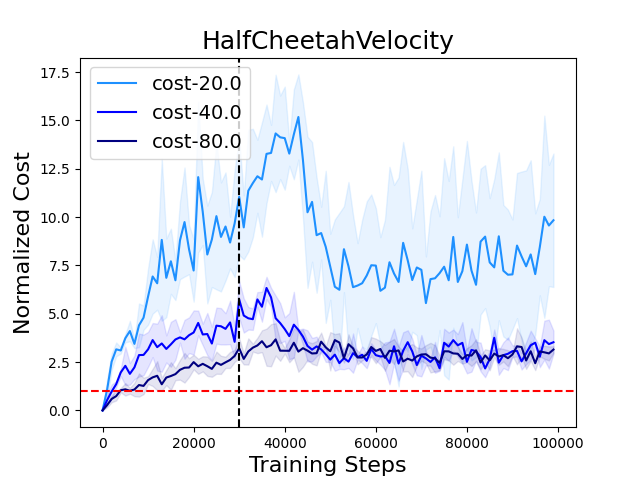}
    \end{subfigure}
    \smallskip
    \begin{subfigure}[t]{0.245\textwidth}
        \centering
        \includegraphics[width=\textwidth]{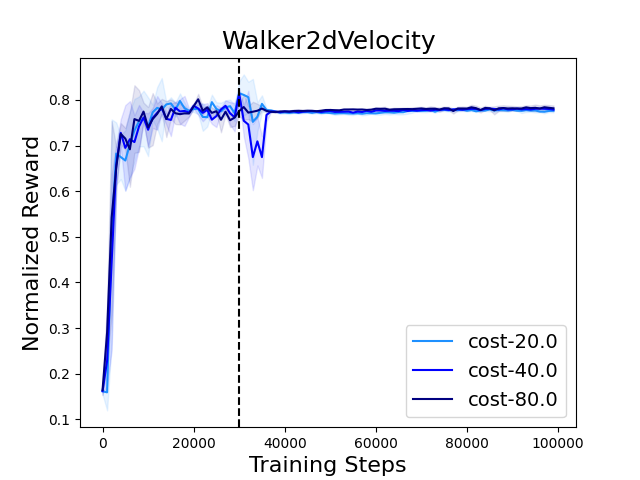}
    \end{subfigure}
    \begin{subfigure}[t]{0.245\textwidth}
        \centering
        \includegraphics[width=\textwidth]{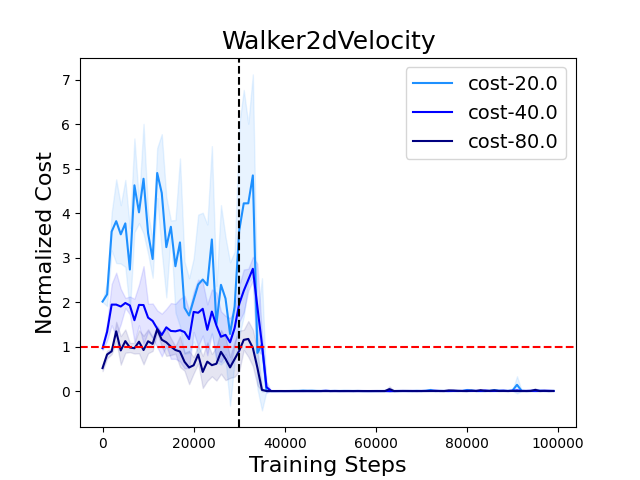}
    \end{subfigure}

    \begin{subfigure}[t]{0.245\textwidth}
        \centering
        \includegraphics[width=\textwidth]{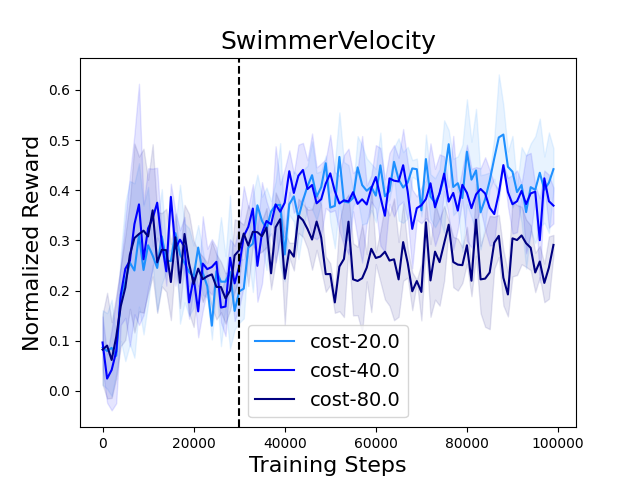}
    \end{subfigure}%
    \begin{subfigure}[t]{0.245\textwidth}
        \centering
        \includegraphics[width=\textwidth]{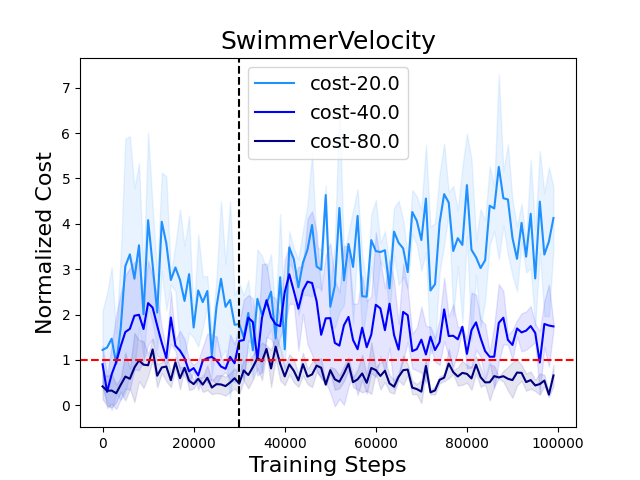}
    \end{subfigure}
    \caption{Learning curves for the 5 velocity constraint tasks in SafetyGym.}
    \label{fig:safetgym_curve3}
\end{figure*}







\end{document}